\title[Full Swap Regret and Discretized Calibration]{Full Swap Regret and Discretized Calibration}
\newtheorem{question}{Question}
\newcommand{\p}[1]{\left(#1\right)}
\newcommand{\ps}[1]{\left[#1\right]}
\renewcommand{\set}[1]{\left\{#1\right\}}
\newcommand{\mg}[1]{\left|#1\right|}
\newcommand{\norm}[1]{\left\|#1\right\|}
\newcommand{\inp}[1]{\left \langle #1 \right \rangle}
\newcommand{\vect}[1]{\mathbf{#1}}
\newcommand{\mat}[1]{\mathcal{#1}}
\newcommand{\lt}{\ell_t}
\newcommand{\etat}{\eta_t}
\newcommand{\xt}{\vect{x}_t}
\newcommand{\bp}{\vect{p}}
\newcommand{\bq}{\vect{q}}
\newcommand{\x}{\vect{x}}
\newcommand{\y}{\vect{y}}
\newcommand{\s}{s}
\newcommand{\q}{q}
\newcommand{\Q}{\mat{Q}}
\newcommand{\z}{\vect{z}}
\newcommand{\EE}{\mathbb{E}}
\newcommand{\E}{\mathbb{E}}
\newcommand{\RR}{\mathbb{R}}
\newcommand{\R}{\mathbb{R}}
\newcommand{\Z}{\mathbb{Z}}
\newcommand{\1}{\mathbbm{1}}
\newcommand{\ER}{\mathsf{ExtReg}}
\newcommand{\SR}{\mathsf{SwapReg}}
\newcommand{\FSR}{\mathsf{FullSwapReg}}
\newcommand{\LSR}{\mathsf{LinearSwapReg}}
\newcommand{\cA}{\mathcal{A}}
\newcommand{\cK}{\mathcal{K}}
\newcommand{\cL}{\mathcal{L}}
\newcommand{\cP}{\mathcal{P}}
\newcommand{\eps}{\epsilon}
\DeclareMathOperator{\diam}{diam}
\DeclareMathOperator{\dist}{dist}
\DeclareMathOperator{\graph}{graph}
\DeclareMathOperator{\poly}{poly}
\DeclareMathOperator{\Var}{Var}
\newcommand{\Ke}{K^{\epsilon}}
\newcommand{\Kex}{\Ke(\x)}
\newcommand{\He}{H}
\newcommand{\Hex}{\He(\x)}
\newcommand{\del}{\dot{\ell}}
\DeclareMathOperator{\Reg}{\mathsf{ExtReg}}
\DeclareMathOperator{\SwapReg}{\mathsf{SwapReg}}
\DeclareMathOperator{\Cal}{\mathsf{Cal}}
\newcommand{\conv}{\mathrm{conv}}
\newcommand{\Alg}{\mathtt{Alg}}
\begin{document}

\maketitle

\begin{abstract}
We study the problem of minimizing swap regret in structured normal-form games. Players have a very large (potentially infinite) number of pure actions, but each action has an embedding into $d$-dimensional space and payoffs are given by bilinear functions of these embeddings. We provide an efficient learning algorithm for this setting that incurs at most $\tilde{O}(T^{(d+1)/(d+3)})$ swap regret after $T$ rounds.

To achieve this, we introduce a new online learning problem we call \emph{full swap regret minimization}. In this problem, a learner repeatedly takes a (randomized) action in a bounded convex $d$-dimensional action set $\mathcal{K}$ and then receives a loss from the adversary, with the goal of minimizing their regret with respect to the \emph{worst-case} swap function mapping $\mathcal{K}$ to $\mathcal{K}$. For varied assumptions about the convexity and smoothness of the loss functions, we design algorithms with full swap regret bounds ranging from $O(T^{d/(d+2)})$ to $O(T^{(d+1)/(d+2)})$.

Finally, we apply these tools to the problem of online forecasting to minimize calibration error, showing that several notions of calibration can be viewed as specific instances of full swap regret. In particular, we design efficient algorithms for online forecasting that guarantee at most $O(T^{1/3})$ $\ell_2$-calibration error and $O(\max(\sqrt{\epsilon T}, T^{1/3}))$ \emph{discretized-calibration} error (when the forecaster is restricted to predicting multiples of $\epsilon$).
\end{abstract}

\begin{keywords}
  Swap Regret, Online Learning, Calibration
\end{keywords}

\section{Introduction}

Regret minimization is a fundamental paradigm in the theory of online learning with many applications to game theory. Perhaps most notably, it has long been known that if two players choose their actions in a repeated zero-sum game by running a learning algorithm that minimizes their \emph{external regret}, they will over time converge to the unique Nash equilibrium (the ``minimax equilibrium'') of the game. This simple observation is at the core of multiple recent successes at developing superhuman-level AIs at games like Go and Poker \citep{silver2017mastering, brown2018superhuman}, as well as being the main technical tool behind innovations such as boosting and GANs \citep{goodfellow2020generative, freund1996game}. 

However, many important games in practice (such as auctions, markets, and bargaining) are not zero-sum but instead \emph{general-sum}; depending on the outcome, both players might be better or worse off, and there is some incentive for the players to cooperate. In such games, the theoretical guarantees of external regret minimization are markedly weaker; in particular, in general-sum games, the play of external regret minimizing algorithms only converges to the class of coarse correlated equilibria (a fairly crude relaxation of Nash equilibria), and these algorithms are also susceptible to manipulation by a strategic agent \citep{braverman2018selling, deng2019strategizing}. 

For this reason, in general-sum games, it is common to consider a more stringent benchmark known as \emph{swap regret}. While external regret measures how much an agent could have improved their utility by switching to a single fixed action, swap regret measures how much an agent could have improved their utility by applying an arbitrary ``swap function'' to their sequence of past actions. Minimizing swap regret guarantees that agents converge to the sharper class of correlated equilibria, and learning algorithms that minimize swap regret are robust to the previously mentioned manipulations \citep{blum2007external, deng2019strategizing}. 

Unfortunately, swap regret is a much harder quantity to minimize than external regret. In general, a player with $n$ actions available to them each round can guarantee a worst-case external regret bound of $O(\sqrt{T\log n})$ after $T$ rounds, but only a worst-case swap regret bound of $\tilde{O}(\sqrt{nT})$. In many real-life games, the number of actions $n$ can be extremely large (e.g. all pure strategies in an extensive-form game such as poker), and so this polynomial dependence on $n$ is troubling. In recent work, \cite{dagan2023external} and \cite{peng2023fast} give alternate low swap regret algorithms with a much better dependence on the number of actions, but at the cost of an exponential dependence on the amount of swap regret: to achieve $\eps T$ swap regret, these algorithms require $\exp(\Omega(1/\eps))$ rounds. Furthermore, both \cite{dagan2023external} and \cite{peng2023fast} prove lower bounds showing that these rates are somewhat necessary: in the absence of any structure on the actions, any no-swap regret algorithm must incur regret that is either polynomial in the number of actions or run for a number of rounds that is exponential in the average regret per round.

\subsection{Structured, low-dimensional games}

In this paper, we consider the problem of minimizing swap regret in games where the strategy sets of the players have a low-dimensional structure, allowing us to sidestep the aforementioned lower bounds. Formally, we consider games between two players, who we will call the \emph{Learner} (who has $n$ actions) and the \emph{Adversary} (who has $n'$ actions). Each of these actions has a corresponding ``embedding'' in a lower-dimensional Euclidean space: in particular, the Learner's $n$ actions correspond to $d$-dimensional vectors $v_1, v_2, \dots, v_n \in \mathbb{R}^d$, and the Adversary's $n'$ actions correspond to the $d$-dimensional vectors $w_1, w_2, \dots, w_{n'} \in \mathbb{R}^{d}$. The payoff the Learner receives\footnote{In general, we consider adversarial online learning environments where our only goal is to minimize the Learner's regret, and thus where we only need to consider the Learner's utility. In results where the payoff of the Adversary is also relevant (e.g. computation of correlated equilibria), we assume the Adversary's payoff is also a bilinear function of (possibly different) embeddings of these actions into $d$-dimensional space.} when they play action $i$ and the Adversary plays action $j$ is given by $\langle v_i, w_j\rangle$. We call such games \emph{$d$-dimensional structured games}.

Structured games encapsulate important classes of games such as Bayesian games, extensive-form games, and convex games, where the number of pure strategies is far larger (often exponentially so) than the underlying dimension of these strategies. Structured games also are a helpful abstraction of normal-form games when the two players have very different numbers of actions; any such game is a $\min(n, n')$-dimensional structured game (see Lemma~\ref{lem:nfg-as-structured} in Appendix~\ref{sec:nfg-as-structured}). Finally, structured games capture the problem of producing \emph{online calibrated predictions}, which will serve as a motivating example throughout this paper (and which we will introduce shortly). 

In our first set of main results, we prove that it is possible to obtain swap regret bounds where the total number of rounds to achieve $\eps$ per-round swap regret scales as $(1/\eps)^{O(d)}$ and \emph{independently} of the number of actions. In addition, the algorithms achieving these bounds are efficient, running in time polynomial in the time-horizon and dimension and again independent of the number of actions (granted access to an efficient convex decomposition oracle, that in $\poly(d)$ time takes an $x \in \conv(\{v_1, v_2, \dots, v_n\})$ in embedding space and returns a mixed action which embeds to $x$). Specifically, we have the following theorem.

\begin{theorem}\label{thm:small-d-intro}
There exists a learning algorithm for the Learner which incurs at most $\tilde{O}(T^{(d+1)/(d+3)})$ swap regret against any Adversary in any $d$-dimensional structured game. Equivalently, the Learner can guarantee $\eps$ per-round swap regret as long as $T = \tilde{\Omega}(1/\eps)^{(d+3)/2}$. This algorithm runs in per-iteration time $\poly(d, T)$ (assuming access to an efficient convex decomposition oracle).
\end{theorem}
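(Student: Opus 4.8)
The plan is to reduce swap regret in a $d$-dimensional structured game to the \emph{full swap regret} problem over the low-dimensional body $\mathcal{K} := \conv\{v_1,\dots,v_n\}$, and then to obtain the full-swap-regret bound by discretizing $\mathcal{K}$ and running a swap-to-external-regret reduction that is tailored to low-dimensional linear losses.

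\emph{Step 1 (reduction to full swap regret over $\mathcal{K}$).} Because payoffs are bilinear, only the Learner's expected embedding matters: if the Learner plays a mixed action with embedding $x_t = \sum_i p_t(i) v_i \in \mathcal{K}$ and the Adversary's expected embedding is $w_t$ (which, by the footnote, we may treat as an adversarial vector in a bounded set), the per-round loss is the linear — hence $L$-Lipschitz on the bounded set $\mathcal{K}$ — function $\langle x_t, w_t\rangle$. A swap function $\phi$ on pure actions contributes $\sum_i p_t(i)\langle v_{\phi(i)}, w_t\rangle$, which is at most the worst-case full-swap benchmark $\sum_t \mathbb{E}_{x\sim D_t}\!\big[\langle \pi(x), w_t\rangle\big]$ over all maps $\pi:\mathcal{K}\to\mathcal{K}$, where $D_t$ is the Learner's distribution over embeddings. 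So it suffices to design a full-swap-regret algorithm for linear losses over $\mathcal{K}$ with regret $\tilde{O}(T^{(d+1)/(d+3)})$; the convex decomposition oracle then realizes each point/distribution in $\mathcal{K}$ as a mixed action in $\poly(d)$ time.

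\emph{Step 2 (discretization).} Fix an $\epsilon$-net $Q=\{q_1,\dots,q_N\}$ of $\mathcal{K}$ with $N = O\big((\diam \mathcal{K}/\epsilon)^d\big)$ and restrict the Learner to distributions over $Q$. Snapping the Learner to the net costs at most $O(\epsilon L T)$ in loss, and since the losses are linear the comparator $\pi$ can be snapped to $Q$ for free (average $\pi$ over each cell and use linearity of $\ell_t$ and of expectation). Thus, up to $O(\epsilon L T)$, the problem becomes: the Learner plays $\lambda_t\in\Delta(Q)$ each round, incurs $\sum_k \lambda_t(k)\langle q_k, w_t\rangle$, and is compared against $\sum_k \min_{y\in\mathcal{K}}\langle g_k, y\rangle$ with $g_k = \sum_t \lambda_t(k)\, w_t$.

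\emph{Step 3 (the crux: a low-dimensional swap-to-external reduction).} The textbook Blum--Mansour reduction — one external-regret learner $\cA_k$ per net point, play the stationary distribution of the induced transition matrix, feed $\cA_k$ its loss scaled by $\lambda_t(k)$ — gives swap regret $\sum_k R_{\cA_k} = \tilde{O}(\sqrt{NT})$, only yielding the weaker exponent $(d+1)/(d+2)$. The main obstacle is to sharpen this to $\tilde{O}\big((NT)^{1/3}\big)$, i.e.\ to $\tilde{O}(T^{1/3})$ when $N=O(1)$, as in online calibration. I would exploit that each $\cA_k$ faces not an arbitrary loss vector but the correlated vector $\big(\langle q_1,w_t\rangle,\dots,\langle q_N,w_t\rangle\big)$ with $w_t$ in a fixed $d$-dimensional ball: its best comparator is always a vertex of $\mathcal{K}$ and depends on $g_k$ only through direction, so the effective benchmark is a $d$-dimensional object; combined with a variance-based accounting of the played masses (using $\sum_t \lambda_t(k)^2 \le \sum_t \lambda_t(k)$ together with a rule keeping $\lambda_t$ sufficiently spread), one can hope to replace $\sqrt{NT}$ by $(NT)^{1/3}$. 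I expect the delicate part to be reconciling (a) maintaining a genuine stationary-distribution fixed point with (b) driving the exponent of $NT$ down to exactly $1/3$ — this is essentially the new online-calibration result in disguise, and is where the real work lies. Given such a subroutine, the total is $\tilde{O}\big((NT)^{1/3} + \epsilon L T\big)$ with $N=\epsilon^{-d}$; choosing $\epsilon = T^{-2/(d+3)}$ balances the two terms at $\tilde{O}(T^{(d+1)/(d+3)})$, equivalently $\epsilon$ per-round swap regret once $T = \tilde{\Omega}(1/\eps)^{(d+3)/2}$. Finally, with this $\epsilon$ the net has size $N = T^{2d/(d+3)} \le T^2$, so maintaining the $N$ subroutines, computing the stationary distribution, and calling the decomposition oracle all take $\poly(d,T)$ time per round with no dependence on the number of actions $n$.
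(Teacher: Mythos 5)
Your Step 1 is exactly the paper's reduction (Lemma~\ref{lem:structure-to-full}): pass to the embedding space, observe the losses are linear over $\cK$, and bound swap regret in the game by full swap regret there. The problem is Step 3, which is the entire content of the claimed bound and which you leave as a conjecture. You correctly note that an $\eps$-net of all of $\cK$ plus vanilla Blum--Mansour gives only $\tilde O(\eps L T + \sqrt{NT})$ with $N = \eps^{-d}$, i.e.\ exponent $(d+1)/(d+2)$, and you then posit a sharpening of the swap-to-external reduction from $\sqrt{NT}$ to $(NT)^{1/3}$ for correlated linear losses. You do not prove this, you explicitly flag it as ``where the real work lies,'' and it is not what the paper does; nothing in the paper improves the $\sqrt{NT}$ Blum--Mansour accounting for linear losses (the $T^{1/3}$ calibration result comes from strong convexity and the $O(\log G_T)$ external regret of scaled OGD, not from any generic $(NT)^{1/3}$ phenomenon). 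As written, your argument establishes only the weaker $(d+1)/(d+2)$ exponent.

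The paper's actual mechanism is different and uses the very observation you make in passing --- that a linear loss is minimized at an extreme point of $\cK$ --- but in the discretization rather than in the subroutine. By the Bronshteyn--Ivanov theorem (Theorem~\ref{thm:bi}), $\cK$ admits an inner polytope approximation $\cP$ with only $O(\sqrt{d}\,\eps^{-(d-1)})$ vertices and Hausdorff error $\eps^2$. Taking $\Ke$ to be the vertex set of $\cP$ buys two savings simultaneously: the comparator loses only $O(L\eps^2)$ per round (project to $\cP$, then round to vertices; for concave/linear losses randomizing over vertices can only help), and the number of external-regret instances drops to $N=\eps^{-(d-1)}$ because interior points need not be covered. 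Plugging into the standard analysis gives $O\bigl(L\eps^2 T + L\sqrt{T\eps^{-(d-1)}\log(1/\eps)}\bigr)$, and balancing at $\eps = T^{-1/(d+3)}$ yields $\tilde O(T^{(d+1)/(d+3)})$ with MWU as the subroutine --- no new swap-to-external reduction needed. To repair your proof you should replace Step 3 entirely with this polytope-approximation argument (or else actually prove the $(NT)^{1/3}$ reduction you hypothesize, which would be a substantially stronger and currently unsupported claim).
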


One immediate consequence of Theorem~\ref{thm:small-d-intro} is that as long as the embedding space is constant-dimensional, it is possible to design decentralized learning dynamics that converge to an $\eps$-correlated equilibrium in time and number of steps that is polynomial in $1/\eps$.

\begin{corollary}\label{cor:correlated-equilibria}
Let $G$ be any $d$-dimensional structured game between two players.
There exists a pair of learning algorithms for these two players such that if each player selects strategies according to their algorithm, the average distribution of play after $T = \tilde{\Omega}(1/\eps)^{(d + 3)/2}$ rounds is guaranteed to be an $\epsilon$-correlated equilibrium of $G$. In addition, if both players have access to efficient convex decomposition oracles, both algorithms run in per-iteration time $\poly(d, T)$; in particular, it is possible to compute an $\eps$-correlated equilibrium in this game in time $(1/\eps)^{O(d)}$.
\end{corollary}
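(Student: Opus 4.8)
The plan is to derive Corollary~\ref{cor:correlated-equilibria} from Theorem~\ref{thm:small-d-intro} by the standard reduction from no-swap-regret dynamics to correlated equilibria, adapted to the structured-game setting. First I would recall the classical fact \citep{blum2007external} that if two players in a finite game each run a learning algorithm whose swap regret (with respect to their own action set) is at most $R(T)$, then the empirical distribution $\sigma_T$ over the $T$ joint action profiles played is an $\epsilon$-correlated equilibrium with $\epsilon = R(T)/T$: indeed, for each player, the incentive to deviate under any swap function (in particular, under the optimal single-action-to-action deviation map that defines the correlated-equilibrium constraint) is exactly bounded by the player's per-round swap regret, since the expected utility gain from the best swap function applied to the realized play is the swap regret. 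The only subtlety here is that ``swap regret'' in a structured game is measured against swap functions on the (potentially huge) pure action set, but these are precisely the swaps that the correlated-equilibrium definition quantifies over, so Theorem~\ref{thm:small-d-intro}'s guarantee is exactly what is needed.

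Second, I would instantiate Theorem~\ref{thm:small-d-intro} for \emph{both} players. The theorem gives the Learner an algorithm with $\tilde{O}(T^{(d+1)/(d+3)})$ swap regret against any Adversary in any $d$-dimensional structured game. Since $G$ is a $d$-dimensional structured game between two players, and (by the footnote in the excerpt) we may assume the Adversary's payoffs are also bilinear in $d$-dimensional embeddings, the game is symmetric in structure: we can let \emph{each} player run the algorithm of Theorem~\ref{thm:small-d-intro}, each treating the other as its adversary. Each player then individually incurs at most $\tilde{O}(T^{(d+1)/(d+3)})$ swap regret regardless of the other's behavior. Setting $T = \tilde{\Omega}(1/\epsilon)^{(d+3)/2}$ makes the per-round swap regret of each player at most $\epsilon$ (up to the polylogarithmic factors absorbed into $\tilde{\Omega}$, matching the statement of Theorem~\ref{thm:small-d-intro}), so by the reduction above $\sigma_T$ is an $\epsilon$-correlated equilibrium of $G$.

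Third, for the computational claims, I would observe that Theorem~\ref{thm:small-d-intro} already asserts per-iteration running time $\poly(d,T)$ given an efficient convex decomposition oracle; running this for both players gives total running time $T \cdot \poly(d,T) = \poly(d,T)$ to produce the $T$-round play, which with $T = \tilde{O}(1/\epsilon)^{(d+3)/2}$ is $(1/\epsilon)^{O(d)}$. Extracting $\sigma_T$ (i.e.\ recording the empirical joint distribution of play) is immediate. One must be slightly careful that the per-player algorithm, when acting as an adversary to itself, does not require more than what the oracle provides — but since each player only needs its own decomposition oracle to implement its own algorithm, and the opponent's play enters only through the observed loss vectors (the bilinear payoffs evaluated at the embeddings), no additional oracle access is needed.

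The main obstacle, and the only place requiring genuine care, is verifying that the notion of swap regret guaranteed by Theorem~\ref{thm:small-d-intro} — which is phrased in terms of swap functions on the Learner's embedded action set — is strong enough to certify the correlated-equilibrium constraints, which quantify over \emph{all} maps from the pure action set to itself. This is fine because any such pure-action swap map induces a swap map on embeddings (sending $v_i$ to $v_{\pi(i)}$), and the bilinear payoff structure means the utility change depends only on embeddings; conversely the algorithm's guarantee holds against the worst-case swap, which dominates any particular $\pi$. I would also note that the constants hidden in $\tilde{\Omega}$ in the two statements are consistent by construction, so no additional loss is incurred. Everything else is a direct quotation of Theorem~\ref{thm:small-d-intro} plus the textbook swap-regret-to-correlated-equilibrium argument.
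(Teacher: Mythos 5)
Your proposal is correct and matches the paper's (implicit) derivation: the paper gives no separate proof of this corollary, stating only that it follows directly from Theorem~\ref{thm:small-d-intro}, and your argument — run the Theorem~\ref{thm:small-d-intro} algorithm for both players (legitimate since the definition of a structured game makes the Adversary's utility bilinear as well), invoke the standard fact that per-round swap regret $\eps$ for both players makes the empirical joint distribution an $\eps$-correlated equilibrium, and read off the runtime from the per-iteration bound — is exactly the intended route. Your care about swap functions acting on pure actions rather than embeddings is well placed and consistent with the paper's definition of $\SwapReg$ over maps $\phi:[n]\to[n]$.
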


\subsection{Full swap regret for online learning}

We prove Theorem \ref{thm:small-d-intro} by reducing the relevant problem to another natural problem: \emph{full swap regret minimization}.  In the standard setting of online learning, a learner must, in every round $t$ (for $T$ rounds), play an action $x_t$ belonging to a $d$-dimensional bounded convex set $\cK \subset \R^{d}$ (in fact, we will actually let the learner play a finitely-supported distribution $\x_t \in \Delta(\cK)$ over such actions).  Then, a loss function $\ell_t : \cK \rightarrow \R$ is revealed by the adversary and the learner incurs loss\footnote{Traditionally, this loss function is also assumed to be convex. In the majority of our applications this will be the case, but we will also consider some settings with weaker constraints on the $\ell_t$ (e.g., Lipschitz or concave $\ell_t$).} $\ell_t(\x_t) := \E_{s \sim \x_t}[\ell_t(s)]$. The standard objective in online learning is to minimize the \emph{external regret}, defined as

\begin{align*}
    \Reg &:= \max_{x^* \in\cK} \sum_{t=1}^T \left[\ell_t(\x_t) - \ell_t(x^*)\right].
\end{align*}

We consider a swap regret variant of this objective, where instead of competing against the best fixed action $x^* \in \cK$ in hindsight, we compete against the best arbitrary swap function $\phi: \cK \rightarrow \cK$ in hindsight.

\begin{align*}
    \FSR &:= \max_{\phi:\cK \rightarrow \cK} \sum_{t=1}^T \left[\ell_t(\x_t) - \ell_t(\phi(\x_t))\right].
\end{align*}
Here $\phi(\x_t) \in \Delta(\cK)$ is the distribution of $\phi(s)$ where $s \sim \x_t$. We call this quantity \emph{\textbf{full} swap regret} to emphasize a distinction from other notions of swap regret where the class of transformations $\phi$ is restricted in some way (e.g., linear swap regret, where $\phi$ must also be a linear transformation).  

We provide a family of algorithms to minimize full swap regret over convex action sets. The optimal rates achievable depend on the qualitative properties of the losses faced by the learner (in particular, whether they are strongly convex, smooth, both, or neither).

\begin{theorem}[Informal version of Theorem~\ref{thm:full-main}]\label{thm:informal_summary}
    Let $\cK \subseteq \RR^d$ be a convex set of diameter 1.  Let $\cL \subseteq \set{\ell: \cK \to \RR}$ be a family of $O(1)$-Lipschitz loss functions.  
    We attain the following $\FSR$ guarantees in terms of the following constraints on $\cL$.
    \begin{center}
        \begin{tabular}{l|c}
            For all $\ell \in \cL$ & $\FSR$ rate\\
            \hline 
            $\ell$: no assumption
            & $\tilde{O}\p{T^{\frac{d+1}{d+2}}}$\\
            \hline 
            $\ell$: linear (or more generally, concave)
            & $\tilde{O}\p{T^{\frac{d+1}{d+3}}}$\\
            \hline 
            $\ell$: $O(1)$-smooth
            & $\tilde{O}\p{T^{\frac{d+2}{d+4}}}$\\
            \hline 
            $\ell$: $\Omega(1)$-strongly-convex
            & $\tilde{O}\p{T^{\frac{d}{d+1}}}$\\
            \hline 
            $\ell$: $\Omega(1)$-strongly-convex and $O(1)$-smooth
            & $\tilde{O}\p{T^{\frac{d}{d+2}}}$\\
        \end{tabular}
    \end{center}
\end{theorem}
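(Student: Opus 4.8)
The plan is to prove all five bounds via a single unified reduction: discretize $\cK$ into a $\delta$-net of $N = \Theta((1/\delta)^d)$ points, run a standard finite-action swap regret minimizer (à la Blum--Mansour) over this net, and pay a discretization error for approximating the worst-case continuous swap function $\phi:\cK\to\cK$ by one supported on the net. The finite swap regret algorithm contributes $\tilde O(\sqrt{NT}) = \tilde O(\sqrt{(1/\delta)^d T})$; the discretization overhead is what varies across the five rows of the table, and optimizing $\delta$ in $\sqrt{(1/\delta)^d T} + (\text{discretization term})$ yields the stated exponents. The key structural observation is that full swap regret, unlike linear swap regret, lets the learner \emph{randomize over net points}: since $\phi(\x_t)$ is the pushforward distribution, we only ever need $\phi$ defined on net points, so the benchmark becomes $\max_{\phi:\text{net}\to\text{net}}$ after we absorb the cost of rounding each played point and each target point to the net.

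The core steps, in order, are: (1) Fix a $\delta$-net $S$ of $\cK$ with $|S| = O((1/\delta)^d)$; have the learner play only distributions supported on $S$ and run the Blum--Mansour reduction (one external-regret learner per net point), giving $\sum_t [\ell_t(\x_t) - \ell_t(\psi(\x_t))] \le \tilde O(\sqrt{|S|T})$ for every $\psi: S\to S$. (2) Given the optimal continuous $\phi^*:\cK\to\cK$, define $\psi^*: S\to S$ by $\psi^*(s) = $ (net point nearest to $\phi^*(s)$); then bound $\sum_t [\ell_t(\phi^*(\x_t)) - \ell_t(\psi^*(\x_t))]$. (3) Here the Lipschitz/convexity/smoothness assumptions enter. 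With only Lipschitzness, each of the $T$ rounds costs $O(\delta)$ per unit mass, giving overhead $O(\delta T)$; balancing $\sqrt{(1/\delta)^d T} + \delta T$ gives $\delta = T^{-2/(d+2)}$ and $\FSR = \tilde O(T^{(d+1)/(d+2)})$. For concave (in particular linear) losses, one instead moves the \emph{played} point $\x_t$ to the net and uses that a concave function's average over a small neighborhood only \emph{increases} relative to... more carefully, one exploits Jensen in the favorable direction so that the rounding error on the played side is first-order-free and the net granularity can be taken coarser, $\delta \sim T^{-2/(d+3)}$, yielding $T^{(d+1)/(d+3)}$. For $O(1)$-smooth losses, a second-order Taylor bound replaces the $O(\delta)$ per-round error with $O(\delta^2)$ after an appropriate averaging/centering argument (the net point is the projection, and smoothness kills the linear term in expectation over a symmetrized net), giving overhead $O(\delta^2 T + \delta T \cdot \text{something})$ and the exponent $T^{(d+2)/(d+4)}$. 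For $\Omega(1)$-strongly-convex losses, strong convexity of $\ell_t$ means the learner's own play concentrates: the external-regret-to-swap reduction can be run with a strongly convex regularizer so each of the $|S|$ sub-learners gets $O(\log T)$ rather than $O(\sqrt{T})$ regret, improving the finite bound to $\tilde O(|S|)$ rather than $\tilde O(\sqrt{|S|T})$ — rebalancing gives $T^{d/(d+1)}$, and combining with smoothness gives $T^{d/(d+2)}$.

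\textbf{Main obstacle.} The delicate part is not the net-size bookkeeping but making the favorable-direction arguments (steps for the concave and smooth rows) rigorous in the swap setting: when we round the \emph{target} $\phi^*(s)$ to the net we lose control unless we simultaneously argue that the learner's algorithm was, in effect, already competing against net-valued swaps — this requires carefully choosing whether to perturb the played distribution, the target function, or both, and doing so in a way compatible with the per-net-point external regret structure (since each Blum--Mansour sub-learner only sees the conditional loss stream). In particular, the smooth case needs the net to be chosen so that each point's neighbors are roughly symmetric about it (so the first-order terms telescope/cancel in expectation), which forces a slightly more careful net construction than a generic $\delta$-net, and one must check the boundary of $\cK$ doesn't break this. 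I expect establishing the clean $O(\delta^2)$ overhead — rather than a naive $O(\delta)$ — to be the crux of the improved rates, and getting the strongly-convex speedup requires verifying that the Blum--Mansour fixed-point/stationary-distribution step still goes through when the sub-learners use log-regret (strongly convex) algorithms on losses that are themselves strongly convex only after the conditioning step.
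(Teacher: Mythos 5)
Your overall architecture is the same as the paper's: discretize $\cK$, run Blum--Mansour with one external-regret sub-learner per discretization point, bound the discretization error row by row, and upgrade the sub-learners to logarithmic regret in the strongly convex rows. Rows 1, 3, 4, 5 of your table come out correctly under this plan (modulo an arithmetic slip: balancing $\sqrt{(1/\delta)^d T}+\delta T$ gives $\delta=T^{-1/(d+2)}$, not $T^{-2/(d+2)}$). However, there are two places where the argument as written does not close. First, the concave/linear row: a Jensen-in-the-favorable-direction rounding of the benchmark target over a full-dimensional $\delta$-net gives a second-order error $O(\delta^2 T)$ against a sub-learner cost of $\tilde O(\sqrt{(1/\delta)^d T})$, and balancing these yields $T^{(d+2)/(d+4)}$ --- the \emph{smooth} rate, which is strictly worse than the claimed $T^{(d+1)/(d+3)}$ for every $d$ (your stated $\delta\sim T^{-2/(d+3)}$ does not balance the two terms). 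The missing ingredient is that for concave losses the benchmark never needs interior targets, so one can discretize only the \emph{boundary} of $\cK$ by an inscribed polytope with $O(\sqrt{d}\,(1/\eps)^{d-1})$ vertices at Hausdorff error $\eps^2$ (Bronshteyn--Ivanov); the reduction of the net size from $(1/\eps)^d$ to $(1/\eps)^{d-1}$, combined with the second-order rounding error, is exactly what produces $T^{(d+1)/(d+3)}$. Since this row is the one that drives the structured-games result, the gap is material.

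Second, for the strongly convex rows you correctly flag but do not resolve the crux: the sub-learners must output distributions over the finite net in order to define the Markov chain, yet the logarithmic regret of gradient descent is only available when the sub-learner optimizes over the \emph{continuum}, and each sub-learner's loss stream is $\x_t[s]\,\ell_t$, i.e.\ strongly convex only up to an adversarially chosen scale. The paper's resolution is a modified template in which each sub-learner runs a \emph{scaled} OGD over $\cK$ (regret bounded by $\frac{L^2}{2\alpha}\log(G_T+1)$ with $G_T=\sum_t\x_t[s]$, via a learning rate driven by the cumulative scale rather than by $t$), and its continuous recommendation is then passed through a rounding map $\He:\cK\to\Delta(\Ke)$ with one-sided error $\delta$ per unit mass; the rounding error contributes $\delta T$ to the swap regret and the scaled log-regrets sum to $O(|\Ke|\log T)$ by concavity of $\log$. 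Without specifying this rounding step and the scaled analysis, the claim that each of the $|S|$ sub-learners achieves $O(\log T)$ regret does not follow from standard results, so rows 4 and 5 remain open in your write-up.
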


Theorem \ref{thm:informal_summary} (in particular, the setting with linear losses) can be immediately applied to recover Theorem \ref{thm:small-d-intro} on minimizing swap regret in structured games. Indeed, instead of thinking of the Learner as playing a mixed strategy $\alpha_t \in \Delta_n$ supported on their $n$ pure actions, it suffices to look at the projection $x_t$ of their strategy onto their embedding space $\cK = \conv(\{v_1, v_2, \dots, v_n\}) \subseteq \R^d$. By the structure of the game, they face \emph{linear losses} in this embedding space, and it is therefore possible to upper bound the swap regret the Learner incurs in the original game by the full swap regret they incur in this projected problem, which by Theorem~\ref{thm:small-d-intro} can be guaranteed to be at most $\tilde{O}(T^{(d+1)/(d+3)})$.

\paragraph{Online calibration and full swap regret}

In cases where the losses in our game have additional structure, it is possible to apply the other guarantees in the statement of Theorem~\ref{thm:informal_summary} to obtain even stronger regret bounds. One important application where this is the case is the problem of producing online calibrated forecasts.

In the problem of online calibration, the Learner is a forecaster who, every round, must make a (randomized) prediction $\x_t \in \Delta([0, 1])$ for the probability of some binary event (e.g., whether it will rain that day or not). Then nature (the Adversary) either realizes this event (sets $b_t = 1$) or does not (sets $b_t = 0$). The quality of the learner's forecasts is evaluated via their calibration error, which essentially asks ``in the rounds where the forecaster predicted 20\% probability of rain, did it rain 20\% of the time?''. Mathematically, we define the \emph{$\ell_2$-calibration error} as:
\begin{equation*}
    \Cal(\x_{1:T}, \mathbf{b}_{1:T}) = \sum_{p \in [0, 1]}\left(\sum_{t} \x_t[p] \right) \cdot \left(p - \frac{\sum_{t} b_t \x_t[p]}{\sum_t \x_t[p]}\right)^2,
\end{equation*}
where $\x_t[p]$ is the probability that the forecaster predicts probability $p \in [0,1]$ in round $t$. It can be shown (see Lemma~\ref{lem:calib_swap_regret} in Appendix~\ref{sec:calib_swap_regret}) that this calibration error $\Cal$ is exactly equal to the Learner's swap regret in the game where they receive utility $u(x, b) = -x^2 + b(2x - 1)$ when the Learner plays the pure strategy $x \in [0, 1]$ and the Adversary plays the pure strategy $b \in \{0, 1\}$. In particular this quantity can be interpreted as the Learner's swap regret in a two-dimensional structured game where the pure strategy $x$ corresponds to the embedding $v_{x} = (2x-1, -x^2)$, and the pure strategy $b$ corresponds to the embedding $w_{b} = (b, 1)$.

Naively applying the bounds in Theorem \ref{thm:small-d-intro} to this game gives us an algorithm with a calibration error of $\tilde{O}(T^{3/5})$. But since the losses in calibration are quadratic functions of the single-dimensional parameter $x$, we can directly apply the strongly convex and smooth case of Theorem \ref{thm:informal_summary} for $d=1$ to instead obtain an online forecaster with $\tilde{O}(T^{1/3})$ $\ell_2$-calibration error.

\begin{theorem}[$\ell_2$-Calibration]\label{thm:calib_bound} There is a forecasting algorithm with $\tilde{O}(T^{1/3})$ $\ell_2$-calibration error.
\end{theorem}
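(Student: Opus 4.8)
The plan is to obtain Theorem~\ref{thm:calib_bound} as a direct corollary of the one-dimensional, strongly-convex-and-smooth case of Theorem~\ref{thm:informal_summary} (equivalently Theorem~\ref{thm:full-main}), by identifying $\ell_2$-calibration error with a full swap regret. First I would invoke Lemma~\ref{lem:calib_swap_regret}, which asserts that for any sequence of randomized forecasts $\x_{1:T}$ and outcomes $\mathbf{b}_{1:T}$, the quantity $\Cal(\x_{1:T},\mathbf{b}_{1:T})$ equals the Learner's swap regret in the repeated game with Learner utility $u(x,b) = -x^2 + b(2x-1)$ for $x \in [0,1]$, $b \in \{0,1\}$. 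Writing $\ell_t(x) := -u(x,b_t) = x^2 - b_t(2x-1)$, this swap regret is $\sum_t [\ell_t(\x_t) - \ell_t(\phi(\x_t))]$ maximized over all remappings $\phi : [0,1] \to [0,1]$ of the forecaster's predictions, which is exactly $\FSR$ for the action set $\cK = [0,1]$ and the loss family $\cL = \{\, x \mapsto x^2 - b(2x-1) : b \in \{0,1\} \,\}$. Hence $\Cal(\x_{1:T},\mathbf{b}_{1:T}) \le \FSR$ for this instance, and it suffices to run the full swap regret algorithm of Theorem~\ref{thm:full-main} on it.

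Next I would check that $\cL$ meets the hypotheses of the last row of Theorem~\ref{thm:informal_summary} with $d = 1$. The set $\cK = [0,1]$ has diameter $1$; each $\ell \in \cL$ has $\ell'(x) = 2x - 2b \in [-2,2]$ on $[0,1]$ (so $\ell$ is $O(1)$-Lipschitz) and $\ell''(x) = 2$ (so $\ell$ is simultaneously $\Omega(1)$-strongly convex and $O(1)$-smooth). The online protocols also match: in each round the forecaster commits to $\x_t \in \Delta([0,1])$, nature reveals $b_t \in \{0,1\}$, and only then is $\ell_t$ determined — precisely the information structure of the full swap regret problem. Applying Theorem~\ref{thm:full-main} in this instance therefore yields forecasts with $\FSR = \tilde{O}(T^{d/(d+2)}) = \tilde{O}(T^{1/3})$, and combining with the previous paragraph gives $\Cal = \tilde{O}(T^{1/3})$.

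Essentially all of the work is already packaged inside Theorem~\ref{thm:full-main}, so the proof here is largely bookkeeping, and I expect the only genuine subtleties to be: (i) confirming that calibration error is governed by the \emph{full} swap class $\phi:[0,1]\to[0,1]$ rather than by a coarser class — this is what lets us beat the $\tilde{O}(T^{3/5})$ bound one would get by naively feeding the associated $2$-dimensional structured game into Theorem~\ref{thm:small-d-intro}; and (ii) verifying that the Lipschitz, strong-convexity, and smoothness moduli of the quadratic losses are all $\Theta(1)$, so that no hidden constants corrupt the exponent. It is worth stressing that although the structured-game embedding of this problem is $2$-dimensional (with $v_x = (2x-1,-x^2)$), the relevant dimension for the full swap regret algorithm is $d = 1$: the forecaster's action, and hence each loss $\ell_t$, is a function of the single scalar prediction $x$. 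This dimension reduction, together with the quadratic (thus strongly convex and smooth) structure of $\ell_t$, is exactly what upgrades the rate from $T^{3/5}$ to $T^{1/3}$.
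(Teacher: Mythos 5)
Your proposal is correct and follows essentially the same route as the paper: invoke Lemma~\ref{lem:calib_swap_regret} to identify $\Cal$ with the full swap regret of the quadratic losses $\ell_t(x)=(x-b_t)^2$ on $\cK=[0,1]$, verify these are $O(1)$-Lipschitz, $\Omega(1)$-strongly convex, and $O(1)$-smooth, and apply the strongly-convex-and-smooth case of Theorem~\ref{thm:full-main} with $d=1$ to get $\tilde{O}(T^{1/3})$. The observations you flag as the genuine content --- that the relevant dimension is $1$ rather than the $2$ of the structured-game embedding, and that the full swap class is what controls calibration --- are exactly the points the paper relies on.
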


Note that this is also asymptotically better than both the bound on $\ell_2$-calibration inherited from the best bounds for $\ell_1$-calibration \citep{dagan2024breakingt23barriersequential}, and the $O(\sqrt{T})$ bound obtained by naively applying Blum-Mansour to an $\epsilon$-net of predictions. 

\paragraph{Algorithmic techniques} All of the algorithms in Theorem~\ref{thm:informal_summary} can be seen as extensions of the classical swap-regret minimization algorithm of \cite{blum2007external} for the case of discrete actions. In that algorithm, we maintain an instance of an external regret minimization algorithm for each action. In each round, each action's sub-algorithm outputs a probability distribution over the set of all actions, and taken together, these distributions can be viewed as a Markov chain over the actions. The learner then samples an action from the stationary distribution of this Markov chain, which allows us to decompose the swap regret into the sum of external regrets of each of the sub-algorithms. With $n$ discrete actions, this algorithm incurs at most $O(\sqrt{nT})$ swap regret.

The immediate difficulty of applying this idea to convex action sets is that the set of actions is infinite, so the previously mentioned regret bound is vacuous. Our solution to this problem has two main components. The first (rather natural) idea is to discretize the set of actions, apply the Blum-Mansour algorithm over this discretization, and bound the discretization error. If we choose the discretization carefully (by performing an appropriate polytope approximation of the convex set $\cK$), this idea alone is enough to recover the first three rows of guarantees in Theorem \ref{thm:informal_summary}. 

However, this black box application of Blum-Mansour is inherently lossy -- by treating every point in the discretization as an individual action, we lose the information that some pairs of actions correspond to very nearby points whereas others correspond to very dissimilar points. Our second main technical insight is to modify the Blum-Mansour algorithm to take advantage of this information by adjusting the behavior of the exteral regret minimization sub-algorithms.  Indeed, we begin with a discretization $\Ke$ of the action set $\cK$, and we instantiate an external regret algorithm for each point in $\Ke$.  However, instead of running a generic regret minimization algorithm over the set of distributions $\Delta(\Ke)$, we modify each sub-algorithm to be a combination of an online convex optimization algorithm over $\cK$ (e.g. some variant of gradient descent) and an appropriate rounding procedure mapping $\cK$ to $\Delta(\Ke)$. This allows the subroutines to attain improved rates when the losses are strongly-convex (where we can obtain external regret bounds of $O(\log T)$ instead of $O(\sqrt{T})$), and allows us to achieve the remaining guarantees in Theorem \ref{thm:informal_summary}. 

\subsection{Discretized calibration: interpolating between linear and strongly convex losses}

Carefully examining the regret guarantees in Theorem \ref{thm:informal_summary} reveals a mathematically counter-intuitive phenomenon. Let's consider the case $d=1$. If we focus on one extreme of the space of possible loss functions -- the case of linear losses -- it is possible to obtain swap regret bounds that scale as $\tilde O(\sqrt{T})$ (row 2). On the other hand, if we look at another extreme -- strongly convex loss functions -- we again attain $\tilde O(\sqrt{T})$ regret (row 4). But if we interpolate between these two extremes and look at sequences of loss functions that are merely guaranteed to be convex, the best regret bound that applies is simply the general regret bound of $\tilde{O}(T^{2/3})$ (row 1). This is worse than the regret bound at either of the two extremes! This raises the following question:

\begin{question}\label{question:convex-question}
For $d=1$, is there an algorithm which guarantees $\FSR = \tilde{O}\left(\sqrt{T}\right)$ against any sequence of $O(1)$-Lipschitz convex loss functions $\ell_t$?
\end{question}

We do not resolve Question \ref{question:convex-question} in this paper, but we provide some evidence that the answer is positive by examining some natural instances of this question where this phenomenon already occurs. One especially striking example is a natural variant of the online calibration problem that we call \emph{discretized calibration}.

The setting of discretized calibration is almost identical to the online calibration setting described above, with the main difference being that the learner is required to make predictions that are multiples of a fixed $\epsilon$ (e.g., $5\%$)? This is practically motivated by the fact that in many typical forecasting settings, forecasts are often ``binned'' into such discrete multiples; for example, a newspaper may be reluctant to publish a weather forecast of a 13.14159\% probability of rain and instead may prefer to report that there is a 15\% probability of rain instead. Formally, the forecaster makes predictions $\x_t \in \Delta([0,1] \cap \epsilon \Z)$ (note that this action set is a $\lceil 1/\eps \rceil$-dimensional simplex). We modify the calibration loss to $\Cal_\epsilon$ by replacing the $(p - \bar b(p))^2$ term with  $(p - [\bar b(p)]_{\epsilon})^2$, where $[b]_\epsilon \in [0,1] \cap \epsilon \Z$ rounds $b \in [0,1]$ to the nearest multiple of $\eps$. In other words, we only compete against predictors who also are forced to discretize their forecasts (without this constraint, we would be forced to incur $\Omega(\eps T)$ calibration error simply due to this rounding error). 

If the learner is required to make predictions that are multiples of $\epsilon$, there are two natural approaches. The first is to run our algorithm in Theorem \ref{thm:calib_bound} for general (non-discretized) calibration, and round each prediction to the nearest multiple of $\epsilon$. This leads to a discretized calibration error of  $\tilde{O}(T^{1/3} + \epsilon^2 T)$. A second approach is to simply treat this as a regular discrete swap regret problem over an action set with $1/\epsilon$ actions and directly run the algorithm of \cite{blum2007external}. This leads to a discretized calibration error of $O(\sqrt{T / \epsilon})$. The calibration errors of these two approaches are depicted in Figure \ref{fig:discretized_calibration_regret}. The best of those natural approaches cannot guarantee regret below $O(\sqrt{T})$ for $\eps \in (T^{-1/4},1)$. In the following theorem we significantly improve this bound.

 \begin{figure}[h]
\centering
\begin{tikzpicture}[scale=3]

\draw (-.1,0) -- (8/3,0);
\draw (0,-.1) -- (0,1.5);
\node at (0,-.1) {$T^{-1}$};
\node at (2/3,-.1) {$T^{-1/3}$};
\node at (4/3,-.1) {$T^{-1/4}$};
\node at (2,-.1) {$T^{-1/5}$};
\node at (8/3,-.1) {$1$};
\node at (-.2,.2) {$T^{1/3}$};
\node at (-.2,.8) {$T^{1/2}$};
\node at (-.2,1.4) {$T^{3/5}$};
\node at (8/3,.06) {$\epsilon$};
\node at (2,.4) {\color{red} $\sqrt{\epsilon T}$};
\node at (1,1) {\color{black} $T^{1/3} + \epsilon^2 T$};
\node at (2.4,1.3) {\color{blue} $\sqrt{T / \epsilon}$};

\draw[line width=1.5, dashed,  black] (0,.215) -- (2/3,.215) -- (2.13, 1.52); 
\draw[line width=1.5, dashed, blue] (1.86, 1.52) -- (2,1.4) -- (8/3,.8);
\draw[line width=1.5, red] (0,.2) -- (2/3,.2) -- (8/3,.8);
\end{tikzpicture}
\caption{The $x$-axis shows the discretization parameter $\epsilon$ and the $y$-axis the calibration loss $\Cal$ for three different algorithms: dashed blue (regular swap regret on $1/\epsilon$ actions), dashed black (algorithm in Theorem \ref{thm:calib_bound} + rounding) and red (algorithm in Theorem \ref{thm:discrete_calib_bound}).}
\label{fig:discretized_calibration_regret}
 \end{figure}
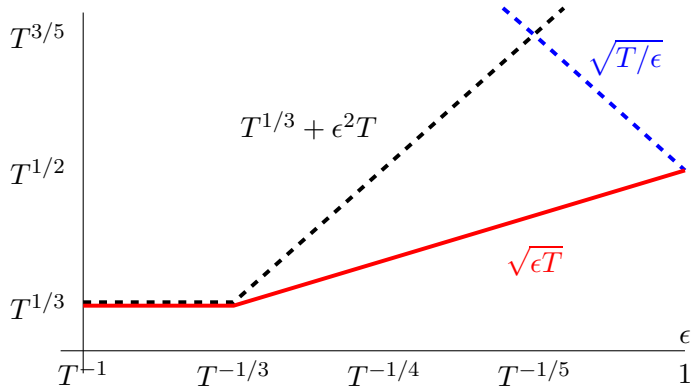

\begin{theorem}[Discretized $\ell_2$-Calibration]\label{thm:discrete_calib_bound} There is an online algorithm for $\eps$-discretized calibration which incurs calibration error at most $\tilde{O}(\max(T^{1/3}, \sqrt{\epsilon T})).$
\end{theorem}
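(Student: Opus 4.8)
The plan is to reduce $\eps$-discretized calibration to a \emph{discretized full swap regret} problem and then instantiate the algorithms behind Theorem~\ref{thm:informal_summary}, with the choice of algorithm and of internal discretization scale governed by how $\eps$ compares to $T^{-1/3}$. Concretely, just as $\ell_2$-calibration error equals swap regret for the loss $\ell_t(x) = (x-b_t)^2$ (Lemma~\ref{lem:calib_swap_regret}), a discretized analogue shows that $\Cal_\eps$ is, up to an additive $O(\eps^2 T)$, the Learner's full swap regret $\DSR$ in the online learning problem on $\cK = [0,1]$ with $\ell_t(x)=(x-b_t)^2$, where now both the Learner's distributions $\x_t$ and the comparator swap functions $\phi$ are required to be supported on $[0,1]\cap\eps\Z$; the $O(\eps^2 T)$ slack is harmless when $\eps\le T^{-1/3}$, and in the complementary regime the algorithm will be designed to control $\Cal_\eps$ directly. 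These losses are $\Omega(1)$-strongly-convex and $O(1)$-smooth, so absent the discretization constraint this is exactly the last row of Theorem~\ref{thm:informal_summary} with $d=1$, giving $\tilde{O}(T^{1/3})$. The discretization means strong convexity can only be cashed in at scales larger than $\eps$, while at scales below $\eps$ the loss effectively looks linear; this is the linear-to-strongly-convex interpolation, and the target $\tilde{O}(\max(T^{1/3},\sqrt{\eps T}))$ should be read as ``$\tilde{O}(T^{1/3})$ until $\eps$ crosses $T^{-1/3}$, then degrading toward the $\tilde{O}(\sqrt{T})$ Blum-Mansour-style bound as $\eps\to 1$.''

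When $\eps \le T^{-1/3}$ this is easy: run the (non-discretized) algorithm of Theorem~\ref{thm:calib_bound} and round each forecast to the nearest multiple of $\eps$. The only estimate needed is that rounding a forecaster with $\ell_2$-calibration error $C$ onto the $\eps$-grid yields $\eps$-discretized calibration error $O(C+\eps^2 T)$. This follows from a bias--variance decomposition: each merged bin's conditional-mean term is a weighted average of the original bins' conditional-mean terms, and by smoothness of $(x-b_t)^2$ --- together with the fact that the comparator is also forced to round --- the total perturbation is $O(\eps^2)$ per unit of mass. Since $C = \tilde{O}(T^{1/3})$ by Theorem~\ref{thm:calib_bound} and $\eps^2 T \le T^{1/3}$, this already gives $\Cal_\eps = \tilde{O}(T^{1/3})$.

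The substance is the regime $\eps > T^{-1/3}$, where $\eps^2 T > T^{1/3}$ and neither natural approach suffices: rounding the Theorem~\ref{thm:calib_bound} forecaster costs $\Theta(\eps^2 T)$, and Blum-Mansour on the $1/\eps$ grid points costs $\tilde{O}(\sqrt{T/\eps})$. I would instead build a scale-adaptive swap-regret algorithm that organizes the comparison by the \emph{scale} of a swap: decompose any $\phi$ into local moves at geometric scales $\eps, 2\eps, 4\eps, \dots$, and run, for each scale $\delta$, a Blum-Mansour-type layer over the $\delta$-grid whose sub-algorithms only move a prediction within a $\delta$-cell, so that they face losses of range $O(\delta)$ (since $|(p-b_t)^2-(q-b_t)^2|\le 2|p-q|$) and the scale-$\delta$ layer contributes $\tilde{O}(\sqrt{\delta T})$. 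At the coarse scales, where range-$\delta$ losses are no longer small, one instead drives those sub-algorithms by the $\Omega(1)$-strong convexity of $\ell_t$ (the machinery behind the last rows of Theorem~\ref{thm:informal_summary}), which contributes only $\tilde{O}(1/\delta)$. Placing the crossover between the ``local/linear'' and ``global/strongly-convex'' treatments at scale of order $\eps$ and summing, the total is $\tilde{O}(\sqrt{\eps T} + 1/\eps) = \tilde{O}(\sqrt{\eps T})$ since $\eps > T^{-1/3}$; combined with the previous case, $\Cal_\eps = \tilde{O}(\max(T^{1/3},\sqrt{\eps T}))$.

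I expect the crux to be the $\eps>T^{-1/3}$ algorithm, and in particular avoiding what one might call the ``discretization tax'': a sub-algorithm that tracks a continuous estimate of the running frequency of $b$ and then randomly rounds it onto the $\eps$-grid pays $\Theta(\eps^2)$ per round against the quadratic loss, summing to $\Theta(\eps^2 T)$, which already exceeds $\sqrt{\eps T}$ in this regime. Sidestepping this seems to require that each scale-$\delta$ layer hedge only at its own scale $\delta$ --- so that its per-round variance cost is charged against that layer's range-$O(\delta)$ losses rather than against the full quadratic --- and that the scale decomposition be arranged so that these per-layer costs telescope instead of accumulating across scales. Verifying that the coarsest layers cost only $\tilde{O}(\mathrm{polylog})$, that no $\eps^2 T$ term re-enters through inter-layer interactions, and that the construction remains efficiently implementable (via the convex decomposition oracle), is where the real work lies.
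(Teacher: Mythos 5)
Your handling of the regime $\eps \le T^{-1/3}$ (run the forecaster of Theorem~\ref{thm:calib_bound} and round, paying $O(\eps^2 T) = O(T^{1/3})$) matches the paper, and your high-level target for the hard regime --- a bound of the form $\sqrt{\eps T} + 1/\eps$, obtained by treating deviations below scale $\eps$ ``linearly'' and deviations above scale $\eps$ via strong convexity --- is exactly right. The gap is in the mechanism you propose for $\eps > T^{-1/3}$. You want to decompose an arbitrary swap $\phi$ into local moves at geometric scales and run a separate Blum--Mansour layer per scale, charging fine layers to range-$O(\delta)$ losses and coarse layers to strong convexity. But swap regret does not decompose across a composition of swap functions: if $\phi = \phi_2\circ\phi_1$, the regret attributable to the second layer must be measured against the counterfactual play $\phi_1(\x_t)$ rather than against $\x_t$, and the stationary-distribution argument that makes Blum--Mansour work applies to a \emph{single} Markov chain whose per-state subroutines compete against \emph{all} targets simultaneously. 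You flag the inter-layer interactions yourself as ``where the real work lies,'' and that is indeed the entire difficulty; nothing in the sketch produces one playable strategy whose per-scale regrets provably add up. A secondary unresolved point: you correctly note that $\Cal_\eps$ and the grid-restricted swap regret differ by terms that are not negligible when $\eps > T^{-1/3}$, but ``the algorithm will be designed to control $\Cal_\eps$ directly'' is a promise, not an argument.

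The paper closes this gap with no multi-scale structure at all. It runs a single Blum--Mansour layer over the $\eps$-grid but changes the per-state subroutine: each state runs online gradient descent over the \emph{continuous} interval on the piecewise-linearization $\del_t$ of $\ell_t$, which is $(\alpha,\eps)$-nearly-strongly-convex (Lemma~\ref{lemma:linearized_is_nearly_strongly_convex}); a learning rate that decays as $\Theta(1/(\alpha G_t))$ and then switches to $\Theta(\eps\sqrt{1/G_t})$ yields scaled external regret $O(\eps L\sqrt{G}+\tfrac{L^2}{\alpha}\log G)$ (Theorem~\ref{thm:GDK}) --- precisely the two terms you were trying to generate with separate layers, realized inside one instance --- and the rounding \eqref{eq:nearly-rounding} back to the grid is lossless for $\del_t$ (Lemma~\ref{lemma:KH}). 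Summing over the $1/\eps$ states via Cauchy--Schwarz gives $O(L\sqrt{\eps T}+\tfrac{L^2}{\alpha\eps}\log T)$ (Theorem~\ref{thm:disc-main}), which for the $2$-strongly-convex, $O(1)$-Lipschitz calibration losses is $\tilde O(\sqrt{\eps T})$ whenever $\eps \ge T^{-1/3}$; combining with the rounding approach in the other regime gives the theorem. To repair your proof you would need to either carry out the missing composition argument for the layered construction or replace it with a single subroutine achieving the $\eps\sqrt{G} + \log G$ tradeoff, which is what the paper's nearly-strongly-convex OGD does.
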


Our algorithm for discretized calibration is a special case of a $\SR$ algorithm for when the action set is a discretization of a one-dimensional convex set.

\begin{theorem}[Informal version of Theorem~\ref{thm:disc-main}]\label{thm:discretized_swap_regret_informal}
Let $K \subseteq [0, 1]$ be a discrete bounded set such that for each $x \in [0, 1]$ there exists a $y \in K$ such $|x-y| \leq \epsilon$. If the losses are Lipschitz and $\alpha$-strongly convex, then there exists an algorithm that incurs at most $O(\sqrt{\epsilon T} + \epsilon^{-1} \log T)$ swap regret (against the best map $\phi: K \rightarrow K$).
\end{theorem}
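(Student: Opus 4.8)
The plan is to adapt the modified Blum--Mansour construction sketched in the ``algorithmic techniques'' paragraph to the one-dimensional discretized setting, exploiting strong convexity to get logarithmic external regret from the sub-algorithms. First I would set up the skeleton: instantiate one external-regret sub-algorithm $\cA_y$ for each point $y \in K$ (so $|K| = O(1/\eps)$ of them). In each round $t$, each $\cA_y$ proposes a distribution $\bq_{t,y} \in \Delta(K)$; stacking these into a row-stochastic matrix gives a Markov chain on $K$, and the learner plays its stationary distribution $\x_t$. The standard Blum--Mansour identity then decomposes $\SR$ against the best $\phi: K \to K$ into $\sum_{y \in K} R_y$, where $R_y$ is the external regret of $\cA_y$ on the reweighted loss sequence in which round $t$ carries weight $\x_t[y]$ and the loss of playing $z \in K$ is $\ell_t(z)$. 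Since $\sum_y \x_t[y] = 1$, the weights across all sub-algorithms sum to $T$.

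The key modification is the choice of the sub-algorithm $\cA_y$: instead of running a generic experts algorithm over the $|K|$ points (which would cost $\tilde O(\sqrt{|K|\,T}) = \tilde O(\sqrt{T/\eps})$), I would have each $\cA_y$ run a one-dimensional online convex optimization algorithm over the \emph{continuous} interval $[0,1] \supseteq K$ — specifically strongly-convex online gradient descent with step sizes $\eta_t = \Theta(1/(\alpha t))$ on the weighted losses — and then deterministically round its continuous iterate $u_{t,y} \in [0,1]$ to the nearest point of $K$, incurring at most $\eps$ rounding error in action-space and hence $O(\eps)$ rounding error per round in loss (by Lipschitzness). The strongly-convex OGD guarantee gives external regret $O(\alpha^{-1}\log(W_y))$ on a weighted instance of total weight $W_y := \sum_t \x_t[y]$; being careful, the weighted strongly-convex regret bound is $O(\alpha^{-1}\log(\max(W_y,1)))$ plus lower-order terms, and summing the rounding error over the $T$ rounds (distributed across the $y$'s with total weight $T$) contributes $O(\eps \cdot T)$ — wait, that is too large, so the rounding error must instead be charged as $O(\eps \cdot \sqrt{W_y \cdot T})$-style terms or absorbed via a smarter argument. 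This is the crux; see below.

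Putting the pieces together: $\SR \le \sum_{y\in K} R_y = \sum_{y \in K}\bigl(O(\alpha^{-1}\log T) + (\text{rounding contribution}_y)\bigr)$. The $\log T$ terms sum to $O(|K|\,\alpha^{-1}\log T) = O(\eps^{-1}\log T)$, matching the second term in the target bound. For the rounding contribution, the right accounting is: a rounding error of $O(\eps)$ per round in the sub-instance $\cA_y$, weighted by $\x_t[y]$, accumulates additively to a total rounding error of $O(\eps) \cdot \sum_{t,y}\x_t[y] = O(\eps T)$ — which is again too big. The resolution I would pursue is that rounding error need not be charged naively: when $\ell_t$ is $\alpha$-strongly convex and $O(1)$-Lipschitz, an $\eps$-perturbation of the action changes the loss by $O(\eps)$ only in the worst case, but in the OGD potential argument the rounding perturbation can be folded into the gradient step as a bounded bias, and strong convexity lets one show the cumulative effect on regret is $O(\sqrt{\eps^2 \cdot W_y \cdot T/\alpha})$ rather than $O(\eps W_y)$; summing $\sqrt{W_y T}$ over $y$ with $\sum_y W_y = T$ and $|K| = O(1/\eps)$ terms, Cauchy--Schwarz gives $\sum_y \sqrt{W_y T} \le \sqrt{T}\sqrt{|K|}\sqrt{\sum_y W_y} = \sqrt{T}\cdot\sqrt{1/\eps}\cdot\sqrt{T}$, still $\sqrt{T^2/\eps}$ — so the honest bound instead comes from noting that only $O(1/\eps)$ distinct points exist and the rounding interacts with strong convexity to give a per-sub-algorithm term $O(\eps W_y + \alpha^{-1}\log T)$ which sums to $O(\eps T)$; to get $\sqrt{\eps T}$ one must additionally balance the discretization granularity, i.e.\ the statement's $\sqrt{\eps T}$ presumably arises after re-optimizing $\eps$ against a finer internal net.

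The step I expect to be the main obstacle, and the one I would spend the most care on, is precisely this rounding-error accounting: showing that deterministically snapping each sub-algorithm's continuous OGD iterate to the $\eps$-net of $K$ costs only $O(\sqrt{\eps T})$ in aggregate swap regret (not $O(\eps T)$). I believe the mechanism is that the rounding can be implemented \emph{randomly} — rounding $u_{t,y} \in [0,1]$ to an adjacent pair of net points with probabilities making the expectation exactly $u_{t,y}$ — so that in expectation the sub-algorithm plays its unrounded iterate, the rounding contributes only a \emph{variance} term of order $\eps^2$ per round which, under strong convexity, dampens to $O(\eps^2/\alpha)$ per round and hence $O(\eps^2 T/\alpha)$ total; setting this equal to the $\sqrt{\eps T}$ target pins down the regime. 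Formalizing which of these two rounding implementations (deterministic-with-bias vs.\ random-unbiased) actually yields the clean $\tilde O(\sqrt{\eps T} + \eps^{-1}\log T)$ bound, and verifying the weighted strongly-convex OGD regret bound survives the reweighting by $\x_t[y]$ (whose weights are chosen adversarially via the stationary-distribution computation), is where the real work lies; everything else is the standard Blum--Mansour bookkeeping.
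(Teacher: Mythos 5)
Your Blum--Mansour skeleton and your identification of the crux (the rounding-error accounting) are both correct, and your instinct to use random \emph{unbiased} rounding to the two adjacent net points is exactly the rounding procedure the paper uses. But the argument as you leave it does not close, and the way you propose to close it gives the wrong bound. If each sub-algorithm runs OGD on the original strongly convex losses $\ell_t$ and you then round its iterate $u$ unbiasedly to the adjacent net points, the realized expected loss is $\E_{s\sim H(u)}[\ell_t(s)] \geq \ell_t(u)$ by Jensen, with a gap of order $\eps^2$ per round (for strongly convex $\ell_t$ this gap is genuinely $\Theta(\eps^2)$, not something that ``dampens''). Summed over $T$ rounds this is $\Theta(\eps^2 T)$, which for $\eps \gg T^{-1/3}$ dominates $\sqrt{\eps T}$ --- this is precisely the baseline curve ($T^{1/3} + \eps^2 T$ in Figure~\ref{fig:discretized_calibration_regret}) that the theorem is designed to beat. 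Your closing remark that ``setting this equal to the $\sqrt{\eps T}$ target pins down the regime'' concedes the point: the bound is only recovered in a restricted range of $\eps$, not in general.

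The two ideas you are missing are the following. First, the sub-algorithms should be fed not $\ell_t$ but its \emph{piecewise linearization} $\del_t$ (the lower envelope of $\{(s,\ell_t(s)) : s \in \Ke\}$, Definition~\ref{def:piece-lin}). Since $\del_t$ is linear on each interval between adjacent net points and agrees with $\ell_t$ on $\Ke$, the unbiased rounding is then \emph{exactly} lossless: $\E_{s\sim H(u)}[\ell_t(s)] = \del_t(u)$ (Lemma~\ref{lemma:KH}), so the $\delta T$ term in the template vanishes entirely, and the benchmark $\phi: K\to K$ is unaffected because $\del_t=\ell_t$ on $K$. Second, this fix creates a new problem you did not anticipate: $\del_t$ is no longer $\alpha$-strongly convex (it is flat on scales below $\eps$), only $(\alpha,\eps)$-nearly-strongly convex (Lemma~\ref{lemma:linearized_is_nearly_strongly_convex}), so the $O(\alpha^{-1}\log W_y)$ external-regret bound you invoke is unavailable. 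The paper's second key contribution is a new OGD variant whose learning rate decays like $1/G_t$ early and switches to $1/\sqrt{G_t}$ at a threshold depending on $\eps$, achieving scaled external regret $O(\eps L\sqrt{G_T} + \tfrac{L^2}{\alpha}\log G_T)$ for such losses (Theorem~\ref{thm:GDK}). Summing over the $O(1/\eps)$ sub-algorithms, Cauchy--Schwarz on the $\sqrt{G_T}$ terms (with $\sum_y W_y = T$) gives $\eps L\sqrt{(1/\eps)\cdot T} = L\sqrt{\eps T}$, and the log terms give $O(\eps^{-1}\log T)$ --- which is where the $\sqrt{\eps T}$ actually comes from, rather than from any re-optimization of an internal net.
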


The main idea behind the proof of Theorem~\ref{thm:discretized_swap_regret_informal} is to treat the problem as a full swap regret problem by replacing each loss function by its \emph{piecewise linearization}: the continuation of the original loss function $\ell$ from the discrete domain $K$ to the extended domain $[0, 1]$ by taking the lower convex hull of the set of points $\{(x, \ell(x))\}_{x \in K}$. While the original losses are strongly convex, these piecewise linearizations are not, and so we cannot immediately apply the results of Theorem \ref{thm:informal_summary}. However, they are what we call \emph{$(\alpha, \epsilon)$-nearly-strongly-convex}, i.e. they satisfy the strong-convexity property for points that are at least $\epsilon$-apart. We develop a new external regret algorithm for nearly-strongly convex functions (that may be of independent interest); using this algorithm as the sub-routine in Blum-Mansour allows us to recover Theorem~\ref{thm:discretized_swap_regret_informal}.

\subsection{Related work}

\paragraph{Swap regret} There is a large line of recent work on designing low-swap regret algorithms and understanding their game-theoretic guarantees \citep{braverman2018selling, deng2019prior, deng2019strategizing, camara2020mechanisms, mansour2022strategizing, cai2023selling, brown2024learning, haghtalab2024calibrated}. Almost all of these results are constrained to the discrete setting for low dimension. Very recently  \cite{dagan2023external} and \cite{peng2023fast} presented algorithms that achieve full swap regret of $O(T/\log T)$, independent of $d$. This work improves on these results in the regime $d = o(\log(T)/\log\log(T))$. \cite{roth2024forecasting} and \cite{hu2024calibrationerrordecisionmaking} study the problem of designing forecasts such that any downstream agent incurs low swap regret; the problem faced by a single downstream agent (with potentially many actions but where the payoff only depends on a low-dimensional outcome) can be interpreted as a swap-regret minimization problem in a structured game. 

\paragraph{$\phi$-regret} Gordon et al. \citep{GordonGreenwaldMarks2008} introduced a generalization of swap regret called $\phi$-regret, where instead of competing only with standard swaps you compete with all transformation functions $\phi:\cK\rightarrow \cK$ belonging to a set $\Phi$. Our full swap regret corresponds to the extreme case of this where $\Phi$ contains every such function. \cite{mansour2022strategizing}, \cite{farina2024polynomial}, and \cite{daskalakis2024efficient} study a variant of $\phi$-regret called $\LSR$ that competes only against linear maps $\phi$. This is a much weaker notion than $\FSR$ and does not suffice for obtaining bounds on calibration error.  Other variants of $\phi$-regret minimization have found applications in designing learning algorithms for extensive-form games \citep{celli2021decentralized, bai2022efficient, zhang2024efficient}. 

\paragraph{Calibration} There is a well-established connection in the literature between calibration error and swap-regret \citep{cesa2006prediction} (in fact the very first low swap-regret algorithms worked via responding to calibrated loss estimates, \cite{foster1997calibrated}). Designing online forecasters with good calibration guarantees is a problem of major interest \citep{brier1950verification,murphy1972scalar,murphy1973new,foster1998asymptotic, kleinberg2023u, gopalan2022omnipredictors}, with the optimal online bounds for $\ell_1$ calibration a major open problem \citep{qiao2021stronger}. Recent work of \cite{dagan2024breakingt23barriersequential} improve upon the bounds for $\ell_1$ calibration from \cite{qiao2021stronger}. However, our work focuses on $\ell_2$ calibration, as the $\ell_1$ calibration loss cannot be written as the swap regret in some game. 

\paragraph{Structured games} Our definition of $d$-dimensional structured games is similar in many ways to the definition of convex games (also appearing under the names ``polyhedral games'' or ``polytope games'' in the literature), where both players pick actions in a convex set and obtain utility given by a bilinear function of both players' actions \citep{GordonGreenwaldMarks2008, chakrabarti2024efficient, mansour2022strategizing}. We use the term ``structured game'' throughout this paper to emphasize the fact that when computing the swap regret, the swap functions act on the pure strategies in the associated normal-form game, not on the embeddings (i.e., we think of such a game as a large normal-form game with additional structure).

\section{Preliminaries}

\subsection{Structured Games and Swap Regret Minimization}

A normal-form game with $n$ actions for the first player (the Learner) and $n'$ actions for the second player (the Adversary) is a $d$-dimensional structured game if:

\begin{itemize}
    \item The utility $u_L(i, j)$ the Learner receives when the Learner plays pure action $i \in [n]$ and the Adversary plays pure action $j \in [n']$ can be expressed in the form $\langle v_i, w_j \rangle$ for some  $v_1, v_2, \dots, v_n \in \mathbb{R}^{d}$ and $w_1, w_2, \dots, w_{n'} \in \mathbb{R}^{d}$.

    \item The utility $u_A(i, j)$ the Adversary receives when the Learner plays pure action $i \in [n]$ and the Adversary plays pure action $j \in [n']$ can be expressed in the form $\langle v'_i, w'_j \rangle$ for some (potentially different) $v'_1, v'_2, \dots, v'_n \in \mathbb{R}^{d}$ and $w'_1, w'_2, \dots, w'_{n'} \in \mathbb{R}^{d}$.
\end{itemize}

In general, most of our results pertaining to structured games focus on minimizing the regret of only the Learner, and so only the structure of the Learner's payoff $u_L$ is relevant (the only result where the utility $u_A$ of the Adversary is also relevant is Corollary \ref{cor:correlated-equilibria}, on computation of correlated equilibria). 

We will write $\cK = \conv(\{v_1, v_2, \dots, v_n\})$ be the convex hull of the embeddings of the Learner's actions, and let $\cK' = \conv(\{w_1, w_2, \dots, w_{n'}\})$ be the convex hull of the embeddings of the Adversary's actions. For any mixed strategy $\bp \in \Delta_{n}$, we will let $\pi(\bp) \in \cK$ be the corresponding embedding provided by $\pi(\bp) = \sum_{i=1}^{n} p_iv_i$; note that this is a surjective map from $\Delta_{n}$ to $\cK$. Likewise, for any $\bq \in \Delta_{n'}$ we will let $\pi'(\bq) = \sum_{i=1}^{n'}q_iw_i \in \cK'$. We will further assume that all embeddings have bounded $\ell_2$-norm; that is, $\norm{\x} \leq 1$ for any $\x \in \cK$ and $\norm{\y} \leq 1$ for any $\y \in \cK'$.

To obtain efficient learning algorithms in structured games from efficient full swap regret minimization algorithms, we will also assume that the Learner has access to an efficient \emph{convex decomposition oracle}, which takes any $x \in \cK$ and returns the value of $\pi^{-1}(x) \in \Delta_n$ (for some arbitrary but fixed inverse map $\pi^{-1}(x)$) in time $\poly(d)$. Note that such oracles generally exist for computationally ``nice'' embeddings. For example, in any case where the Learner possesses a membership oracle for the set $\cK$, the Learner can implement such a convex decomposition oracle by following the standard construction in the proof of Caratheodory's theorem (see e.g. Theorem 6.5.11 in \cite{grotschel2012geometric}). Even when an efficient  oracle does not exist, these operations are implementable in time $O(n, n')$, and so in the worst-case this only adds a $\poly(n, n')$ factor to the run-time.

\paragraph{Regret minimization}
We are concerned in designing learning algorithms for the Learner in cases where a structured game is played repeatedly for $T$ rounds. A learning algorithm $\cA$ for the Learner is a collection of functions $\cA_t$ (one for every $t \in [T]$) that map the first $t-1$ mixed actions of the Adversary $\bq_1, \bq_2, \dots, \bq_{t-1} \in \Delta_{n'}$ to the Learner's mixed action $\bp_{t} \in \Delta_{n}$ at round $t$. In repeated structured games, we will assume that after each round, both players receive not only the action ($\bp$ or $\bq$) played by their opponent, but also the corresponding embedding ($\pi(\bp)$ or $\pi'(\bq)$) of this action\footnote{This assumption is made for convenience of expressing time complexity bounds in situations where $n$ or $n'$ is much bigger than $d$. Without this assumption, the player might have to spend $O(nd)$ or $O(n'd)$ time simply reading the other player's action.}.

We evaluate the performance of a learning algorithm $\cA$ in a repeated game via various worst-case counterfactual regret notions. The main regret notion of interest in this paper is the Learner's swap regret. The swap regret of the learner in a given transcript of the game (expressed via sequences $\bp_1, \bp_2, \dots, \bp_{T}$ and $\bq_1, \bq_2, \dots, \bq_T$ of Learner and Adversary actions) is given by:

$$\SwapReg(\bp_{1:T}, \bq_{1:T}) = \max_{\phi: [n] \rightarrow [n]} \sum_{t=1}^{T}\left(u_L(\phi(\bp_t), \bq_t) - u_L(\bp_t, \bq_t)\right),$$

\noindent
where the swap function $\phi:[n]\rightarrow[n]$ is linearly extended to act on $\bp_t \in \Delta_{n}$.

\subsection{Online Convex Optimization}

\paragraph{Convexity} For a convex set $\cK \subseteq \RR^d$, a loss function  $\ell: \cK \to \RR$ is convex if for all $x \in \cK$ there exists a subgradient $\nabla \ell(x)$ such that $\ell(y)-\ell(x)-\nabla \ell(x)^\top(y-x) \geq 0$ for all $y \in \cK$. Whenever we can strengthen it to a quadratic lower bound, we call the function $\alpha$-strongly-convex: $\ell(y)-\ell(x)-\nabla \ell(x)^\top(y-x) \geq \frac{\alpha}{2}\norm{y-x}^2$  for all $x,y \in \cK$. If the inequality holds in the opposite direction we say that the loss function is $\beta$-smooth:  $\ell(y)-\ell(x)-\nabla \ell(x)^\top(y-x) \leq \frac{\beta}{2}\norm{y-x}^2$.

\paragraph{(Scaled) External Regret Minimization} One important primitive in our algorithms for full swap regret minimization is a variant of the standard problem of minimizing external regret in online convex optimization where the regret in different rounds is scaled by an adversarially provided value. In the scaled version of online learning over a convex set $\cK$ for a family of loss functions $\cL$, the learner chooses an action\footnote{Recall that in the earlier definition of online learning, we allowed the learner to choose a distribution over actions $\x_t \in \Delta(\cK)$. This is essential for swap regret but unnecessary for external regret, since playing $\E[\x_t] \in \cK$ always results in at most the same external regret as playing the randomized strategy $\x_t$.} $x_t \in \cK$ and then an adversary reveals a loss $\ell_t \in \cL$ (as usual) and a scale $g_t \in [0,1]$. The external regret is scaled in each step: 
$\Reg := \max_{x^* \in\cK} \sum_{t=1}^T g_t(\ell_t(x_t) - \ell_t(x^*))$ and we expect the regret bound to be given as a function of $G_T = \sum_t g_t$ instead of $T$.

The standard algorithm for this type of problem is Online Gradient Descent (OGD), which starts at an arbitrary point $x_1 \in \cK$ and updates the estimate according to $x_{t+1} \gets \Pi_\cK \p{x_t - \etat g_t \nabla \lt(x_t)}$ where $\Pi_\cK$ is the $\ell_2$-projection into the convex set $\cK$ and $\eta_t$ is a learning rate to be specified in each variation. The following bounds can be obtained from adapting the standard algorithms to the scaled setting (see \cite{hazan2022introduction} for example). We include a full proof for strongly-convex losses in the appendix for completeness:
\begin{itemize}
\item for convex and $L$-Lipschitz losses, OGD has $\Reg \leq  L \sqrt{2G_T}$ 
\item for $\alpha$-strongly-convex and $L$-Lipschitz losses, OGD has  $\Reg \leq \frac{L^2}{2 \alpha} (\log(G_T+1) + 1)$ (Lemma \ref{lemma:GDS})
\end{itemize}

\subsection{Convex Geometry, Nets, and Triangulations}

In order to get low swap regret, it will be important that our learning algorithms play actions supported over a (relatively small) discretization of the full convex set. In this section we review some preliminaries from computational geometry that will be helpful for establishing these algorithms later.

\begin{definition}[$\epsilon$-net]\label{def:ec}
    For a convex set $\cK \subseteq \RR^d$, define an \emph{$\epsilon$-net} $\Ke$ of $\cK$ to be a finite subset of $\cK$ satisfying, for all $x \in \cK$, there exists a point $\Ke(x) \in \Ke$ such $\norm{x-\Ke(x)} \leq \epsilon$.
\end{definition}

For any bounded, constant diameter, $d$-dimensional convex set $K$, we can find an $\epsilon$-net of $K$ with at most $O(\eps^{-d})$ points.

\begin{lemma}[Chapter 4.2 of \cite{vershynin2018high}]\label{lem:eps-net}
For all convex sets $\cK \subseteq \mathbb{R}^d$ contained in the unit ball, there exists an $O(\epsilon)$-net of $\cK$ with $|\Ke| = O(\epsilon^{-d})$. 
\end{lemma}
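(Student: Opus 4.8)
The plan is to use the standard volume-packing argument for covering numbers of convex bodies. First I would construct the net greedily: let $\Ke \subseteq \cK$ be a \emph{maximal} subset of $\cK$ whose points are pairwise at Euclidean distance strictly greater than $\epsilon$ (such a set exists by a straightforward greedy construction, and is necessarily finite by the volume bound derived below). I claim $\Ke$ is an $\epsilon$-net of $\cK$: given any $x \in \cK$, if $\norm{x-y} > \epsilon$ for every $y \in \Ke$, then $\Ke \cup \set{x}$ would again be a pairwise-$\epsilon$-separated subset of $\cK$, contradicting maximality; hence there is some $y \in \Ke$ with $\norm{x-y} \le \epsilon$. In particular $\Ke$ is an $\epsilon$-net, which is certainly an $O(\epsilon)$-net in the sense of Definition~\ref{def:ec}.

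Next I would bound $|\Ke|$. Since the points of $\Ke$ are pairwise more than $\epsilon$ apart, the open Euclidean balls $B(y, \epsilon/2)$ for $y \in \Ke$ are pairwise disjoint (any common point would be within $\epsilon/2 + \epsilon/2 = \epsilon$ of two net points). Moreover each such ball is contained in the ball of radius $1 + \epsilon/2$ about the origin, because $y \in \cK$ lies in the unit ball. Comparing $d$-dimensional volumes (the volume of a Euclidean ball of radius $r$ in $\R^d$ is $r^d$ times the volume of the unit ball), disjointness and containment give
\[
    |\Ke| \cdot \left(\frac{\epsilon}{2}\right)^d \;\le\; \left(1 + \frac{\epsilon}{2}\right)^d ,
\]
so that $|\Ke| \le (1 + 2/\epsilon)^d$.

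Finally I would split on the magnitude of $\epsilon$. For $\epsilon \le 1$ — the regime of interest — we get $(1 + 2/\epsilon)^d \le (3/\epsilon)^d = O(\epsilon^{-d})$, where the hidden constant depends only on $d$. For $\epsilon > 1$, the set $\cK$ has diameter at most $2$, so any single point of $\cK$ already forms a $2$-net, hence an $O(\epsilon)$-net, and $1 = O(\epsilon^{-d})$ trivially. This completes the argument.

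As for the main obstacle: there essentially isn't one — this is a textbook fact (the excerpt itself cites \cite{vershynin2018high}). The only points requiring a little care are (i) confining the packing balls $B(y,\epsilon/2)$ to a bounded region so that the volume comparison applies, which is exactly where the hypothesis $\cK \subseteq B(0,1)$ is used, and (ii) handling the large-$\epsilon$ regime separately, which is the reason the lemma only asks for an $O(\epsilon)$-net rather than an $\epsilon$-net on the nose.
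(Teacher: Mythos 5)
Your proof is correct and is exactly the standard packing/covering argument from Chapter 4.2 of Vershynin that the paper cites in lieu of a proof: a maximal $\epsilon$-separated subset of $\cK$ is an $\epsilon$-net by maximality, and the volume comparison of disjoint radius-$\epsilon/2$ balls inside $B(0,1+\epsilon/2)$ gives $|\Ke| \le (1+2/\epsilon)^d = O(\epsilon^{-d})$. Nothing is missing; the net points lie in $\cK$ as Definition~\ref{def:ec} requires, and the large-$\epsilon$ case is handled.
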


For some of our algorithms, we will want a discretization of our set with a stronger guarantee than just being an $\eps$-net; we also want the ability to decompose any point in the original set as a convex combination of nearby points. This motivates the following definition of an $\epsilon$-triangulation.

\begin{definition}[$\epsilon$-triangulation]\label{def:et}
    For a convex set $\cK \subseteq \RR^d$, define an \emph{$\epsilon$-triangulation} $\Ke$ of $\cK$ to be a finite subset of $\cK$ satisfying
    \begin{itemize}
        
        \item \sloppy Every point in $\cK$ is within distance $\eps^2$ of some point in $\conv(\Ke)$ (i.e., $\sup_{x \in \cK} \norm{x - \Pi_{\conv(\Ke)}(x)} \leq \epsilon^2$).
        \item Every point in $\conv(\Ke)$ is contained inside a simplex with vertices in $\Ke$ and diameter at most $2\eps$. That is, for all $x \in \conv(\Ke)$, there exists a subset $\Ke(x) \subseteq \Ke$ such that
        \begin{itemize}
            \item $|\Ke(x)| \leq d+1$
            \item $x \in \conv(\Ke(x))$
            \item For all $s \in \Ke(x)$, $\norm{x-s} \leq \epsilon$
        \end{itemize}
    \end{itemize}
    where $\conv(S)$ refers to the convex hull of $S$.
\end{definition}

\begin{lemma}\label{lemma:triangulation}
    For any convex set $\cK \subseteq \RR^d$ contained within the unit ball, there exists an $O(\epsilon)$-triangulation $\Ke$ of $\cK$ with $|\Ke| = O\p{\sqrt{d}\p{\frac{1}{\epsilon}}^d}$.
\end{lemma}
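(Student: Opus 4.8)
## Proof Proposal for Lemma~\ref{lemma:triangulation}

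The plan is to reduce the construction of an $\epsilon$-triangulation to two standard ingredients: a sufficiently fine $\epsilon$-net of $\cK$ (to control the number of points) and a triangulation (simplicial decomposition) of the convex hull of that net (to guarantee that every point of $\conv(\Ke)$ sits inside a small-diameter simplex with vertices in $\Ke$). First I would invoke Lemma~\ref{lem:eps-net} to obtain an $(\epsilon/c)$-net $N$ of $\cK$ for a suitable constant $c$; by that lemma $|N| = O((c/\epsilon)^d) = O(\sqrt{d}\,\epsilon^{-d})$ once we track the constant hidden in the net lemma (the $\sqrt{d}$ factor comes from the standard volumetric bound $(3/\epsilon)^d$ type estimate, which for the unit ball gives a $\mathrm{vol}(B_d)$-dependent constant that is $O(\sqrt d)$ after simplification — this is where the $\sqrt d$ in the statement originates). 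Since $N \subseteq \cK$, we automatically have $\conv(N) \subseteq \cK$, and since $N$ is an $(\epsilon/c)$-net, every point of $\cK$ is within $\epsilon/c \le \epsilon^2$ of some point of $N \subseteq \conv(N)$ (after we also ensure $\epsilon/c \le \epsilon^2$, i.e. restrict attention to $\epsilon$ bounded away from... actually one should instead take the net at scale $\epsilon^2/c$ to be safe, which only changes the count to $O(\epsilon^{-2d})$ — see the obstacle paragraph below). This handles the first bullet of Definition~\ref{def:et}.

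Next I would triangulate $\conv(N)$. Since $\conv(N)$ is a polytope with vertex set contained in $N$, it admits a simplicial subdivision all of whose $0$-cells are points of $N$ (e.g.\ a pulling/placing triangulation, which introduces no new vertices). For any $x \in \conv(N)$, let $\Ke(x)$ be the vertex set of a simplex of this subdivision containing $x$; then $|\Ke(x)| \le d+1$ and $x \in \conv(\Ke(x))$, giving the first two sub-bullets of the second condition. The remaining requirement — that every vertex $s$ of this simplex satisfies $\|x - s\| \le \epsilon$ — is \emph{not} automatic from a generic triangulation, because a pulling triangulation can produce long thin simplices spanning the whole polytope. So the real work is to produce a triangulation whose simplices all have small diameter.

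The hard part, and the main obstacle, is exactly this \textbf{bounded-diameter triangulation}. The clean fix is to not triangulate $\conv(N)$ directly but to first overlay a fine grid: take the net $N$ to actually be (the intersection with $\cK$ of) a scaled integer lattice $\delta\mathbb{Z}^d$ with $\delta = \Theta(\epsilon/\sqrt d)$, together with the vertices of $\cK$, so that the standard Kuhn (Freudenthal) triangulation of $\delta \mathbb{Z}^d$ restricted to a neighborhood of $\cK$ furnishes a simplicial complex with all simplices of diameter $O(\delta \sqrt d) = O(\epsilon)$, vertices on the lattice, and covering all of $\cK$ up to the lattice spacing. One then sets $\Ke$ to be the lattice points within distance $O(\epsilon)$ of $\cK$ (projected back onto $\cK$ if strict containment $\Ke \subseteq \cK$ is required, which perturbs diameters by only $O(\epsilon^2)$). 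The count is $|\Ke| = O((\sqrt d/\epsilon)^d \cdot \mathrm{vol}(\cK_{+\epsilon})) = O(\sqrt d\, \epsilon^{-d})$ since $\cK$ lies in the unit ball. I would also double-check the interaction between the two bullets of Definition~\ref{def:et}: the first asks for $\epsilon^2$-closeness to $\conv(\Ke)$, which forces the lattice spacing near the \emph{boundary} of $\cK$ to effectively be $\epsilon^2$ rather than $\epsilon$; the standard resolution is that the net/triangulation only needs this fine resolution, and the count $O(\epsilon^{-d})$ (not $\epsilon^{-2d}$) survives because the boundary layer has thickness $\epsilon^2$ and $(d{-}1)$-dimensional measure $O(1)$, contributing $O(\epsilon^{-2}\cdot\epsilon^{-(d-1)}) = O(\epsilon^{-(d+1)})$ — a lower-order term. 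I would state the lattice-triangulation facts as a self-contained sublemma (Kuhn triangulation of the cube has $d!$ simplices each of diameter $\sqrt d$ times the side length) and then assemble the four bullets of Definition~\ref{def:et} mechanically, keeping careful track that all hidden constants depending on $d$ are absorbed into the claimed $O(\sqrt d)$ prefactor.
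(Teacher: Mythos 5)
Your high-level plan (discretize, then triangulate the convex hull with small simplices) is reasonable, and you correctly identify that the bounded-diameter triangulation is the crux; but the construction you sketch has two genuine gaps, and it diverges substantially from the paper's argument. First, the boundary. An interior lattice at spacing $\Theta(\eps/\sqrt{d})$ only brings $\conv(\Ke)$ to within $\Theta(\eps)$ of the boundary of $\cK$, not within $\eps^2$ as the first bullet of Definition~\ref{def:et} demands, and your patch for this does not add up: meshing a boundary layer at spacing $\eps^2$ costs $\Theta\p{\eps^{-2(d-1)}}$ points (which dominates $\eps^{-d}$ for $d\ge 3$), and the count $O(\eps^{-(d+1)})$ you write down is in any case \emph{larger} than $\eps^{-d}$, not ``a lower-order term.'' Obtaining an $\eps^2$-Hausdorff approximation of a convex body with only $O(\eps^{-(d-1)})$ vertices is exactly the content of the Bronshteyn--Ivanov polytope-approximation theorem (Theorem~\ref{thm:bi}), which exploits the quadratic sagitta of chords of a convex body; a volumetric net count will not give it to you. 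Second, even granting the boundary vertices, you must still mesh the collar between the outermost complete Kuhn cells and $\conv(\Ke)$ into simplices of diameter $O(\eps)$ with vertices in $\Ke$; this gluing step is precisely where generic triangulations produce long thin simplices, and you leave it unaddressed.

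The paper avoids both problems with a different device: it applies Theorem~\ref{thm:bi} once to $\cK$ itself (with $\delta=\eps^2$) to handle the boundary, and then a second time to the truncated epigraph of $f(x)=\norm{x}^2$ in $\RR^{d+1}$ (again with $\delta=\eps^2$), taking $\Ke$ to be the projection of the resulting vertex set, which has size $O(\sqrt{d}\,(\eps^2)^{-d/2})=O(\sqrt{d}\,\eps^{-d})$. The lower envelope $g$ of the lifted polytope satisfies $g-f\le O(\eps^2)$, and the identity $g(x)-f(x)=\sum_i\lambda_i\norm{x_i-x}^2$ for $x=\sum_i\lambda_i x_i$ on a face forces every face of the induced subdivision to have diameter $O(\eps)$ --- so the small-diameter simplices, the boundary accuracy, and the point count all come out of polytope approximation in one higher dimension, with no explicit meshing. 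To rescue your lattice approach you would need to prove the collar-meshing sublemma and replace your boundary count with a Bronshteyn--Ivanov-type bound; as written the proof does not go through.
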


This follows from a result of \cite{bronshteyn1975approximation} with some modifications. We provide a full proof in Appendix \ref{app:triangulation-proof}.

Lastly, we introduce the notation $\text{dist}(x,\cP)$ to refer to the Euclidean, projective distance of a vector $x \in \R^d$ to a convex set $\cP \subseteq \R^d$.

\section{From Structured Games to Full Swap Regret}

In this section, we will  discuss how to formally reduce the problem of minimizing swap regret in a structured game to the problem of minimizing full swap regret in a related online linear optimization instance (thus establishing Theorem \ref{thm:small-d-intro}). We begin by showing that we can bound swap regret in a $d$-dimensional structured game by the full swap regret of a learning algorithm facing linear losses.

\begin{lemma}\label{lem:structure-to-full}
\sloppy{Assume there is an learning algorithm $\cA$ that guarantees $\FSR \leq R(d, T)$ against any sequence of $T$ $d$-dimensional $O(1)$-Lipschitz \emph{linear} losses with domain $\cK$. Then, for any $d$-dimensional structured game $G$, there exists a learning algorithm $\cA'$ for the Learner which incurs at most $R(d, T)$ swap regret after repeatedly playing $G$ for $T$ rounds. Moreover, if the algorithm $\cA$ has a per-round time complexity of $\tau(d, T)$ and the Learner has access to an efficient convex decomposition oracle, then the learning algorithm $\cA'$ has a per-round time complexity of $\poly(d)\tau(d, T)$.}
\end{lemma}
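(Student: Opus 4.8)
The plan is to run $\cA$ on the embedding polytope $\cK = \conv(\{v_1,\dots,v_n\})$ and to translate back and forth between $\cK$ and the simplex $\Delta_n$ of mixed actions using the embedding map $\pi$ together with the convex decomposition oracle $\pi^{-1}$. Concretely, at round $t$ the algorithm $\cA$ outputs a finitely-supported $\x_t \in \Delta(\cK)$; our algorithm $\cA'$ replaces every support point $s$ of $\x_t$ by the mixed action $\pi^{-1}(s) \in \Delta_n$ and plays $\bp_t := \E_{s\sim\x_t}[\pi^{-1}(s)] \in \Delta_n$. After the Adversary reveals $\bq_t$, the Learner also receives the embedding $y_t := \pi'(\bq_t) \in \cK'$ (this is given for free by assumption, and $\norm{y_t}\le 1$), and we feed $\cA$ the loss $\ell_t(x) := -\inp{x, y_t}$, which is linear and $1$-Lipschitz, hence an admissible loss for $\cA$.

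First I would record the two identities that drive the reduction. Since $\pi$ is linear and $\pi(\pi^{-1}(s)) = s$, we get $\pi(\bp_t) = \E_{s\sim\x_t}[s]$, so by bilinearity of $u_L$,
\[
u_L(\bp_t,\bq_t) = \inp{\pi(\bp_t),\, y_t} = \E_{s\sim\x_t}\!\left[\inp{s,y_t}\right] = -\ell_t(\x_t).
\]
Next, given any swap function $\phi:[n]\to[n]$ (linearly extended to act on $\Delta_n$), I would define $\tilde\phi:\cK\to\cK$ by $\tilde\phi(s) := \pi(\phi(\pi^{-1}(s)))$; this is a well-defined map into $\cK$ (it need not be continuous, which is fine since $\FSR$ competes against \emph{all} functions $\cK\to\cK$). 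Because $\pi$, the linear extension of $\phi$, and $\ell_t$ are all linear, taking the expectation over $s\sim\x_t$ passes through all three, giving $\E_{s\sim\x_t}[\pi(\phi(\pi^{-1}(s)))] = \pi(\phi(\bp_t))$ and therefore
\[
\ell_t(\tilde\phi(\x_t)) = -\inp{\pi(\phi(\bp_t)),\, y_t} = -u_L(\phi(\bp_t),\bq_t).
\]
Combining the two displays, for every $\phi:[n]\to[n]$ we have $\sum_{t}\big(u_L(\phi(\bp_t),\bq_t) - u_L(\bp_t,\bq_t)\big) = \sum_{t}\big(\ell_t(\x_t) - \ell_t(\tilde\phi(\x_t))\big) \le \FSR \le R(d,T)$, and maximizing over $\phi$ yields $\SwapReg(\bp_{1:T},\bq_{1:T}) \le R(d,T)$.

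For the runtime: the support of $\x_t$ has size at most $\tau(d,T)$ (it is written down in that much time); each invocation of the convex decomposition oracle costs $\poly(d)$ and returns a point of $\Delta_n$ with support size $\le d+1$ (Carathéodory), so assembling the sparse representation of $\bp_t$ costs $\poly(d)\,\tau(d,T)$; the embedding $y_t$ is received for free; and forming $\ell_t$ and handing it to $\cA$ costs $\tau(d,T)$. Hence $\cA'$ has per-round complexity $\poly(d)\,\tau(d,T)$. The only step that requires genuine care is the correspondence between swap functions on pure actions and swap functions on $\cK$ — i.e., checking that the linear extension of $\phi$ on $\Delta_n$ commutes with $\pi$ and with the averaging over $\x_t$, which is exactly what lets us reinterpret the structured-game swap regret as the $\FSR$ quantity controlled by $\cA$. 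Lipschitzness of the induced losses and sparsity of the oracle output are routine, and I do not anticipate any further obstacle.
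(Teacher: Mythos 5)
Your proposal is correct and follows essentially the same route as the paper's proof: the same construction of $\cA'$ via $\bp_t = \E_{s\sim\x_t}[\pi^{-1}(s)]$ and $\ell_t(x) = -\langle x, y_t\rangle$, the same induced swap map $\tilde\phi = \pi\circ\phi\circ\pi^{-1}$, and the same domination of $\SwapReg$ by $\FSR$ together with the same runtime accounting. Your added explicitness about the commutation of $\pi$, the linear extension of $\phi$, and the expectation over $\x_t$ is a fine (and correct) elaboration of a step the paper treats more tersely.
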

\begin{proof}
Let $v_1, v_2, \dots, v_n \in \RR^d$ be the $d$-dimensional embeddings of the $n$ pure actions of the Learner, and let $\cK = \conv(\{v_1, v_2, \dots, v_n\})$. Recall that every mixed action $\bp \in \Delta_n$ projects to some point $\pi(\bp) \in \cK$ via $\pi(\bp) = \sum_{i=1}^{n} p_iv_i$ and that this projection map is surjective. Fix any specific inverse map $\pi^{-1}(\x)$ (i.e., a specific way of taking an embedding in $\cK$ and constructing a mixed strategy that corresponds to this embedding). 

Consider now an algorithm $\cA$ for online linear optimization over the set $\cK$. This algorithm produces a sequence of randomized actions $\x_t \in \Delta(\cK)$ in response to a sequence of linear losses $\ell_t \in \cL$. We will use it to construct an algorithm $\cA'$ for learning in the repeated structured game $G$ that works as follows:

\begin{enumerate}
    \item Receive a randomized action $\x_t \in \Delta(\cK)$ from $\cA$.
    \item Play the action $\bp_t = \E_{x_t \sim \x_t}[\pi^{-1}(x_t)] \in \Delta_n$ in the repeated game.
    \item Receive the action $\bq_t \in \Delta_{n'}$ played by the adversary in this round, and its embedding $y_t = \pi'(\bq_t) \in \mathbb{R}^{d}$. 
    \item Define the loss $\ell_t(x) : \cK \rightarrow \mathbb{R}$ via $\ell_t(x) = -\langle x, y_t\rangle$. Pass this loss to $\cA$. 
\end{enumerate}

We want to show that the swap regret incurred by $\cA'$ is at most the full swap regret incurred by $\cA$. To see this, note that for any swap function $\phi: [n] \rightarrow [n]$, the utility $u_L(\phi(\bp_t), \bq_t)$ can be written as

$$u_L(\phi(\bp_t), \bq_t) = -\ell_t(\E_{x_t \sim \x_t}[\pi(\phi(\pi^{-1}(x_t)))]) = -\ell_t(\tilde{\phi}(\x_t)),$$

\noindent
for the function $\tilde{\phi}(x): \cK \rightarrow \cK$ given by $\tilde{\phi}(x) = \pi(\phi(\pi^{-1}(x)))$. It follows that
\begin{align*}
\SwapReg(\bp_{1:T}, \bq_{1:T}) &= \max_{\phi: [n] \rightarrow [n]} \sum_{t=1}^{T}\left(u_L(\phi(\bp_t), \bq_t) - u_L(\bp_t, \bq_t)\right)\\
&\leq \max_{\tilde{\phi}:\cK\rightarrow \cK}\sum_{t=1}^{T} \left(\ell_t(\x_t) - \ell_t(\tilde{\phi}(\x_t))\right) = \FSR(\x_{1:T}, \ell_{1:T}).
\end{align*}

Finally, note that with access to a best response oracle, computing $\pi^{-1}(x_t)$ takes at most $\poly(d)$ time, and so computing $\bp_t$ takes at worst $\poly(d)\tau(d, T)$ time (the per-iteration time complexity of $\cA$ cannot be better than the sparsity of $\x$). Overall, $\cA'$ therefore has time complexity at most $\poly(d)\tau(d, T)$.
\end{proof}

Lemma~\ref{lem:structure-to-full} together with our results on full swap regret minimization (to be established in Section \ref{sec:main_full_swap_regret}) directly implies Theorem \ref{thm:small-d-intro} in the introduction (and in turn, Corollary \ref{cor:correlated-equilibria}). 

\begin{proof}[Proof of Theorem~\ref{thm:small-d-intro}]
Note that by the third row of Table~\ref{table:main-results}, there exists a learning algorithm $\cA$ that guarantees $\FSR \leq \tilde{O}(T^{(d+1)/(d+3)})$ running in per-round time $\poly(d, T)$. By Lemma~\ref{lem:structure-to-full}, this implies an algorithm for swap-regret minimization in $d$-dimensional structured games incurring swap-regret at most $\tilde{O}(T^{(d+1)/(d+3)})$ and running in polynomial time, as desired.
\end{proof}

\section{Algorithmic Template: Blum-Mansour for Convex Action Spaces}

All of our algorithms for full-swap-regret minimization rely on the core algorithmic idea of \cite{blum2007external}.  That idea is to create an instance of some external-regret minimizing algorithm for each action in the decision space, and at each round, output the stationary distribution of a Markov chain induced by the recommendations of these algorithms.  Unmodified, this algorithm cannot attain full-swap-regret over a convex set $\cK$, as it would require an external-regret minimizing instance for each of the infinitely many points in the set.  The key idea behind all of our full-swap-regret minimizing algorithms is to create $\Ke$: a discretization of the set $\cK$.

Instead of creating an external-regret-minimizing instance for every point in $\cK$, we only create instances for each of the finitely many points in $\Ke$.  For non-convex losses, our approach simply entails running Blum Mansour over this discretization (Algorithm \ref{alg:BMNS} in appendix \ref{appendix:bm}).  For strongly convex and nearly-strongly convex losses (the cases of greatest interest for calibration), we obtain significant improvement using a novel technique (Algorithm \ref{alg:BMCS} in appendix \ref{appendix:bm}).  Namely, at each time step, each external-regret-minimizing instance will produce a recommendation in $\cK$, rather than $\Delta(\Ke)$.  We would like the recommendations to be distributions $\Delta(\Ke)$ in order to induce a Markov chain over $\Ke$.  But, rather than having the external regret subroutines produce such recommendations directly, we have them recommend in $\cK$ and then apply a ``rounding procedure'' $\He: \cK \to \Delta(\Ke)$ that converts the recommendations to the desired form.

This algorithmic template can be described as follows. For each family of loss functions $\cL$ we will specify:
\begin{itemize}
\item a discretization $\Ke$: this can be an $\eps$-net or an $\eps$-triangulation of $\cK$.
\item a rounding procedure $\He: \cK \to \Delta(\Ke)$ which maps each point in the original set to a distribution of points in the discretization
\item a scaled external-regret-minimizing algorithm $\cA$ for online learning problem on the original set $\cK$ and loss functions in $\cL$.
\end{itemize}

With those components, we can run our main algorithm for full-swap-regret-minimization.  We create an instance $\Alg_s$ of the external-regret algorithm $\cA$ for each $s \in \Ke$. At each round $t$, we first query each $\Alg_s$ to obtain a recommended action $q_{s,t} \in \cK$. Now, we use the rounding procedure to convert $q_{s,t} \in \cK$ into a distribution over discretized actions $\z_{s,t} = H(q_{s,t}) \in \Delta(\Ke)$.  We then build a Markov chain with states corresponding to $\Ke$ where the transition from $s$ to $s'$ is given by $\z_{s,t}[s']$. The learner then plays a stationary distribution $\x_t$ of the Markov chain. After the loss $\ell_t$ is observed, we feed each algorithm $\Alg_s$ the loss $\ell_t$ and scale $g_{t,s} = \x_t[s]$.

Following the Blum-Mansour analysis and bounding the discretization error, we obtain the following:

\begin{theorem}\label{thm:BMCS}
    Say we have a rounding procedure $\He: \cK \to \Delta(\Ke)$ such that, for all $\q \in \cK$, for all $\ell \in \cL$, we have
    $\EE_{\s \sim \He(\q)}\ps{\ell(\s)} - \ell(\q) \leq \delta$
    for some $\delta>0$.  Also let $\Reg_s$ be the scaled external regret incurred by $\Alg_s$, then, for Algorithms \ref{alg:BMCS} and \ref{alg:BMNS}: $\FSR \leq \delta T + \sum_{s \in \Ke} \Reg_s$.
\end{theorem}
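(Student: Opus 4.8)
The plan is to follow the Blum–Mansour template, decomposing the full swap regret into (i) a discretization error term from replacing actions in $\cK$ by distributions over $\Ke$ via $\He$, and (ii) a sum of scaled external regrets of the subroutines $\Alg_s$. First I would fix an arbitrary swap function $\phi: \cK \to \cK$; the goal is to bound $\sum_t [\ell_t(\x_t) - \ell_t(\phi(\x_t))]$ by $\delta T + \sum_{s \in \Ke}\Reg_s$ uniformly in $\phi$, which then gives the bound on $\FSR$. Since $\x_t$ is supported on $\Ke$, it suffices (by averaging over $s \sim \x_t$) to handle swap functions that act on the finitely many points of $\Ke$, i.e. to bound $\sum_t \sum_{s \in \Ke} \x_t[s]\,(\ell_t(s) - \ell_t(\phi(s)))$.

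The key algebraic identity is the stationarity of $\x_t$: by construction $\x_t$ is a stationary distribution of the Markov chain with transition probabilities $\z_{s,t}[s'] = \He(q_{s,t})[s']$, so $\sum_{s} \x_t[s]\,\z_{s,t}[s'] = \x_t[s']$ for every $s' \in \Ke$. Consequently $\sum_s \x_t[s]\,\E_{s' \sim \z_{s,t}}[\ell_t(s')] = \sum_{s'} \x_t[s']\,\ell_t(s') = \ell_t(\x_t)$. This lets me rewrite the learner's per-round loss: $\ell_t(\x_t) = \sum_s \x_t[s]\,\E_{s' \sim \He(q_{s,t})}[\ell_t(s')] \le \sum_s \x_t[s]\,(\ell_t(q_{s,t}) + \delta)$, where the inequality is exactly the hypothesis $\E_{s \sim \He(q)}[\ell(s)] - \ell(q) \le \delta$ applied at $q = q_{s,t}$ (valid since $\ell_t \in \cL$). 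Summing over $t$ gives $\sum_t \ell_t(\x_t) \le \delta T + \sum_t \sum_s \x_t[s]\,\ell_t(q_{s,t})$.

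Next I would handle the comparator term. For a fixed swap function $\phi$ on $\Ke$ and each $s \in \Ke$, the quantity $\sum_t \x_t[s]\,(\ell_t(q_{s,t}) - \ell_t(\phi(s)))$ is precisely the scaled external regret of $\Alg_s$ against the fixed comparator $\phi(s) \in \cK$, with scales $g_{t,s} = \x_t[s]$; hence it is at most $\Reg_s$ (which by definition is the max over all comparators in $\cK$, in particular $\phi(s)$). Summing over $s \in \Ke$: $\sum_t \sum_s \x_t[s]\,\ell_t(q_{s,t}) \le \sum_t \sum_s \x_t[s]\,\ell_t(\phi(s)) + \sum_s \Reg_s = \sum_t \ell_t(\phi(\x_t)) + \sum_s \Reg_s$, using that $\phi(\x_t) = \sum_s \x_t[s]\,\delta_{\phi(s)}$ so $\ell_t(\phi(\x_t)) = \sum_s \x_t[s]\,\ell_t(\phi(s))$. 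Combining the two displays yields $\sum_t [\ell_t(\x_t) - \ell_t(\phi(\x_t))] \le \delta T + \sum_{s \in \Ke}\Reg_s$; taking the max over $\phi$ gives the claim.

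The main obstacle — really the only point needing care — is the reduction from an arbitrary $\phi: \cK \to \cK$ in the definition of $\FSR$ to a swap function acting on $\Ke$. Because the learner only ever plays distributions $\x_t \in \Delta(\Ke)$, $\ell_t(\phi(\x_t)) = \E_{s \sim \x_t}[\ell_t(\phi(s))]$ depends only on the restriction of $\phi$ to $\Ke$, so replacing $\phi$ by its restriction (and then by the best comparator chosen separately for each $s$, since the $\Reg_s$ bounds are against the unconstrained best point in $\cK$) loses nothing; this is exactly where allowing the $\Alg_s$ to recommend in $\cK$ rather than $\Delta(\Ke)$ is harmless. I should also note that the existence of a stationary distribution $\x_t$ is guaranteed since the transition matrix $(\z_{s,t}[s'])_{s,s'}$ is row-stochastic over the finite set $\Ke$, and that the same argument applies verbatim to Algorithm~\ref{alg:BMNS} (the non-convex variant, where $\He$ is taken to be the identity-after-nearest-point map, $\delta$ absorbs the net error, and $q_{s,t}$ is simply the distribution recommended by $\Alg_s$).
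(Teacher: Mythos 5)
Your argument for Algorithm~\ref{alg:BMCS} is correct and matches the paper's proof step for step: decompose the max over $\phi$ pointwise over $s\in\Ke$, use stationarity of $\x_t$ to rewrite $\ell_t(\x_t)$ as $\sum_s \x_t[s]\,\EE_{s'\sim\He(q_{s,t})}[\ell_t(s')]$, pay $\delta$ per round via the rounding hypothesis, and recognize the remainder as the scaled external regrets. One small imprecision: the argument does not apply quite ``verbatim'' to Algorithm~\ref{alg:BMNS}, because there the subroutines are MWU over $\Delta(\Ke)$ and their regret is only against comparators in $\Delta(\Ke)$, not arbitrary $\phi(s)\in\cK$; the paper instead inserts the rounding on the \emph{comparator} side, bounding $\ell_t(\phi(s))$ below by $\EE_{s^*\sim\He(\phi(s))}[\ell_t(s^*)]-\delta$, which is where the $\delta T$ arises in that case (stationarity costs nothing since the recommendations are already in $\Delta(\Ke)$).
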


See Appendix \ref{appendix:bm} for proof and details.

\section{Full Swap Regret}\label{sec:main_full_swap_regret}

We now instantiate the template to obtain the $\FSR$ guarantees. In Table \ref{table:main-results}, we present a summary of our results, including $\FSR$ guarantees for settings where the loss functions are strongly convex, concave, or not having any convexity/concavity assumption.  Our template is designed for the regime where losses are strongly convex, and later nearly-strongly-convex.  These are the settings of interest for establishing algorithms for calibration, and the settings where we stand to gain the most by utilizing the embedded structure of the actions into Euclidean space.  For the concave and no-assumption settings, we are simply using the Blum Mansour framework over a discretization of the action space as a black box.  We include the results for all these settings in one unified table so as to paint a complete baseline picture of what rates can be attained under this new $\FSR$ benchmark.

For external regret algorithms in the strongly convex and nearly-strongly-convex settings, we will use the variants of online gradient descent in Lemma \ref{lemma:GDS} for strongly convex functions and in Theorem \ref{thm:GDK} for nearly strongly convex (we defer discussion of this latter setting to the next section).

For the discretization and rounding in the non-smooth loss case, we will use either $\epsilon$-nets in Lemma \ref{lem:eps-net} coupled with the rounding $\He: \cK \to \Delta(\Ke)$ that (deterministically) maps a point in $\Ke$ to the nearest point in $\Ke$ (i.e. the projection map). The rounding satisfies the following guarantee that trivially follows from Lipschitzness.

\begin{lemma}\label{lemma:NH}
    For all $q \in \cK$, for all $L$-Lipschitz loss functions $\ell: \cK \to \RR$, letting $\He: \cK \to \Delta(\Ke)$ be the projection map on $\eps$-net $\Ke$, we have
    $\EE_{s \sim \He(q)}\ps{\ell(s)} - \ell(q) \leq L \epsilon$.
\end{lemma}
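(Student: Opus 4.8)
The statement to prove, Lemma~\ref{lemma:NH}, is essentially a triviality that the authors themselves flag as ``trivially follows from Lipschitzness,'' so the proof proposal should be short and to the point.

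\textbf{Approach.} The plan is to unwind the definitions and apply the Lipschitz bound directly. The rounding procedure $\He$ on an $\eps$-net $\Ke$ is the \emph{deterministic} projection map: $\He(q)$ places all its mass on the single point $\Ke(q) \in \Ke$, the nearest point of the net to $q$. By Definition~\ref{def:ec} of an $\eps$-net, this point satisfies $\norm{q - \Ke(q)} \leq \eps$.

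\textbf{Key steps.} First I would note that since $\He(q)$ is a point mass at $\Ke(q)$, the expectation collapses: $\EE_{s \sim \He(q)}[\ell(s)] = \ell(\Ke(q))$. Second, I would invoke $L$-Lipschitzness of $\ell$ to write $\ell(\Ke(q)) - \ell(q) \leq L \norm{\Ke(q) - q}$. Third, I would apply the defining property of the $\eps$-net, $\norm{\Ke(q) - q} \leq \eps$, to conclude $\EE_{s \sim \He(q)}[\ell(s)] - \ell(q) \leq L\eps$. That completes the argument.

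\textbf{Main obstacle.} There is essentially no obstacle here; the only thing to be mildly careful about is that in the definition of $\epsilon$-net from Lemma~\ref{lem:eps-net} the guarantee is for an $O(\epsilon)$-net rather than an exact $\epsilon$-net, so the constant hidden in the $O(\cdot)$ is absorbed and the bound $L\epsilon$ should be read up to that constant (equivalently, one rescales $\epsilon$). When plugging this lemma into the template (Theorem~\ref{thm:BMCS}) we will set $\delta = L\epsilon$, so this clean form is exactly what is needed.

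\begin{proof}
The rounding procedure $\He: \cK \to \Delta(\Ke)$ associated to the $\eps$-net $\Ke$ is the deterministic map sending $q \in \cK$ to the point mass on $\Ke(q)$, the closest point of $\Ke$ to $q$. By Definition~\ref{def:ec}, $\norm{q - \Ke(q)} \leq \epsilon$. Since $\He(q)$ is supported on the single point $\Ke(q)$,
\[
\EE_{s \sim \He(q)}\ps{\ell(s)} - \ell(q) = \ell(\Ke(q)) - \ell(q) \leq L\norm{\Ke(q) - q} \leq L\epsilon,
\]
where the first inequality uses that $\ell$ is $L$-Lipschitz. This proves the claim.
\end{proof}
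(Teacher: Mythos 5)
Your proof is correct and matches the paper's (implicit) argument exactly: the paper itself notes the lemma ``trivially follows from Lipschitzness'' and that the expectation is vacuous since the projection is deterministic, which is precisely the collapse-to-a-point-mass step you make explicit. Nothing further is needed.
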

Note that the expectation in the statement is vacuous since the projection is a deterministic map, but we write to highlight that the template allows for randomized maps.

For the discretization and rounding in the $\beta$-smooth loss case, we will use the $\epsilon$-triangulation in Lemma \ref{lemma:triangulation} together with the rounding procedure $\He: \cK \to \Delta(\Ke)$ that takes $x \in \cK$, projects it to $\conv(\Ke)$ obtaining $x_\perp = \Pi_{\conv(\Ke)}(x)$ and then outputs a probability distribution supported in the points $\Ke(x_\perp)$ that equal $x_\perp$ in expectation. (Recall from Lemma \ref{lemma:triangulation} that in a triangulation $\Ke(x)$ is a set of at most $d+1$ points containing $x$ in the convex hull.) See Algorithm \ref{alg:BH} in the appendix for a detailed description in pseudocode together with the proof of the following lemma. With the stronger smoothness property together with the more structured discretization, we can improve the rounding error from $O(\epsilon)$ to $O(\epsilon^2)$:

\begin{lemma}\label{lemma:BH}
    For all $q \in \cK$, for all $L$-Lipschitz, $\beta$-smooth loss functions $\ell: \cK \to \RR$, letting $\He: \cK \to \Delta(\Ke)$ be the map induced by Algorithm \ref{alg:BH} on $\eps$-triangulation $\Ke$, we have
$\EE_{s \sim \He(q)}\ps{\ell(s)} - \ell(q) \leq \p{L + \beta/8}\eps^2$.
\end{lemma}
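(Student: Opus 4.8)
The plan is to analyze the rounding map $\He$ from Algorithm~\ref{alg:BH} by splitting the error $\EE_{s\sim\He(q)}[\ell(s)] - \ell(q)$ into two pieces corresponding to the two stages of the rounding: first the projection $q \mapsto q_\perp = \Pi_{\conv(\Ke)}(q)$, and then the randomized rounding of $q_\perp$ into the distribution over the simplex vertices $\Ke(q_\perp)$ that reproduces $q_\perp$ in expectation. Writing
\[
\EE_{s\sim\He(q)}[\ell(s)] - \ell(q) = \underbrace{\big(\ell(q_\perp) - \ell(q)\big)}_{\text{(I): projection error}} + \underbrace{\big(\EE_{s\sim\He(q)}[\ell(s)] - \ell(q_\perp)\big)}_{\text{(II): triangulation rounding error}},
\]
I would bound each term separately and add.

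For term (I), I would use the $\epsilon$-triangulation guarantee from Lemma~\ref{lemma:triangulation}: every point of $\cK$ is within distance $\eps^2$ of $\conv(\Ke)$, so $\norm{q - q_\perp} \leq \eps^2$. Then $L$-Lipschitzness of $\ell$ gives $|\ell(q_\perp) - \ell(q)| \leq L\eps^2$. For term (II), I would use smoothness to get a \emph{second-order} bound rather than a first-order one, which is where the improvement from $O(\eps)$ to $O(\eps^2)$ comes from. Since $\He(q)$ is supported on $\Ke(q_\perp)$ with $\EE_{s\sim\He(q)}[s] = q_\perp$, I would invoke a subgradient $\nabla\ell(q_\perp)$ and write, for each vertex $s$ in the support,
\[
\ell(s) \leq \ell(q_\perp) + \nabla\ell(q_\perp)^\top(s - q_\perp) + \tfrac{\beta}{2}\norm{s - q_\perp}^2,
\]
which is the $\beta$-smoothness inequality. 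Taking $\EE_{s\sim\He(q)}$ of both sides kills the linear term exactly (because the rounding preserves the mean), leaving $\EE_{s\sim\He(q)}[\ell(s)] - \ell(q_\perp) \leq \tfrac{\beta}{2}\EE_{s\sim\He(q)}\norm{s - q_\perp}^2$. Finally, the triangulation guarantees every vertex $s \in \Ke(q_\perp)$ satisfies $\norm{s - q_\perp} \leq \eps$, so $\norm{s - q_\perp}^2 \leq \eps^2$ and term (II) is at most $\tfrac{\beta}{2}\eps^2$. Combining, the total is at most $L\eps^2 + \tfrac{\beta}{2}\eps^2$.

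The only genuine subtlety — and where I'd expect to spend the most care — is matching the constant: the claimed bound is $(L + \beta/8)\eps^2$, which is tighter than the $(L + \beta/2)\eps^2$ that the crude argument above yields. To recover the $\beta/8$ I would sharpen the bound on $\EE_{s\sim\He(q)}\norm{s - q_\perp}^2$. The point is that $q_\perp$ is not an arbitrary point of the simplex $\conv(\Ke(q_\perp))$ together with an arbitrary reproducing distribution; Algorithm~\ref{alg:BH} must choose the rounding distribution so that the expected squared distance to the mean is controlled. A simplex of diameter at most $2\eps$ has the property that for the barycentric-coordinate distribution, $\EE\norm{s - q_\perp}^2$ is a variance-type quantity bounded by (roughly) a quarter of the squared diameter, i.e. $\leq \eps^2$ with an extra factor of $\tfrac14$ available — giving $\tfrac{\beta}{2}\cdot\tfrac{\eps^2}{4} = \tfrac{\beta}{8}\eps^2$. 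So the real content of the lemma is: (a) the deterministic projection contributes $L\eps^2$ via Lipschitzness and the $\eps^2$-covering property, and (b) the randomized rounding contributes $\tfrac{\beta}{8}\eps^2$ because smoothness converts a mean-zero displacement into its (small) variance, and the variance of barycentric rounding over a simplex of diameter $2\eps$ is at most $\eps^2/4$. I would verify the precise variance bound for whatever rounding distribution Algorithm~\ref{alg:BH} actually uses, since that is the one place the constant $1/8$ could shift; everything else is a direct application of Lipschitzness, smoothness, and the two defining properties of an $\epsilon$-triangulation.
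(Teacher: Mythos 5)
Your proposal matches the paper's proof essentially step for step: the same decomposition into projection error (bounded by $L\eps^2$ via Lipschitzness and the $\eps^2$-covering property of the triangulation) plus rounding error (where $\beta$-smoothness leaves only the variance term because $\EE_{s\sim\He(q)}[s]=q_\perp$), and the constant $\beta/8$ is obtained exactly as you anticipated, via the bound $\Var(\He(q))\leq \eps^2/4$ for a distribution supported on the simplex $\Ke(q_\perp)$.
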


For the discretization in the concave loss case, rather than taking an $\eps$-net of the entire set $\cK$, it will only be necessary to take an $\eps$-net of the boundary.  Because losses are concave, the loss-minimizing actions will always lie at extreme points in $\cK$.  It will always be preferable to represent any interior point as a convex combination of boundary points.  This discretization is accomplished by taking a polytope approximation of $\cK$.  Theorem \ref{thm:bi} (Appendix \ref{app:triangulation-proof}) guarantees the existence of a polytope $\mathcal{P}$ contained entirely in $\cK$ with at most $(1/\eps)^{d-1}$ vertices such that $\max_{x \in \cK} \text{dist}(x,\mathcal{P}) \leq \eps^2$.  Taking $\Ke$ to be the vertex set of $\mathcal{P}$, we shave a $(1/\eps)$ factor off the magnitude of $\Ke$, which for the $\eps$-net contained $(1/\eps)^d$ many points as it also had to cover the interior of $\cK$.  Additionally, we maintain the  improved rounding error of $O(\epsilon^2)$ from the $\beta$-smooth case as the projection of any $x \in \cK$ to $\conv(\Ke) = \mathcal{P}$ is guaranteed to be at most $\eps^2$ away from $x$.

Our main theorem is obtained from putting these parts together and optimizing the discretization parameter $\epsilon$. We state the full result below:

\begin{table}\label{table:main-results}
\begin{center}\caption{Matrix of regret bounds for Theorem \ref{thm:full-main}.}
    \begin{tabular}{|l|c|c|c|}
        \hline
        \makecell{Extra assumptions \\ on $L$-Lipschitz \\ losses $\ell$}
        & \makecell{$\FSR$\\ Algorithm \\ and $\ER$\\ Subroutine}
        & \makecell{Discretization\\ and rounding}
        & $\FSR$ Rate\\
        \hline
        $\ell$: no assumption
        &\makecell{Algorithm \ref{alg:BMNS} \\ $\cA$: MWU}
        &\makecell{$\Ke$: $\eps$-net,\\ $\eps = T^{-1/(d+2)}$}
        &$O\p{LT^{\frac{d+1}{d+2}}\log T}$\\
        \hline
        $\ell$: $\beta$-smooth
        &\makecell{Algorithm \ref{alg:BMNS} \\ $\cA$: MWU}
        & \makecell{$\Ke$: $\eps$-triangulation,\\ $\eps = T^{-1/(d+4)}$}
        & $O\p{(L+\beta)T^{\frac{d+2}{d+4}} \log T)}$\\
        \hline
        $\ell$: linear or concave
        &\makecell{Algorithm \ref{alg:BMNS} \\ $\cA$: MWU}
        &\makecell{$\Ke$: polytope\\ approximation, \\ $\eps = T^{-1/(d+3)}$}
        & $O\p{LT^{\frac{d+1}{d+3}}\log T}$\\
        \hline
        $\ell$: $\alpha$-strongly-convex
        &\makecell{Algorithm \ref{alg:BMCS} \\ $\cA$: Algorithm \ref{alg:GDS}}
        &\makecell{$\Ke$: $\eps$-net, \\ $\eps = (L/\alpha)^{\frac{1}{d+1}}T^{-\frac{1}{d+1}}$ \\ $\He$: Projection}
        &$O\p{L\p{\frac{L}{\alpha}}^{\frac{1}{d+1}}T^{\frac{d}{d+1}}\log T}$\\
        \hline
        \makecell{$\ell$: $\alpha$-strongly-convex\\ and $\beta$-smooth}
        &\makecell{Algorithm \ref{alg:BMCS}\\ $\cA$: Algorithm \ref{alg:GDS}}
        &\makecell{$\Ke$: $\eps$-triangulation, \\ $\eps = T^{-1/(d+2)}$ \\ $\He$: Algorithm \ref{alg:BH}}
        &$O\p{L\p{\frac{\beta}{L}+\frac{L}{\alpha}}T^{\frac{d}{d+2}}\log T}$\\
        \hline
    \end{tabular}
\end{center}
\end{table}

\begin{theorem}\label{thm:full-main}
    Let $\cK \subseteq \RR^d$ be a convex set of diameter 1.  Let $\cL \subseteq \set{\ell: \cK \to \RR}$ be a family of $L$-Lipschitz loss functions.  Algorithms \ref{alg:BMCS} and \ref{alg:BMNS} attain the full-swap-regret guarantees described in Table~\ref{table:main-results} in terms of the indicated additional constraints on the loss functions $\ell \in \cL$, using the indicated subroutines $\cA$, discretizations $\Ke$, and rounding algorithms $\He$. All algorithms run in per-iteration time $\poly(|K^{\eps}|) = (1/\eps)^{O(d)} = \poly(d, T)$.
\end{theorem}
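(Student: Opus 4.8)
The plan is to verify, row by row of Table~\ref{table:main-results}, that the template algorithm (Algorithm~\ref{alg:BMCS} or Algorithm~\ref{alg:BMNS}) instantiated with the indicated discretization $\Ke$, rounding $\He$, and external-regret subroutine $\cA$ achieves the claimed $\FSR$ rate; the unifying tool is Theorem~\ref{thm:BMCS}, which reduces everything to bounding (i) the per-round rounding error $\delta$ and (ii) the sum $\sum_{s \in \Ke} \Reg_s$ of scaled external regrets of the sub-algorithms. First I would recall from Theorem~\ref{thm:BMCS} that $\FSR \le \delta T + \sum_{s \in \Ke}\Reg_s$. The key point for term (ii) is that the scales satisfy $\sum_{s}g_{t,s} = \sum_s \x_t[s] = 1$ in every round (since $\x_t$ is a distribution over $\Ke$), so $\sum_{s}G_{T,s} = \sum_s \sum_t g_{t,s} = T$. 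By concavity of the relevant regret bounds in $G_{T,s}$ and Jensen's inequality with $|\Ke|$ terms, this will convert a per-instance bound of the form $f(G_{T,s})$ into an aggregate bound $|\Ke| \cdot f(T/|\Ke|)$.

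For the three non-convex rows (no assumption, $\beta$-smooth, linear/concave) the subroutine is MWU run over $\Delta(\Ke)$, giving $\Reg_s \le O(\sqrt{G_{T,s}\log|\Ke|})$, hence $\sum_s \Reg_s \le O(\sqrt{|\Ke| T \log|\Ke|})$. The rounding errors are $\delta = L\eps$ (Lemma~\ref{lemma:NH}) for the $\eps$-net, $\delta = (L+\beta/8)\eps^2$ (Lemma~\ref{lemma:BH}) for the $\eps$-triangulation, and $\delta = O((L+\beta)\eps^2)$ for the polytope approximation (here one additionally uses concavity to argue that representing interior points via boundary vertices does not increase loss, so only the $\eps^2$ projection error survives, as discussed before the theorem statement). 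Plugging $|\Ke| = O(\eps^{-d})$ (Lemma~\ref{lem:eps-net}), $O(\sqrt{d}\,\eps^{-d})$ (Lemma~\ref{lemma:triangulation}), or $O(\eps^{-(d-1)})$ (Theorem~\ref{thm:bi}) respectively, the bound becomes $\tilde O(L\eps T + \sqrt{\eps^{-d}T})$ (first row), $\tilde O((L+\beta)\eps^2 T + \sqrt{\eps^{-d}T})$ (second), and $\tilde O((L+\beta)\eps^2 T + \sqrt{\eps^{-(d-1)}T})$ (third); optimizing $\eps$ yields $\eps = T^{-1/(d+2)}$, $T^{-1/(d+4)}$, $T^{-1/(d+3)}$ and the stated rates $T^{(d+1)/(d+2)}$, $T^{(d+2)/(d+4)}$, $T^{(d+1)/(d+3)}$ (up to logs).

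For the two strongly-convex rows we instead use Algorithm~\ref{alg:BMCS}: each sub-algorithm runs scaled OGD over $\cK$ (Lemma~\ref{lemma:GDS}), which for $\alpha$-strongly-convex $L$-Lipschitz losses gives $\Reg_s \le \frac{L^2}{2\alpha}(\log(G_{T,s}+1)+1) = O(\frac{L^2}{\alpha}\log T)$ \emph{per instance} — crucially independent of $G_{T,s}$ — so $\sum_s \Reg_s = O(|\Ke|\frac{L^2}{\alpha}\log T)$. The rounding errors are $\delta = L\eps$ (projection onto the $\eps$-net, Lemma~\ref{lemma:NH}) and $\delta = O((L+\beta)\eps^2)$ (Algorithm~\ref{alg:BH} on the $\eps$-triangulation, Lemma~\ref{lemma:BH}); note strong convexity is preserved under these roundings for free since we only invoke Lipschitz/smoothness bounds. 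Thus $\FSR = \tilde O(L\eps T + \frac{L^2}{\alpha}\eps^{-d})$ (fourth row) and $\tilde O((L+\beta)\eps^2 T + (\frac{L^2}{\alpha})\eps^{-d})$ (fifth row, also carrying the $\beta/L$ term from Lemma~\ref{lemma:BH}'s $\beta\eps^2$); optimizing gives $\eps = (L/\alpha)^{1/(d+1)}T^{-1/(d+1)}$ and $\eps = T^{-1/(d+2)}$ respectively, producing $L(L/\alpha)^{1/(d+1)}T^{d/(d+1)}$ and $L(\beta/L + L/\alpha)T^{d/(d+2)}$. Finally, the per-iteration running time is dominated by computing the stationary distribution of an $|\Ke|\times|\Ke|$ Markov chain and querying $|\Ke|$ subroutines, i.e. $\poly(|\Ke|) = (1/\eps)^{O(d)}$, which for each of the optimized choices of $\eps$ is $\poly(d,T)$.

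I expect the only genuinely non-routine step to be the concave-loss row: one must argue carefully that replacing the full $\eps$-net by the vertex set of an inscribed polytope $\mathcal P \subseteq \cK$ does not hurt — that the Blum-Mansour swap benchmark $\phi:\Ke\to\Ke$ over boundary points is no weaker than over the full net, which uses concavity to push any competing swap target to an extreme point — together with confirming that Theorem~\ref{thm:BMCS}'s hypothesis still holds with $\delta = O(\eps^2)$ when $\He$ maps through $\Pi_{\mathcal P}$. Everything else is a mechanical combination of the cited lemmas with the $\sum_s G_{T,s} = T$ identity and a one-line optimization of $\eps$.
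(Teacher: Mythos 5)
Your proposal is correct and follows essentially the same route as the paper's proof: invoke Theorem~\ref{thm:BMCS} to reduce $\FSR$ to $\delta T + \sum_s \Reg_s$, plug in the rounding-error lemmas (\ref{lemma:NH}, \ref{lemma:BH}, and Jensen plus Theorem~\ref{thm:bi} for the concave row) and the per-instance regret bounds with $\sum_s G_{T,s} = T$, then optimize $\eps$ row by row. The only blemish is the spurious $\beta$ you carry into the concave row (where no smoothness is assumed and the correct $\delta$ is $L\eps^2$ via concavity alone); this does not affect the stated rate, and your choices of $\eps$ and final bounds match the paper's.
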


\section{Discretized Swap Regret}\label{sec:discretized_swap_regret}

We now consider a setting where a discrete set of points $\Ke \subset \cK$ is given to the learner, who is then constrained to only play randomized actions supported on this discretization $\Ke$.  Accordingly, the learner must only compete with swap functions $\phi: \Ke \rightarrow \Ke$ rather than all transformations $\phi: \Ke \to \cK$. While this is a special case of classic online learning over finitely many actions $k=\mg{\Ke}$, under certain assumptions about the discretization $\Ke$ and the loss functions $\lt$, this additional structure enables an improved swap regret bound.  As a consequence, we obtain improved bounds for discretized calibration (see Figure \ref{fig:discretized_calibration_regret}).

We will again treat this as a problem over the convex hull $\conv(\Ke)$ and we will round points to a distribution of points in $\Ke$. The main tool we will use is to replace the loss function $\ell_t$ with its piecewise-linearization $\del_t$ which we describe below. All the key definitions make sense for any dimension $d$ but we are only able to bound the performance of the rounding procedure for $d=1$. We leave as an open question how to extend the analysis of the rounding procedure to higher dimensions. We keep the presentation general in the hope that some of those tools will be of independent interest.

\begin{definition}[Piecewise-linearization]\label{def:piece-lin}
    For a set $\Ke \subseteq \RR^d$ with $\mg{\Ke}=k$, and a convex loss function $\ell: \conv(\Ke) \to \RR$, we define the \emph{piecewise-linearized} loss function $\del_{\Ke}$ to be the lower envelope of the convex hull $\conv \set{(\s,\ell(\s))|\s \in \Ke}$.  Equivalently, for all $\x \in \conv(\Ke)$:
    $\del_{\Ke}(\x) = \min_{v \in \Delta(\Ke); \EE[v] = \x} \inp{v,\mu(\Ke)}$
    where $\mu (\Ke) \in \RR^k$ denotes the vector with entries $\ell(\s)$ for each $\s \in \Ke$.  We abbreviate $\del = \del_{\Ke}$ when $\Ke$ is clear from context.\\
    
    For the special case where $\Ke = \set{s_1,\cdots,s_k} \subset \RR$ with $s_i<s_{i+1}$ for all $i$,  we can simplify this expression.  For all $x \in \conv(\Ke) = [s_1,s_k]$, letting $i(x)\in [k]$ satisfy $\x \in [s_i,s_{i+1}]$, $
        \del(x) = \frac{\ell(s_{i(x)+1})-\ell(s_{i(x)})}{s_{i(x)+1}-s_{i(x)}}(x-s_{i(x)})+\ell(s_{i(x)})$.
\end{definition}

For the case where $d=1$, we utilize the simple rounding procedure:
\begin{equation}\label{eq:nearly-rounding}
    \He(x)[s_{i(x)}] = \frac{s_{i(x)+1}-x}{s_{i(x)+1}-s_{i(x)}}, \quad
    \He(x)[s_{i(x)+1}] = \frac{x-s_{i(x)}}{s_{i(x)+1}-s_{i(x)}}, \quad 
    \He(x)[s_{i}] = 0 \text{ o.w.}
\end{equation}

We will use the following fact about this rounding procedure which is special to $d=1$.

\begin{lemma}\label{lemma:KH}
    Let $\Ke \subset \RR$.  For all $t \in [T]$, for recommendations $x_{t} \in \conv(\Ke)$, and convex loss functions $\ell_{t}: \conv(\Ke) \to \RR$.  We have $\del_t(x_t) = \E_{s \sim H(x_t)}[\ell_t(s)]$.

\end{lemma}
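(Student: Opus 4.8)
The plan is to prove Lemma~\ref{lemma:KH} by a direct unwinding of the definitions. The key observation is that the rounding procedure $\He$ defined in \eqref{eq:nearly-rounding} places all its mass on exactly the two net points $s_{i(x_t)}$ and $s_{i(x_t)+1}$ that bracket $x_t$, with weights chosen so that the mean of the resulting distribution is exactly $x_t$ (this is a one-dimensional convex combination: $\He(x_t)[s_{i(x_t)}] \cdot s_{i(x_t)} + \He(x_t)[s_{i(x_t)+1}] \cdot s_{i(x_t)+1} = x_t$, which one checks by plugging in the two weights and simplifying). So the expectation $\E_{s \sim \He(x_t)}[\ell_t(s)]$ is just the convex combination $\frac{s_{i(x_t)+1}-x_t}{s_{i(x_t)+1}-s_{i(x_t)}}\ell_t(s_{i(x_t)}) + \frac{x_t-s_{i(x_t)}}{s_{i(x_t)+1}-s_{i(x_t)}}\ell_t(s_{i(x_t)+1})$.

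First I would write out $\E_{s \sim \He(x_t)}[\ell_t(s)]$ using the support-size-two description of $\He(x_t)$, obtaining the convex combination displayed above. Second, I would compare this expression against the explicit formula for $\del_t(x_t)$ given in the second half of Definition~\ref{def:piece-lin} for the $d=1$ case, namely $\del_t(x_t) = \frac{\ell_t(s_{i(x_t)+1})-\ell_t(s_{i(x_t)})}{s_{i(x_t)+1}-s_{i(x_t)}}(x_t-s_{i(x_t)})+\ell_t(s_{i(x_t)})$. A one-line algebraic rearrangement shows these two expressions are identical: grouping the $\ell_t(s_{i(x_t)})$ and $\ell_t(s_{i(x_t)+1})$ terms in the $\del_t$ formula yields coefficients $1 - \frac{x_t-s_{i(x_t)}}{s_{i(x_t)+1}-s_{i(x_t)}} = \frac{s_{i(x_t)+1}-x_t}{s_{i(x_t)+1}-s_{i(x_t)}}$ and $\frac{x_t-s_{i(x_t)}}{s_{i(x_t)+1}-s_{i(x_t)}}$ respectively, matching the expectation exactly. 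This establishes the claim. I should also note the edge cases: if $x_t$ coincides with a net point $s_i$, both formulas degenerate to $\ell_t(s_i)$ and the statement still holds (one may take $i(x_t)$ to be any valid index); and the convention that $i(x_t)$ is well-defined because $x_t \in \conv(\Ke) = [s_1, s_k]$.

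There is essentially no hard part here — this lemma is purely a consistency check that the rounding procedure in \eqref{eq:nearly-rounding} realizes the piecewise-linearization in the sense that, for the recommended point $x_t$, routing through the rounding incurs precisely the linearized loss rather than the true loss. The only thing to be slightly careful about is keeping the indexing ($i(x)$ and the interval $[s_{i(x)}, s_{i(x)+1}]$) consistent between the definition of $\He$ and the explicit formula for $\del$, since both are stated relative to the same sorted enumeration $s_1 < \cdots < s_k$ of $\Ke$. The real content — why linearizing the loss is the \emph{right} move for discretized swap regret — lies downstream of this lemma (it is what lets us feed nearly-strongly-convex losses to Algorithm~\ref{alg:BMCS} via Theorem~\ref{thm:GDK}), not in the lemma itself; Lemma~\ref{lemma:KH} is just the bookkeeping that connects the two viewpoints.
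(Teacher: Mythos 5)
Your proof is correct and is essentially the paper's argument made explicit: the paper notes that $\He(x_t)$ is supported on $[s_{i(x_t)},s_{i(x_t)+1}]$, where $\del_t$ is linear and agrees with $\ell_t$ at the endpoints, so $\del_t(x_t)=\E_{s\sim \He(x_t)}[\del_t(s)]=\E_{s\sim \He(x_t)}[\ell_t(s)]$, while you verify the same identity by expanding both sides and matching coefficients. No gap; the two write-ups differ only in presentation.
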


\begin{proof}
    This holds since the mixture $\He(x)$ is supported on the interval $[s_{i(x)},s_{i(x)+1}]$ over which $\del$ is linear.  Thus, $ \del_t(x_t) = \E_{s \sim H(x_t)}[\del_t(s)]= \E_{s \sim H(x_t)}[\ell_t(s)]$.
\end{proof}

For higher dimensions, it is not possible to find a `lossless' rounding before seeing the actual loss function.

Following our algorithmic template, we need external-regret-minimization algorithms for loss functions resulting from piecewise-linearization of $\alpha$-strongly convex functions. Unfortunately they are no longer $\alpha$-strongly convex, but they are still what we call $(\alpha,\epsilon)$-nearly-strongly convex.

\begin{definition}\label{def:NSC}
    Consider a convex set $\cK \subseteq \mathbb{R}^d$.  We say that a continuous function $\ell: \cK \to \RR$ is \textbf{$(\alpha,\epsilon)$-nearly strongly convex} if, for all $\x,\y \in \cK$
    \begin{equation}
        \ell(\y)-\ell(\x)-\nabla \ell(\x)^\top(\y-\x) \geq \frac{\alpha}{2}\p{\norm{\y-\x}-\epsilon}_+^2
    \end{equation}
    where $\p{x}_+ = \max(x,0)$ and $\nabla \ell(\x)$ can be any subgradient of $\x$.
\end{definition}

We establish the following lemma demonstrating that piecewise-linearizing strongly-convex loss functions results in nearly-strongly-convex loss functions (proof deferred to \Cref{appendix:discretized_swap_regret}).

\begin{lemma}\label{lemma:linearized_is_nearly_strongly_convex}
    Let $\Ke = \set{s_1,\cdots,s_k} \subset \RR$ be a set of reals with $s_i<s_{i+1}$ for all $i$.  Let $\Ke$ be an $\eps$-triangulation of $\conv(\Ke)$ (i.e. $s_{i+1}-s_i \leq \eps$ for all $i$). Let $\ell: \conv(\Ke) \to \RR$ be $\alpha$-strongly-convex.  Then, $\del$ is $(\alpha,\epsilon)$-nearly-strongly-convex, where $\del$ is the piecewise-linearized $\ell$ according to Definition \ref{def:piece-lin}.
\end{lemma}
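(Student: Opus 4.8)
The plan is to verify the nearly-strongly-convex inequality directly from the explicit one-dimensional formula for $\del$ given in Definition~\ref{def:piece-lin}. Fix $x, y \in \conv(\Ke) = [s_1, s_k]$ with (say) $x \le y$, let $i = i(x)$ so that $x \in [s_i, s_{i+1}]$, and recall that on this interval $\del$ is the affine function with slope $m_i := (\ell(s_{i+1}) - \ell(s_i))/(s_{i+1}-s_i)$; moreover any subgradient $\nabla\del(x)$ is this slope $m_i$ (or lies between $m_{i-1}$ and $m_i$ if $x$ is a breakpoint, but since the $m_i$ are nondecreasing by convexity of $\ell$, taking $\nabla\del(x) = m_i$ is the relevant choice when $y \ge x$; the symmetric choice handles $y \le x$). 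The quantity we must lower bound is the Bregman-type gap $D(x,y) := \del(y) - \del(x) - \nabla\del(x)(y-x)$. Since $\del$ is convex and piecewise linear, $D(x,y) \ge 0$ always, and $D(x,y)$ equals the sum over the breakpoints $s_{i+1}, s_{i+2}, \dots$ lying strictly between $x$ and $y$ of (slope increment at that breakpoint) $\times$ (distance from that breakpoint to $y$). The goal is to show this is at least $\tfrac{\alpha}{2}(|y-x| - \eps)_+^2$.

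The key comparison step is to relate $D(x,y)$ for $\del$ to the corresponding gap for the original $\alpha$-strongly-convex $\ell$, for which we know $\ell(y) - \ell(x') - \nabla\ell(x')(y-x') \ge \tfrac{\alpha}{2}|y-x'|^2$. First I would reduce to the case where $x$ is itself a grid point: replacing $x$ by the grid point $s_{i+1}$ (the right endpoint of $x$'s cell) only decreases the right-hand side $(|y-x|-\eps)_+^2$ by at most the adjustment corresponding to one cell of width $\le \eps$ — more precisely, $|y - s_{i+1}| \ge |y-x| - \eps$, so it suffices to prove $\del(y) - \del(s_{i+1}) - m_i(y - s_{i+1}) \ge \tfrac{\alpha}{2}|y - s_{i+1}|^2$ when both arguments are grid points, because $D(x,y) \ge \del(y) - \del(s_{i+1}) - m_i(y-s_{i+1})$ (the difference is the convex gap of $\del$ between $x$ and $s_{i+1}$ with slope $m_i$, which is zero since $\del$ is linear on that whole cell) — wait, more carefully, $D(x,y) = [\del(y) - \del(s_{i+1}) - m_i(y - s_{i+1})] + [\del(s_{i+1}) - \del(x) - m_i(s_{i+1}-x)]$ and the second bracket vanishes, so $D(x,y)$ is exactly the grid-to-grid gap. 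Then for grid points $s_a \le s_b$, I would use that $\del(s_a) = \ell(s_a)$ and $\del(s_b) = \ell(s_b)$ and that $m_a \le \ell'(s_a^+) \le \nabla\ell(s_a)$ is at most a valid subgradient of $\ell$ at $s_a$ — actually $m_a$ is the slope of the chord of $\ell$ over $[s_a, s_{a+1}]$, so by convexity $m_a \le \nabla\ell(s_b)$-type bounds are not quite what's needed; instead I use $m_a \le (\ell(s_b) - \ell(s_a))/(s_b - s_a)$ is false in general. The clean route: $\del(s_b) - \del(s_a) - m_a(s_b - s_a) = \ell(s_b) - \ell(s_a) - m_a(s_b-s_a) \ge \ell(s_b) - \ell(s_a) - \nabla\ell(s_a)(s_b - s_a) \ge \tfrac{\alpha}{2}(s_b-s_a)^2$, where the first inequality uses $m_a \le \nabla\ell(s_a)$ (the chord slope over $[s_a,s_{a+1}]$ is at most the right-derivative, hence at most any subgradient, at $s_a$ — this is exactly convexity of $\ell$) and the second is $\alpha$-strong convexity of $\ell$.

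Assembling: $D(x,y) = \ell(s_b) - \ell(s_a) - m_a(s_b-s_a) \ge \tfrac{\alpha}{2}(s_b - s_a)^2 \ge \tfrac{\alpha}{2}(|y-x| - \eps)_+^2$, using $|s_b - s_a| = |y - x|$ when $x=s_a$ is a grid point, and the one-cell slack when it is not. The symmetric case $y \le x$ is handled identically by choosing the subgradient $\nabla\del(x) = m_{i(x)-1}$ and reflecting. The main obstacle I anticipate is bookkeeping the subgradient at breakpoints correctly — ensuring that for the pair $(x,y)$ under consideration we pick the subgradient of $\del$ at $x$ that makes the argument go through (the right-slope when $y > x$, the left-slope when $y < x$), and confirming that the reduction "$x$ not on the grid costs at most $\eps$" interacts correctly with the $(\cdot)_+$ truncation (it does, since $(\cdot)_+$ is monotone and $1$-Lipschitz). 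Everything else is routine telescoping over the piecewise-linear pieces.
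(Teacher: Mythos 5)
There is a genuine gap in the key step of your ``clean route.'' You claim $m_a \le \nabla\ell(s_a)$, asserting that the chord slope of $\ell$ over $[s_a,s_{a+1}]$ is at most the right-derivative at $s_a$. Convexity gives exactly the reverse: the subgradient inequality at $s_a$ yields $\ell(s_{a+1}) \ge \ell(s_a) + \nabla\ell(s_a)(s_{a+1}-s_a)$, i.e.\ $m_a \ge \nabla\ell(s_a)$ for every subgradient at the \emph{left} endpoint. So your chain $\ell(s_b)-\ell(s_a)-m_a(s_b-s_a) \ge \ell(s_b)-\ell(s_a)-\nabla\ell(s_a)(s_b-s_a)$ runs in the wrong direction. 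Worse, the intermediate bound you are aiming for --- a grid-to-grid Bregman gap of at least $\tfrac{\alpha}{2}(s_b-s_a)^2$ with no $\eps$ loss --- is simply false. Take $\ell(x)=\tfrac{\alpha}{2}x^2$ on the grid $\{0,\eps,2\eps,\dots\}$ with $s_a=0$: then $m_a=\tfrac{\alpha\eps}{2}$ and the gap equals $\tfrac{\alpha}{2}s_b(s_b-\eps) < \tfrac{\alpha}{2}s_b^2$. The $\eps$ slack in near-strong-convexity is therefore \emph{not} attributable only to $x$ lying off the grid; it is already incurred when $x$ is a grid point, because the slope of $\del$ on $x$'s cell matches $\nabla\ell$ not at the left endpoint but at a point up to $\eps$ to its right.

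The fix is to anchor the strong-convexity comparison at the \emph{right} end of $x$'s cell rather than the left. After your (correct) reduction $D(x,y)=\del(y)-\ell(s_{i+1})-m_i\,(y-s_{i+1})$, note that $m_i$ is dominated by the left-derivative of $\ell$ at $s_{i+1}$, which \emph{is} a valid subgradient of $\ell$ there; combined with $\del(y)\ge\ell(y)$ (which also disposes of the issue that $y$ need not be a grid point), $\alpha$-strong convexity of $\ell$ at $s_{i+1}$ gives $D(x,y) \ge \tfrac{\alpha}{2}(y-s_{i+1})^2 \ge \tfrac{\alpha}{2}\p{|y-x|-\eps}_+^2$, since $s_{i+1}\le x+\eps$. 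This is essentially the paper's argument, which evaluates at the point $x+\eps \ge s_{i+1}$, uses that $\ell(x+\eps)$ lies \emph{above} the extension of the secant line defining $\del$ on $[s_i,s_{i+1}]$ (a convex function lies above its secant lines outside the secant interval), and uses the mean value theorem to get $\nabla\ell(x+\eps)\ge m_i$. The rest of your plan --- nonnegativity of the gap for $|y-x|\le\eps$, the one-sided choice of subgradient at breakpoints, and the reduction to the cell endpoint --- is fine.
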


\paragraph{External Regret Minimization for Nearly Strongly Convex Functions}
We have reduced the task of external regret minimization of strongly convex loss functions over a discretization $\Ke$ to external regret minimization of nearly-strongly-convex loss functions over the set $\conv(\Ke)$. However, existing algorithms don't work out of the box. One option would be to pretend the nearly-strongly-convex loss functions are actually strongly convex, and apply Algorithm \ref{alg:GDS}. The total regret of such algorithm would be $O(\log T + \epsilon T)$ where $\epsilon T$ is due to the fact that these nearly-strongly-convex loss functions may be linear over intervals of length $\eps$. Similarly we could treat them as standard convex functions, not taking advantage of the of the strong convexity for distant points, and obtain $O(\sqrt{T})$ bound which is again not sufficient.

A key technical contribution of this work is to show that we can apply OGD with a learning rate schedule that initially decays with $1/t$ and later switches to decay $1/\sqrt{t}$ at the right moment. This allows us to capture the effect of strong-convexity for points that are far enough from the optimum, for which the function behaves like a strongly convex function. At the same time, it allows OGD to treat the function as a standard Lipschitz function for points close to the optimum.

Formally, the scaled version of OCO with losses $\ell_t$ and scales $g_t$, the algorithm will apply the standard OGD update: $\x_{t+1} \gets \Pi_\cK \p{\xt - \etat g_t \nabla \lt(\xt)}$ with the learning rate $\eta_t = R'(G_t)$ where $G_t = \sum_{s=1}^{t} g_s$ for the function:
\begin{equation*}
    R'(x)  = \frac{2}{\alpha} \text{ for }x \in [0,1], \quad
R'(x) = \frac{2}{\alpha x} \text{ for }x \in \ps{1,\p{\frac{\sqrt 2 L}{\alpha \eps}}^2}, \quad
R'(x) = \frac{\sqrt2 \eps}{L\sqrt{x}} \text{ for } x \geq \p{\frac{\sqrt 2 L}{\alpha \eps}}^2
\end{equation*}

\begin{theorem}\label{thm:GDK}
    For $(\alpha,\epsilon)$-nearly-strongly-convex loss functions $\ell_t$ and scale parameters $g_t$, an instance of Online Gradient Descent (Algorithm \ref{alg:GDK} in the appendix) achieves the following scaled external regret guarantee:
\begin{align*}
    \ER \leq 2\sqrt{2}\epsilon L\sqrt{G_T} + \frac{L^2}{\alpha}\p{\log(G_T+1)+1}
\end{align*}
\end{theorem}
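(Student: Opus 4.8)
The plan is to run the standard scaled-OGD regret analysis but carefully track the learning-rate schedule in three phases, using the near-strong-convexity inequality rather than plain convexity. Recall the one-step identity for OGD: for any comparator $x^*$, with $\eta_t = R'(G_t)$ and using the projection being non-expansive,
\begin{equation*}
g_t \nabla \ell_t(x_t)^\top (x_t - x^*) \leq \frac{\norm{x_t - x^*}^2 - \norm{x_{t+1} - x^*}^2}{2\eta_t} + \frac{\eta_t g_t^2 L^2}{2}.
\end{equation*}
Near-strong-convexity gives $g_t(\ell_t(x_t) - \ell_t(x^*)) \leq g_t \nabla \ell_t(x_t)^\top(x_t - x^*) - \frac{\alpha}{2} g_t (\norm{x_t - x^*} - \eps)_+^2$. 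Writing $r_t = \norm{x_t - x^*}$ and $d_t = r_t^2$ (both bounded by $1$ since $\diam \cK = 1$), the regret is bounded by $\sum_t \left[ \frac{d_t - d_{t+1}}{2\eta_t} - \frac{\alpha}{2} g_t (r_t - \eps)_+^2 + \frac{\eta_t g_t^2 L^2}{2} \right]$. I would Abel-sum the telescoping term $\sum_t \frac{d_t - d_{t+1}}{2\eta_t} \leq \sum_t d_t \left( \frac{1}{2\eta_t} - \frac{1}{2\eta_{t-1}} \right) + (\text{boundary})$, bounding $d_t \leq 1$ wherever $1/\eta_t$ is increasing.

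The key step is to split the time horizon at the threshold $G^* := (\sqrt{2}L/(\alpha\eps))^2$, i.e. into rounds with $G_t \leq 1$ (phase 0, constant rate $2/\alpha$), rounds with $1 < G_t \leq G^*$ (phase 1, rate $2/(\alpha G_t)$), and rounds with $G_t > G^*$ (phase 2, rate $\sqrt{2}\eps/(L\sqrt{G_t})$). In phase 1 the rate is the classical strongly-convex schedule: there the term $(r_t - \eps)_+^2 \geq r_t^2 - 2\eps r_t \geq d_t - 2\eps$ lets the $-\frac{\alpha}{2}g_t d_t$ part cancel the growth of the $1/\eta_t$-telescoping sum exactly as in the proof of Lemma~\ref{lemma:GDS}, the leftover $\frac{\alpha}{2}\cdot 2\eps \sum g_t r_t$ contributing an $O(\eps G_T)$-type slack (but only over phase 1, which is bounded by $\eps G^* = O(L/(\alpha \eps) \cdot \eps)$... actually one must be careful: I'd instead keep the $-\tfrac\alpha2 g_t(r_t-\eps)_+^2$ term and use it only when $r_t > 2\eps$, handling $r_t \leq 2\eps$ separately via $g_t \nabla\ell_t(x_t)^\top(x_t-x^*) \leq g_t L r_t \leq 2\eps g_t L$, summing to $2\eps L G_T$ over those rounds), and the $\sum \eta_t g_t^2 L^2/2 = \sum \frac{L^2 g_t^2}{\alpha G_t} \leq \frac{L^2}{\alpha}(\log(G_T+1)+1)$ gives the logarithmic term. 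In phase 2 the rate matches the standard $\Theta(1/\sqrt{G_t})$ Lipschitz schedule tuned with the "diameter" effectively $\eps$ (since once $G_t$ is that large, we only need to compete on the $\eps$-scale): the telescoping sum contributes $\lesssim \eps L \sqrt{G_T}$ and $\sum \eta_t g_t^2 L^2/2 = \sum \frac{\eps L g_t^2}{2\sqrt{G_t}} \lesssim \eps L \sqrt{G_T}$, together giving the $2\sqrt2 \eps L \sqrt{G_T}$ term. Phase 0 contributes only $O(L/\alpha)$, absorbed into the log term.

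The main obstacle I anticipate is handling the boundary terms at the two phase transitions and verifying the schedule is continuous/monotone enough that the Abel-summation inequalities go through cleanly — in particular checking that $R'$ is chosen exactly so that $1/\eta_t$ is non-decreasing across $G^*$ (one should verify $\frac{\alpha G^*}{2} = \frac{L\sqrt{G^*}}{\sqrt2\eps}$, which holds by the definition of $G^*$), so no negative jump in $1/\eta_t$ creates an uncontrolled $d_t$ term. A secondary subtlety is that in phase 2 we can no longer use strong convexity at all (points near the optimum at scale below $\eps$ are genuinely only Lipschitz), so the $\eps$ in the schedule really is playing the role of an effective domain diameter; one has to argue that competing against $x^*$ at resolution coarser than $\eps$ loses nothing, because $(r_t - \eps)_+^2 = 0$ precisely when $r_t \leq \eps$ and there $g_t(\ell_t(x_t)-\ell_t(x^*)) \leq g_t L \eps$ contributes at most $\eps L G_T$, already within budget. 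Combining the three phases and the $\sum_s$ over $s \in \Ke$ in the Blum–Mansour template (Theorem~\ref{thm:BMCS}) then yields Theorem~\ref{thm:GDK} as stated, and in turn the discretized-swap-regret bound of Theorem~\ref{thm:discretized_swap_regret_informal}.
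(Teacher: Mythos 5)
Your overall architecture matches the paper's proof of Theorem~\ref{thm:GDK}: the scaled OGD one-step inequality, the near-strong-convexity bound with the $\p{\norm{\xt-\x^*}-\eps}_+^2$ term activated only when $\norm{\xt-\x^*}\geq 2\eps$, Abel summation of the telescoping term, and the three-phase rate with the continuity check at $G^*=\p{\sqrt2 L/(\alpha\eps)}^2$ (which you verify correctly). However, there is a genuine quantitative gap in how you handle the rounds with $\norm{\xt-\x^*}\leq 2\eps$. You propose to exit the OGD recursion on those rounds and bound the per-round regret directly by $g_t\nabla\ell_t(\xt)^\top(\xt-\x^*)\leq 2\eps L g_t$ (and later ``$g_tL\eps$ \dots contributes at most $\eps LG_T$, already within budget''). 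Summed over the horizon this is $\Theta(\eps LG_T)$, which is \emph{not} within the budget $2\sqrt2\,\eps L\sqrt{G_T}$ --- it is larger by a factor of $\sqrt{G_T}$, and is precisely the $O(\log T+\eps T)$ bound the paper explicitly flags as insufficient (it is what one gets by pretending the losses are genuinely strongly convex). Restricting the argument to phase 1 does not rescue it: there $\sum g_t$ can be as large as $G^*=2L^2/(\alpha^2\eps^2)$, so $\eps LG^*=2L^3/(\alpha^2\eps)$, which exceeds the $\frac{L^2}{\alpha}\log$ budget by a factor of $L/(\alpha\eps)$. A secondary issue is that once you remove a non-contiguous subset of rounds from the recursion, the remaining $\sum_t (d_t-d_{t+1})/(2\eta_t)$ no longer telescopes cleanly.

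The fix --- and the paper's actual argument --- is to keep \emph{every} round inside the Abel-summed expression $\sum_t d_t\p{\frac{1}{\eta_t}-\frac{1}{\eta_{t-1}}-\frac{\alpha g_t}{2}\1\ps{\norm{\xt-\x^*}\geq 2\eps}}$ and use that the schedule satisfies $\frac{1}{\eta_t}-\frac{1}{\eta_{t-1}}\leq\frac{\alpha g_t}{2}$ globally: every summand with the indicator on is nonpositive, and every summand with the indicator off has $d_t\leq 4\eps^2$, so the whole sum is at most $4\eps^2\sum_t\p{\frac{1}{\eta_t}-\frac{1}{\eta_{t-1}}}\leq \frac{4\eps^2}{\eta_T}=O\p{\eps L\sqrt{G_T}}$. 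The point you are missing is that the admissible price for a ``close'' round is not $\eps Lg_t$ but $\eps^2$ times the \emph{increment} of $1/\eta_t$; it is the quadratic $\eps^2$ multiplied by the final inverse rate $1/\eta_T\sim L\sqrt{G_T}/\eps$ that yields $\eps L\sqrt{G_T}$. Your stated bound ``$d_t\leq 1$ wherever $1/\eta_t$ is increasing'' likewise only gives $O(1/\eta_T)=O(L\sqrt{G_T}/\eps)$, off by $1/\eps^2$. With this single correction the phase decomposition becomes unnecessary, and the rest of your argument (the $\sum_t\eta_tg_t^2L^2/2\leq L^2R(G_T)/2$ bound via concavity of $R$ and the final casework on $G_T$) goes through as in the paper.
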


\subsection{Swap Regret and Discretized $\ell_2$-calibration}

We now can use the algorithm in Theorem \ref{thm:GDK} together with the lossless rounding procedure in Lemma \ref{lemma:KH} in our algorithmic template to obtain the following bound:

\begin{theorem}\label{thm:disc-main}
    Let $\Ke \subset \RR$ be a set of reals such that $\Ke$ is an $\eps$-net of $\conv(\Ke)$ and $\conv(\Ke)$ has diameter 1.  Let $\ell_t$ be $\alpha$-strongly-convex, $L$-Lipschitz loss functions.  Let $\He$ be the rounding procedure described in \eqref{eq:nearly-rounding}. Let $\cA$ be the external-regret-minimizing subroutine Algorithm \ref{alg:GDK2}.

    Then the $\FSR$ achieved with respect to the discretized set $\Ke$ satisfies:
    \begin{equation*}
        \FSR = O\p{L\sqrt{\eps T}+\frac{L^2}{\alpha \eps}\log(T)}
    \end{equation*}
    For the regime $\frac{1}{\eps} = o(\sqrt{T})$, we improve on Theorem \ref{thm:full-main} for non-smooth losses, and for the regime $\frac{1}{\eps} = o(T^{1/3})$, we improve on Theorem \ref{thm:full-main} for $\beta$-smooth losses. 
\end{theorem}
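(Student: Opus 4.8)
The plan is to verify that this is a direct instantiation of the algorithmic template of Theorem~\ref{thm:BMCS}, with the three ingredients being: the discretization $\Ke$ (given as an $\eps$-net of $\conv(\Ke)$), the rounding procedure $\He$ from~\eqref{eq:nearly-rounding}, and the external-regret subroutine Algorithm~\ref{alg:GDK2} (the scaled OGD for nearly-strongly-convex losses from Theorem~\ref{thm:GDK}). Theorem~\ref{thm:BMCS} tells us that $\FSR \leq \delta T + \sum_{s \in \Ke} \Reg_s$, so I must control the rounding error $\delta$ and the sum of the scaled external regrets.

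The first key step is to observe that the rounding here is \emph{lossless} rather than merely $\delta$-approximate: since we pass each subroutine $\Alg_s$ not the original loss $\ell_t$ but its piecewise-linearization $\del_t$ (Definition~\ref{def:piece-lin}), Lemma~\ref{lemma:KH} gives $\del_t(x_t) = \E_{s \sim \He(x_t)}[\ell_t(s)]$ exactly, i.e. $\delta = 0$. Thus the discretization-error term in Theorem~\ref{thm:BMCS} vanishes entirely, and the full swap regret is bounded by $\sum_{s \in \Ke}\Reg_s$ where $\Reg_s$ is the scaled external regret of the subroutine $\Alg_s$ run on the losses $\del_t$ over $\conv(\Ke)$.

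The second step is to bound each $\Reg_s$. By Lemma~\ref{lemma:linearized_is_nearly_strongly_convex}, each $\del_t$ is $(\alpha,\eps)$-nearly-strongly-convex (using that an $\eps$-net of a $1$-dimensional interval has consecutive gaps at most $\eps$, up to constants — or invoking the $\eps$-triangulation version); it is also $L$-Lipschitz since its slopes are slopes of chords of the $L$-Lipschitz $\ell_t$. Applying Theorem~\ref{thm:GDK} with scales $g_{t,s} = \x_t[s]$ gives $\Reg_s \leq 2\sqrt{2}\,\eps L\sqrt{G_{T,s}} + \frac{L^2}{\alpha}(\log(G_{T,s}+1)+1)$ where $G_{T,s} = \sum_t \x_t[s]$. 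Summing over $s \in \Ke$, using $\sum_s G_{T,s} = T$, concavity of $\sqrt{\cdot}$ and $\log(\cdot)$ (so $\sum_s \sqrt{G_{T,s}} \leq \sqrt{|\Ke|\,T}$ and $\sum_s \log(G_{T,s}+1) \leq |\Ke|\log(T/|\Ke| + 1)$), together with $|\Ke| = O(1/\eps)$, yields
\[
\FSR = O\!\p{L\sqrt{\eps T} + \frac{L^2}{\alpha\eps}\log T}.
\]
The final sentence comparing against Theorem~\ref{thm:full-main} is a routine arithmetic comparison of exponents: for non-smooth losses the prior bound was $\tilde O(T^{(d+1)/(d+2)}) = \tilde O(T^{2/3})$ at $d=1$, which the new bound $\tilde O(\sqrt{\eps T} + (\eps)^{-1})$ beats whenever $1/\eps = o(\sqrt T)$; similarly against the $\beta$-smooth bound $\tilde O(T^{3/5})$ when $1/\eps = o(T^{1/3})$.

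The main obstacle — and the only genuinely nontrivial point — is not in this proof itself but in the machinery it invokes: Theorem~\ref{thm:GDK}, whose learning-rate schedule (decaying like $1/t$ then switching to $1/\sqrt{t}$) is what makes the $\sqrt{\eps T}$ term appear in place of the $\eps T$ one would get from naively treating $\del_t$ as strongly convex. Within the present proof, the one place requiring a little care is the summation over $\Ke$: one must use Jensen/concavity correctly on both the $\sqrt{G_{T,s}}$ and $\log(G_{T,s}+1)$ terms and confirm that $|\Ke| = O(1/\eps)$ follows from $\Ke$ being an $\eps$-net of a diameter-$1$ interval, so that the two terms balance at the claimed rate.
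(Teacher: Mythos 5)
Your proposal is correct and follows essentially the same route as the paper's own proof: lossless rounding via Lemma~\ref{lemma:KH} (so $\delta = 0$ in Theorem~\ref{thm:BMCS}), Lemma~\ref{lemma:linearized_is_nearly_strongly_convex} plus Theorem~\ref{thm:GDK} for each scaled subroutine, and a Jensen/Cauchy--Schwarz summation over the $O(1/\eps)$ points of $\Ke$ using $\sum_s G_{T,s} = T$. If anything your write-up is slightly more careful than the paper's (which cites Lemma~\ref{lemma:KH} where it means the nearly-strongly-convex lemma), though note that both arguments tacitly assume $|\Ke| = O(1/\eps)$, which is satisfied by the intended grid discretization but is not literally implied by the $\eps$-net hypothesis alone.
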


Theorem \ref{thm:disc-main} can be directly applied to our formulation of $\ell_2$-calibration as swap regret to prove our $O(\max(\sqrt{\eps T}, T^{1/3}))$ regret bound for $\eps$-discretized calibration (Theorem \ref{thm:discrete_calib_bound}). This is deferred to Appendix \ref{app:discrete_calib_bound}.

\acks{Princewill Okoroafor is supported by the Linkedin-Cornell CIS Fellowship Grant.}

\bibliography{references}

\begin{thebibliography}{38}
\providecommand{\natexlab}[1]{#1}
\providecommand{\url}[1]{\texttt{#1}}
\expandafter\ifx\csname urlstyle\endcsname\relax
  \providecommand{\doi}[1]{doi: #1}\else
  \providecommand{\doi}{doi: \begingroup \urlstyle{rm}\Url}\fi

\bibitem[Bai et~al.(2022)Bai, Jin, Mei, Song, and Yu]{bai2022efficient}
Yu~Bai, Chi Jin, Song Mei, Ziang Song, and Tiancheng Yu.
\newblock Efficient phi-regret minimization in extensive-form games via online
  mirror descent.
\newblock \emph{Advances in Neural Information Processing Systems},
  35:\penalty0 22313--22325, 2022.

\bibitem[Blum and Mansour(2007)]{blum2007external}
Avrim Blum and Yishay Mansour.
\newblock From external to internal regret.
\newblock \emph{Journal of Machine Learning Research}, 8\penalty0 (6), 2007.

\bibitem[Braverman et~al.(2018)Braverman, Mao, Schneider, and
  Weinberg]{braverman2018selling}
Mark Braverman, Jieming Mao, Jon Schneider, and Matt Weinberg.
\newblock Selling to a no-regret buyer.
\newblock In \emph{Proceedings of the 2018 ACM Conference on Economics and
  Computation}, pages 523--538, 2018.

\bibitem[Brier(1950)]{brier1950verification}
Glenn~W Brier.
\newblock Verification of forecasts expressed in terms of probability.
\newblock \emph{Monthly weather review}, 78\penalty0 (1):\penalty0 1--3, 1950.

\bibitem[Bronshteyn and Ivanov(1975)]{bronshteyn1975approximation}
Efim~M Bronshteyn and LD~Ivanov.
\newblock The approximation of convex sets by polyhedra.
\newblock \emph{Siberian Mathematical Journal}, 16\penalty0 (5):\penalty0
  852--853, 1975.

\bibitem[Brown and Sandholm(2018)]{brown2018superhuman}
Noam Brown and Tuomas Sandholm.
\newblock Superhuman ai for heads-up no-limit poker: Libratus beats top
  professionals.
\newblock \emph{Science}, 359\penalty0 (6374):\penalty0 418--424, 2018.

\bibitem[Brown et~al.(2024)Brown, Schneider, and Vodrahalli]{brown2024learning}
William Brown, Jon Schneider, and Kiran Vodrahalli.
\newblock Is learning in games good for the learners?
\newblock \emph{Advances in Neural Information Processing Systems}, 36, 2024.

\bibitem[Cai et~al.(2023)Cai, Weinberg, Wildenhain, and Zhang]{cai2023selling}
Linda Cai, S~Matthew Weinberg, Evan Wildenhain, and Shirley Zhang.
\newblock Selling to multiple no-regret buyers.
\newblock In \emph{International Conference on Web and Internet Economics},
  pages 113--129. Springer, 2023.

\bibitem[Camara et~al.(2020)Camara, Hartline, and
  Johnsen]{camara2020mechanisms}
Modibo~K Camara, Jason~D Hartline, and Aleck Johnsen.
\newblock Mechanisms for a no-regret agent: Beyond the common prior.
\newblock In \emph{2020 ieee 61st annual symposium on foundations of computer
  science (focs)}, pages 259--270. IEEE, 2020.

\bibitem[Celli et~al.(2021)Celli, Marchesi, Farina, Gatti,
  et~al.]{celli2021decentralized}
Andrea Celli, Alberto Marchesi, Gabriele Farina, Nicola Gatti, et~al.
\newblock Decentralized no-regret learning algorithms for extensive-form
  correlated equilibria.
\newblock In \emph{IJCAI}, pages 4755--4759, 2021.

\bibitem[Cesa-Bianchi and Lugosi(2006)]{cesa2006prediction}
Nicolo Cesa-Bianchi and G{\'a}bor Lugosi.
\newblock \emph{Prediction, learning, and games}.
\newblock Cambridge university press, 2006.

\bibitem[Chakrabarti et~al.(2024)Chakrabarti, Farina, and
  Kroer]{chakrabarti2024efficient}
Darshan Chakrabarti, Gabriele Farina, and Christian Kroer.
\newblock Efficient learning in polyhedral games via best-response oracles.
\newblock In \emph{Proceedings of the AAAI Conference on Artificial
  Intelligence}, volume~38, pages 9564--9572, 2024.

\bibitem[Dagan et~al.(2023)Dagan, Daskalakis, Fishelson, and
  Golowich]{dagan2023external}
Yuval Dagan, Constantinos Daskalakis, Maxwell Fishelson, and Noah Golowich.
\newblock From external to swap regret 2.0: An efficient reduction and
  oblivious adversary for large action spaces.
\newblock \emph{arXiv preprint arXiv:2310.19786}, 2023.

\bibitem[Dagan et~al.(2024)Dagan, Daskalakis, Fishelson, Golowich, Kleinberg,
  and Okoroafor]{dagan2024breakingt23barriersequential}
Yuval Dagan, Constantinos Daskalakis, Maxwell Fishelson, Noah Golowich, Robert
  Kleinberg, and Princewill Okoroafor.
\newblock Breaking the $t^{2/3}$ barrier for sequential calibration, 2024.
\newblock URL \url{https://arxiv.org/abs/2406.13668}.

\bibitem[Daskalakis et~al.(2024)Daskalakis, Farina, Fishelson, Pipis, and
  Schneider]{daskalakis2024efficient}
Constantinos Daskalakis, Gabriele Farina, Maxwell Fishelson, Charilaos Pipis,
  and Jon Schneider.
\newblock Efficient learning and computation of linear correlated equilibrium
  in general convex games.
\newblock \emph{arXiv preprint arXiv:2412.20291}, 2024.

\bibitem[Deng et~al.(2019{\natexlab{a}})Deng, Schneider, and
  Sivan]{deng2019prior}
Yuan Deng, Jon Schneider, and Balasubramanian Sivan.
\newblock Prior-free dynamic auctions with low regret buyers.
\newblock \emph{Advances in Neural Information Processing Systems}, 32,
  2019{\natexlab{a}}.

\bibitem[Deng et~al.(2019{\natexlab{b}})Deng, Schneider, and
  Sivan]{deng2019strategizing}
Yuan Deng, Jon Schneider, and Balasubramanian Sivan.
\newblock Strategizing against no-regret learners.
\newblock \emph{Advances in neural information processing systems}, 32,
  2019{\natexlab{b}}.

\bibitem[Farina and Pipis(2024)]{farina2024polynomial}
Gabriele Farina and Charilaos Pipis.
\newblock Polynomial-time linear-swap regret minimization in
  imperfect-information sequential games.
\newblock \emph{Advances in Neural Information Processing Systems}, 36, 2024.

\bibitem[Foster and Vohra(1997)]{foster1997calibrated}
Dean~P Foster and Rakesh~V Vohra.
\newblock Calibrated learning and correlated equilibrium.
\newblock \emph{Games and Economic Behavior}, 21\penalty0 (1-2):\penalty0
  40--55, 1997.

\bibitem[Foster and Vohra(1998)]{foster1998asymptotic}
Dean~P Foster and Rakesh~V Vohra.
\newblock Asymptotic calibration.
\newblock \emph{Biometrika}, 85\penalty0 (2):\penalty0 379--390, 1998.

\bibitem[Freund and Schapire(1996)]{freund1996game}
Yoav Freund and Robert~E Schapire.
\newblock Game theory, on-line prediction and boosting.
\newblock In \emph{Proceedings of the ninth annual conference on Computational
  learning theory}, pages 325--332, 1996.

\bibitem[Goodfellow et~al.(2020)Goodfellow, Pouget-Abadie, Mirza, Xu,
  Warde-Farley, Ozair, Courville, and Bengio]{goodfellow2020generative}
Ian Goodfellow, Jean Pouget-Abadie, Mehdi Mirza, Bing Xu, David Warde-Farley,
  Sherjil Ozair, Aaron Courville, and Yoshua Bengio.
\newblock Generative adversarial networks.
\newblock \emph{Communications of the ACM}, 63\penalty0 (11):\penalty0
  139--144, 2020.

\bibitem[Gopalan et~al.(2022)Gopalan, Kalai, Reingold, Sharan, and
  Wieder]{gopalan2022omnipredictors}
Parikshit Gopalan, Adam~Tauman Kalai, Omer Reingold, Vatsal Sharan, and Udi
  Wieder.
\newblock Omnipredictors.
\newblock In \emph{13th Innovations in Theoretical Computer Science Conference
  (ITCS 2022)}. Schloss Dagstuhl-Leibniz-Zentrum f{\"u}r Informatik, 2022.

\bibitem[Gordon et~al.(2008)Gordon, Greenwald, and
  Marks]{GordonGreenwaldMarks2008}
Geoffrey~J. Gordon, Amy Greenwald, and Casey Marks.
\newblock No-regret learning in convex games.
\newblock In \emph{Proceedings of {ICML}}, volume 307, pages 360--367. {ACM},
  2008.

\bibitem[Gr{\"o}tschel et~al.(2012)Gr{\"o}tschel, Lov{\'a}sz, and
  Schrijver]{grotschel2012geometric}
Martin Gr{\"o}tschel, L{\'a}szl{\'o} Lov{\'a}sz, and Alexander Schrijver.
\newblock \emph{Geometric algorithms and combinatorial optimization}, volume~2.
\newblock Springer Science \& Business Media, 2012.

\bibitem[Haghtalab et~al.(2024)Haghtalab, Podimata, and
  Yang]{haghtalab2024calibrated}
Nika Haghtalab, Chara Podimata, and Kunhe Yang.
\newblock Calibrated stackelberg games: Learning optimal commitments against
  calibrated agents.
\newblock \emph{Advances in Neural Information Processing Systems}, 36, 2024.

\bibitem[Hazan(2022)]{hazan2022introduction}
Elad Hazan.
\newblock \emph{Introduction to online convex optimization}.
\newblock MIT Press, 2022.

\bibitem[Hu and Wu(2024)]{hu2024calibrationerrordecisionmaking}
Lunjia Hu and Yifan Wu.
\newblock Calibration error for decision making, 2024.
\newblock URL \url{https://arxiv.org/abs/2404.13503}.

\bibitem[Kleinberg et~al.(2023)Kleinberg, Leme, Schneider, and
  Teng]{kleinberg2023u}
Bobby Kleinberg, Renato~Paes Leme, Jon Schneider, and Yifeng Teng.
\newblock U-calibration: Forecasting for an unknown agent.
\newblock In \emph{The Thirty Sixth Annual Conference on Learning Theory},
  pages 5143--5145. PMLR, 2023.

\bibitem[Mansour et~al.(2022)Mansour, Mohri, Schneider, and
  Sivan]{mansour2022strategizing}
Yishay Mansour, Mehryar Mohri, Jon Schneider, and Balasubramanian Sivan.
\newblock Strategizing against learners in bayesian games.
\newblock In \emph{Conference on Learning Theory}, pages 5221--5252. PMLR,
  2022.

\bibitem[Murphy(1972)]{murphy1972scalar}
Allan~H Murphy.
\newblock Scalar and vector partitions of the probability score: Part i.
  two-state situation.
\newblock \emph{Journal of Applied Meteorology and Climatology}, 11\penalty0
  (2):\penalty0 273--282, 1972.

\bibitem[Murphy(1973)]{murphy1973new}
Allan~H Murphy.
\newblock A new vector partition of the probability score.
\newblock \emph{Journal of Applied Meteorology and Climatology}, 12\penalty0
  (4):\penalty0 595--600, 1973.

\bibitem[Peng and Rubinstein(2023)]{peng2023fast}
Binghui Peng and Aviad Rubinstein.
\newblock Fast swap regret minimization and applications to approximate
  correlated equilibria.
\newblock \emph{arXiv preprint arXiv:2310.19647}, 2023.

\bibitem[Qiao and Valiant(2021)]{qiao2021stronger}
Mingda Qiao and Gregory Valiant.
\newblock Stronger calibration lower bounds via sidestepping.
\newblock In \emph{Proceedings of the 53rd Annual ACM SIGACT Symposium on
  Theory of Computing}, pages 456--466, 2021.

\bibitem[Roth and Shi(2024)]{roth2024forecasting}
Aaron Roth and Mirah Shi.
\newblock Forecasting for swap regret for all downstream agents.
\newblock In \emph{Proceedings of the 25th ACM Conference on Economics and
  Computation}, pages 466--488, 2024.

\bibitem[Silver et~al.(2017)Silver, Schrittwieser, Simonyan, Antonoglou, Huang,
  Guez, Hubert, Baker, Lai, Bolton, et~al.]{silver2017mastering}
David Silver, Julian Schrittwieser, Karen Simonyan, Ioannis Antonoglou, Aja
  Huang, Arthur Guez, Thomas Hubert, Lucas Baker, Matthew Lai, Adrian Bolton,
  et~al.
\newblock Mastering the game of go without human knowledge.
\newblock \emph{nature}, 550\penalty0 (7676):\penalty0 354--359, 2017.

\bibitem[Vershynin(2018)]{vershynin2018high}
Roman Vershynin.
\newblock \emph{High-dimensional probability: An introduction with applications
  in data science}, volume~47.
\newblock Cambridge university press, 2018.

\bibitem[Zhang et~al.(2024)Zhang, Anagnostides, Farina, and
  Sandholm]{zhang2024efficient}
Brian~Hu Zhang, Ioannis Anagnostides, Gabriele Farina, and Tuomas Sandholm.
\newblock Efficient phi-regret minimization with low-degree swap deviations in
  extensive-form games.
\newblock \emph{arXiv preprint arXiv:2402.09670}, 2024.

\end{thebibliography}

\appendix
\section{Existence of triangulations}\label{app:triangulation-proof}

In this appendix, we provide a self-contained proof that sufficiently good triangulations exist, establishing Lemma \ref{lemma:triangulation}. Our main tool will be the following theorem of \cite{bronshteyn1975approximation} on approximations of convex sets by polytopes.

\begin{theorem}[\cite{bronshteyn1975approximation}]\label{thm:bi}
Let $\cK$ be any $d$-dimensional convex set contained in the unit ball. Then, for any $\delta < 0.01$, there exists a polytope $\cP$ contained in $\cK$ with at most $O(\sqrt{d}\cdot\delta^{-(d-1)/2})$ vertices such that $\max_{x \in \cK} \dist(x, \cP) \leq \delta$. 
\end{theorem}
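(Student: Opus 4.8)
\noindent\emph{Proof proposal.} This is a classical fact (due to Bronshteyn--Ivanov, with the sharp exponent also present in work of Dudley), so the plan is to reconstruct the standard \emph{cap-covering} argument. For a unit vector $u$ and parameter $\delta>0$, write $h_\cK(u)=\max_{x\in\cK}\inp{x,u}$ for the support function and call $C_\delta(u):=\set{x\in\cK : \inp{x,u}\ge h_\cK(u)-\delta}$ the $\delta$-cap of $\cK$ in direction $u$. The construction rests on an elementary observation: a polytope $\cP\subseteq\cK$ satisfies $\max_{x\in\cK}\dist(x,\cP)\le\delta$ iff $h_\cP(u)\ge h_\cK(u)-\delta$ for every unit $u$ (separate any far point of $\cK$ from $\cP$ by a hyperplane), and this is guaranteed as soon as \emph{every $\delta$-cap of $\cK$ contains at least one vertex of $\cP$}. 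So it suffices to exhibit a small set of boundary points that hits all $\delta$-caps, and to let $\cP$ be their convex hull.

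To build such a set I would proceed greedily. Maintain a list of boundary points $x_1,x_2,\dots$, each equipped with an outer unit normal $u_i$ of $\cK$ at $x_i$, under the invariant that $\inp{x_j-x_i,u_i}<-\delta/2$ for all $i<j$ (each new point lies outside the half-depth caps of the earlier ones); stop once the list is maximal, obtaining $x_1,\dots,x_N$, and set $\cP=\conv\set{x_1,\dots,x_N}$, which trivially lies in $\cK$. The approximation guarantee then comes from maximality: given a unit $u$ with maximizer $z=\arg\max_{x\in\cK}\inp{x,u}$, either $z$ (with normal $u$) could be appended to the list---impossible---or $z$ already lies in $C_{\delta/2}(u_i)$ for some $i$, and using that $\cK$ lies in the unit ball so that $\delta$-deep caps have diameter $O(\sqrt\delta)$ (after placing $\cK$ in a suitable position) one checks $\inp{x_i,u}\ge h_\cK(u)-\delta$, which is exactly what is needed.

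It remains to count: the target is $N=O(\sqrt d\,\delta^{-(d-1)/2})$. The half-depth caps $C_{\delta/2}(u_i)$ mutually exclude one another's centres, so the pairs $(x_i,u_i)$ form a $\Theta(\sqrt\delta)$-separated set in the unit normal bundle of $\partial\cK$. Because $\cK$ sits inside the unit ball, this normal bundle is a $(d-1)$-dimensional object whose $(d-1)$-dimensional content is $O(\sqrt d)$ times that of the sphere $S^{d-1}$, and a volumetric packing bound then yields the claimed count---the exponent $\tfrac{d-1}{2}$ rather than $d-1$ coming precisely from the $\sqrt\delta$ (not $\delta$) separation.

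I expect the two genuinely delicate steps to be: (i) making the cap-diameter estimate in the approximation step uniform over \emph{all} convex bodies in the unit ball---thin, pancake-like $\cK$ break the naive estimate and must be handled either by an affine (John-ellipsoid) normalization whose distortion is absorbed into the dimensional constants, or by a separate induction on dimension for bodies of tiny inradius; and (ii) extracting the correct $\sqrt d$ dependence in the counting step, which requires a genuine content bound on the normal bundle rather than the lossy $O(\delta^{-d})$ one gets by treating $(x_i,u_i)$ as an arbitrary separated set in $\RR^{2d}$. The remaining manipulations with support functions are routine.
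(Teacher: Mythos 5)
The paper does not actually prove this statement: Theorem~\ref{thm:bi} is imported verbatim from \cite{bronshteyn1975approximation} as a black box (the paper's own work in Appendix~\ref{app:triangulation-proof} only builds on it), so there is no internal proof to compare against and your reconstruction must stand on its own. It does not quite: the approximation step has a genuine gap. Maximality of your greedy list gives you, for a direction $u$ with maximizer $z$, some index $i$ with $\inp{z-x_i,u_i}\ge -\delta/2$, and you then want to conclude $\inp{x_i,u}\ge h_\cK(u)-\delta$, i.e.\ $\inp{z-x_i,u}\le\delta$. This implication is false in general, and not only for degenerate bodies. Take $d=2$, $\cK=\conv\set{A,B,C}$ with $A=(-1,0)$, $B=(1,0)$, $C=(0,\sqrt{\delta})$; let $x_i=A$ with the valid outer normal $u_i=(0,-1)$, and let $u$ be the outer normal of edge $BC$ with maximizer $z=B$. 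Then $\inp{z-x_i,u_i}=0\ge-\delta/2$, yet $\inp{z-x_i,u}\approx 2\sqrt{\delta}\gg\delta$. The point is that membership of $z$ in a shallow cap of $x_i$ controls neither $\norm{z-x_i}$ nor $\norm{u-u_i}$ in a way that makes their product $O(\delta)$, because normals of a general convex body jump discontinuously. Your flagged fix (i) does not repair this: the example is full-dimensional, and a John-type affine normalization both destroys the Euclidean cap structure and costs a factor polynomial in $d$ raised to the power $(d-1)/2$, which is not absorbed by the stated $O(\sqrt{d})$.

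Your counting step, by contrast, is essentially sound and in fact already contains the correct object: the one-sided invariant forces $\inp{x_i-x_j,u_i-u_j}\ge\delta/2$, hence the lifted points $\tilde{x}_i=x_i+u_i\in\partial(\cK+B^d)$ are $\sqrt{\delta}$-separated, and a half-ball-in-a-thin-shell volume argument on $\partial(\cK+B^d)\subseteq 2B^d$ gives $N=O(1)^d\,\delta^{-(d-1)/2}$ with the exponent $d-1$ rather than $d$. The standard proofs (Bronshteyn--Ivanov's lift of $\partial\cK$ to a sphere, or Dudley's construction that projects a $\sqrt{\delta}$-net of the sphere of radius $2$ onto $\cK$) succeed precisely because they run the \emph{covering} argument on this lifted, uniformly curved object rather than on caps of $\cK$ itself: there, $1$-Lipschitzness of the metric projection onto $\cK$ and the fact that the outer sphere is at distance at least $1$ from $\cK$ force $\norm{u-u_i}=O(\sqrt{\delta})$ and $\norm{z-x_i}=O(\sqrt{\delta})$ simultaneously, for arbitrary (even flat) convex bodies. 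To salvage your argument you should make the greedy set maximal (i.e.\ a net, not just a packing) with respect to the lifted metric $\norm{\tilde{x}-\tilde{y}}$ and redo the approximation estimate there; as written, the covering direction of your argument is the step that fails.
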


To form a good triangulation of $\cK$, we will apply Theorem~\ref{thm:bi} to the epigraph of a quadratic function over $\cK$. The epigraph of a convex function $f: \cK \rightarrow \mathbb{R}$ is defined to be the $(d+1)$-dimensional convex set $\graph(f) = \{(x, y) \mid x \in \cK, y \geq f(x)\}$. The following lemma relates the closest distance between a point and the epigraph of $f$ to the vertical distance between the point and the epigraph of $f$.

\begin{lemma}\label{lem:epigraph-distance}
Let $\cK$ be a $d$-dimensional convex set contained in the unit ball and let $f: \cK \rightarrow \mathbb{R}$ be a convex function with bounded subgradient $\norm{\nabla f(x)} \leq G$. Then given any point $p = (x, y) \in \mathbb{R}^{d+1}$ with $y \leq f(x)$, we have that

$$f(x) - y \leq \sqrt{1+G^2}\dist(p, \graph(f)).$$
\end{lemma}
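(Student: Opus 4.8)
The plan is to bound the Euclidean distance from $p = (x,y)$ to the epigraph $\graph(f)$ from below by relating it to the vertical gap $f(x) - y$. Intuitively, any point $q = (x', z) \in \graph(f)$ realizing (or nearly realizing) the distance cannot be too far ``below'' $p$ in the last coordinate, because $f$ does not drop too steeply near $x$ (its subgradients are bounded by $G$), so covering horizontal distance $\norm{x - x'}$ can only decrease the function value by at most $G \norm{x-x'}$.

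First I would let $q = (x', z)$ be the closest point of $\graph(f)$ to $p$, so $z \geq f(x')$ and $\dist(p, \graph(f))^2 = \norm{x - x'}^2 + (y - z)^2$. Since we are trying to minimize distance and $z$ only needs to satisfy $z \geq f(x')$, and since $y \leq f(x) $ — I should check whether $y \leq f(x') $ as well, which would let me take $z = f(x')$ at the optimum; in any case $z \geq f(x')$. Then I would use the subgradient bound: $f(x) \leq f(x') + \norm{\nabla f(x')} \cdot \norm{x - x'} \leq f(x') + G\norm{x-x'}$ (applying the defining inequality of convexity with a subgradient at $x'$ and Cauchy–Schwarz). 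Combining, $f(x) - y \leq f(x') + G\norm{x-x'} - y \leq z + G\norm{x - x'} - y = G\norm{x-x'} + (z - y)$. Now $z - y = |z - y|$ since $z \geq f(x') \geq f(x) - G\norm{x-x'}$ and... actually the cleanest route: write $f(x) - y \leq G\norm{x - x'} + (z-y) \leq G\norm{x-x'} + |z - y|$, and then apply Cauchy–Schwarz to the two-term vector $(G, 1) \cdot (\norm{x-x'}, |z-y|)$ to get $f(x) - y \leq \sqrt{1 + G^2} \cdot \sqrt{\norm{x-x'}^2 + (z-y)^2} = \sqrt{1+G^2}\,\dist(p, \graph(f))$, as desired.

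The one subtlety I expect to be the main obstacle — really just a bookkeeping point — is justifying that $z - y \geq 0$, or more precisely that I may replace $z - y$ by $|z-y|$ without weakening the bound, and handling the sign carefully so the final Cauchy–Schwarz step goes through cleanly. Since $z \geq f(x')$ and $f(x') \geq f(x) - G\norm{x-x'}$, we have $z \geq f(x) - G\norm{x-x'} \geq y - G\norm{x-x'}$, so $z - y \geq -G\norm{x-x'}$; this is not quite nonnegativity, but it still gives $f(x) - y \leq G\norm{x-x'} + (z-y)$ with the right-hand side controlled, and one can argue that at the true minimizer $z = \max(f(x'), y)$ so in fact $z \geq y$ and $z - y = |z-y|$. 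After that, the inequality is immediate. I would also note for completeness that if $y \geq f(x)$ the statement is vacuous, and the hypothesis $y \leq f(x)$ is exactly what makes $f(x) - y$ the relevant nonnegative quantity.
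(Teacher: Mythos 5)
Your argument is correct and gives the stated bound, but it is packaged differently from the paper's proof. The paper takes the tangent hyperplane $H$ to $f$ at $x$, observes that $H$ separates $p$ from the epigraph (so $\dist(p,H) \le \dist(p,\graph(f))$), and then reads off $\dist(p,H) = (f(x)-y)/\sqrt{1+\norm{\nabla f(x)}^2}$ from the point-to-hyperplane distance formula. You instead fix an arbitrary $q=(x',z)$ in the epigraph, bound $f(x)-y \le G\norm{x-x'} + (z-y)$, and finish with Cauchy--Schwarz applied to $(G,1)$ and $(\norm{x-x'},\mg{z-y})$; since this holds for every $q \in \graph(f)$, taking the infimum over $q$ gives the lemma, so you do not even need a closest point to exist, and your worry about the sign of $z-y$ is moot because $z-y \le \mg{z-y}$ always. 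The two proofs are really the same estimate in two guises: your Cauchy--Schwarz step is the distance-to-hyperplane formula unpacked, and the Lipschitz bound you invoke is the separation statement. One small correction: the inequality $f(x)\le f(x')+G\norm{x-x'}$ does not follow from the subgradient inequality at $x'$ (that inequality, $f(x)\ge f(x')+\nabla f(x')^\top(x-x')$, gives a \emph{lower} bound on $f(x)$); it follows from the subgradient inequality at $x$, namely $f(x')\ge f(x)+\nabla f(x)^\top(x'-x)\ge f(x)-G\norm{x'-x}$, i.e., from the $G$-Lipschitzness of $f$ implied by the uniform subgradient bound. With that attribution fixed, the proof is complete.
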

\begin{proof}
Consider any hyperplane $H$ tangent to $f$ at $x$. This hyperplane separates the point $p$ from the graph of the function $f$, so $\dist(p, H) \leq \dist(p, \graph(f))$. On the other hand, the distance from $p$ to $H$ is given by

$$\dist(p, H) = \frac{f(x) - y}{\sqrt{1 + \norm{\nabla f(x)}^2}}.$$

Since $\norm{\nabla f(x)} \leq G$, the result follows.
\end{proof}

We can now prove our main lemma.

\begin{proof}[Proof of Lemma~\ref{lemma:triangulation}]

We will handle both constraints of the definition of an $\eps$-triangulation by separate applications of Theorem~\ref{thm:bi}. We start with the first constraint (that all elements on the boundary of $\cK$ must be within $\eps^2$ of $\conv(\Ke)$). To do this, note that we can directly apply Theorem~\ref{thm:bi} to the convex set $\cK$ (with parameter $\delta = \eps^2$) to obtain a set $K_1$ of at most $O(\sqrt{d}\eps^{-(d-1)})$ points with the property that $\max_{x \in \cK} \dist(x, \conv(K_1)) \leq \eps^2$. 

This construction is only guaranteed to well approximate the boundary of $\cK$. To approximate the interior of $\cK$ and satisfy the second condition, let $f: \conv(K_1) \rightarrow \mathbb{R}$ be the  convex function $f(x) = \norm{x}^2$. Consider the convex set $\cK' = \graph(f) \cap \{y \leq 1\}$ formed by truncating the epigraph of $f$ for all $y \geq 1$ (note that since $\norm{x} \leq 1$ for $x \in \cK$, this does not remove any of the vertices of the form $(x, f(x))$). Since $f(x) \leq 1$ for $(x,y) \in \cK$, the set $\cK'$ is contained within a ball of radius $\sqrt{2}$, so by applying Theorem~\ref{thm:bi} to a scaled version of $\cK'$ (with parameter $\delta = \eps^2$) we obtain a set $K_2$ containing at most $O(\sqrt{d}\eps^{-d})$ points in $\cK'$ with the property that $\max_{x \in \cK'} \dist(x, \conv(K_2)) \leq 2\eps^2$. Now, without loss of generality we will make two modifications to $K_2$:

\begin{itemize}
    \item First, for each vertex $x \in K_1$, we will assume that $(x, 1) \in K_2$. If not, we can just add all these points to $K_2$ (this doesn't affect the asymptotic number of points in $K_2$).
    \item For all other points $(x, y) \in K_2$, we assume that $y = f(x)$ (i.e., this point lies on the boundary of $K_2$). If not, note that replacing it with $(x, f(x))$ strictly increases the size of $\conv(K_2)$ (in particular, $(x, y)$ is a convex combination of $(x, f(x))$ and $(x, 1)$, and the latter point belongs to $\conv(K_2)$ by the previous bullet).
\end{itemize}

We choose $\Ke$ to be the projection of $K_2$ onto the first $d$ coordinates. To see that this satisfies the second constraint, consider the lower envelope of $\conv(K_2)$; that is, the function $g: \conv(K_1) \rightarrow \mathbb{R}$ defined so that $g(x)$ equals the minimum $y$ such that $(x, y) \in \conv(K_2)$ (for this $g$, $\graph(g) \cap \{y \leq 1\} = \conv(K_2)$). Note that the subgradient of $g$ is bounded by the maximum subgradient of $f$ over $\conv(K_1)$. The gradient $\nabla f(x) = 2\cdot ||x|| \leq 2$, so $\norm{\nabla g(x)} \leq 2$.

Now, consider any $x \in \conv(K_1)$ and let $p = (x, f(x))$. By the guarantees of Theorem~\ref{thm:bi}, $\dist(p, \graph(g)) = \dist(p, \conv(K_2)) \leq \eps^2$. By Lemma~\ref{lem:epigraph-distance}, this implies that $g(x) - f(x) \leq \sqrt{5}\eps^2$. But $(x, g(x))$ also lies on some $d$-dimensional face of $K_2$ and can be written as a convex hull of at most $d+1$ points in $K_2$ of the form $(x_i, f(x_i))$. We will choose $\Ke(x)$ to be this set of $x_i$. Write $x = \sum_i \lambda_i x_i$; then $g(x) = \sum_{i} \lambda_i f(x_i) = \sum_{i} \lambda_i \norm{x_i}^2$. We can now express 

\begin{eqnarray*}
g(x) - f(x) &=& \sum_{i} \lambda_i\norm{x_i}^2 - \norm{x}^2 \\
&=& \sum_{i}\lambda_i\norm{x + (x_i - x)}^2 - \norm{x}^2 \\
&=& \sum_{i}\lambda_i \norm{x_i - x}^2
\end{eqnarray*}

In particular, substituting $x = \frac{1}{2}x_i + \frac{1}{2}x_j$ for $i \neq j$ above and using  $g(x) - f(x) \leq \sqrt{5}\eps^2$, we obtain: $\frac{1}{2}\norm{x_i - x_j}^2 \leq \sqrt{5}\eps^2.$
It follows that for any $x_i, x_j \in \Ke(x)$, $\norm{x_i - x_j} \leq O(\eps)$. It follows that $\diam(\Ke(x)) \leq O(\eps)$, as desired. \end{proof}

\section{Analysis of the Algorithmic Template}\label{appendix:bm}

Here, we detail our main algorithmic templates for swap regret minimization and prove \Cref{thm:BMCS}.  We begin with our algorithm for strongly convex and nearly-strongly-convex losses.  We first re-state the algorithm in pseudocode:

\begin{algorithm}
\caption{Blum Mansour for Convex Sets and Convex losses}
\label{alg:BMCS}

\KwIn{Convex set $\cK \subseteq \mathbb{R}^d$, diameter 1}
\KwIn{Loss family $\cL \subseteq \{\ell: \cK \to \RR\}$}
\KwIn{$\Ke$: discretization of $\cK$}
\KwIn{$\He$: rounding procedure $\cK \to \Delta(\Ke)$}
\KwIn{External-regret-minimizing algorithm $\cA$; outputs to $\cK$}
\KwIn{For all $t \in [T]$: $\lt \in \cL$ revealed sequentially}
\KwOut{For all $t \in [T]$: strategies $\xt \in \Delta(\Ke)$}

\For{$s \in \Ke$}{
    Instantiate copy of $\cA$: $\Alg_s$
}

\For{$t = 1:T$}{
    \For{$s \in \Ke$}{
        $q_{s,t} \gets \Alg_s$ recommendation at time $t$
        
        $\Q_t[s] \gets \He(q_{s,t})$
        
        \tcp{Setting row $s$ of a $\Ke \times \Ke$ row-stochastic matrix}
    }
    $\xt \gets \text{stationary distribution}(\Q_t)$
    
    \Return $\xt$
    
    \textbf{observe} $\lt$
    
    \For{$s \in \Ke$}{
        \textbf{update} $\Alg_s$ with $\lt$ and $g_t \gets \xt[s]$
    }
}

\end{algorithm}

Note that we pass as input to the black-box external-regret-minimizing instance $\Alg_s$ both the loss $\lt$ as well as the weight $g_t = \xt[s]$ placed on the corresponding action that round.  The loss incurred by this instance will be $g_t\lt=\xt[s]\lt$: the loss scaled by this weight.  We represent this as two separate inputs as it will be notationally convenient later when proving that the instance achieves a ``first-order'' external-regret bound in terms of $\sum g_t$.

Now, we provide pseudocode for our algorithmic template in the event that our losses are non-convex.  Here, we just rely on a black-box implementation of Blum Mansour over a discretization of $\cK$:

\begin{algorithm}
\caption{Blum Mansour for Convex Sets and Non-Convex losses}
\label{alg:BMNS}
\KwIn{Convex set $\cK \subseteq \mathbb{R}^d$, diameter 1}
\KwIn{Loss family $\cL \subseteq \{\ell: \cK \to \RR\}$}
\KwIn{$\Ke$: discretization of $\cK$}
\KwIn{External-regret-minimizing algorithm $\cA$; outputs to $\Delta(\Ke)$}
\KwIn{For all $t \in [T]$: $\lt \in \cL$ revealed sequentially}
\KwOut{For all $t \in [T]$: strategies $\xt \in \Delta(\Ke)$}

\For{$s \in \Ke$}{
    Instantiate copy of $\cA$: $\Alg_s$
}

\For{$t = 1$ \KwTo $T$}{
    \For{$s \in \Ke$}{
        $\vect{q}_{s,t} \gets \Alg_s$ recommendation at time $t$
        
        $\Q_t[s] \gets \vect{q}_{s,t}$
        
        \tcp{Setting row $s$ of a $\Ke \times \Ke$ row-stochastic matrix}
    }
    $\xt \gets \text{stationary distribution}(\Q_t)$
    
    \Return $\xt$
    
    \textbf{observe} $\lt$
    
    \For{$s \in \Ke$}{
        \textbf{update} $\Alg_s$ with $\lt$ and $g_t \gets \xt[s]$
    }
}
\end{algorithm}

We are ready to prove our main theorem about the full-swap-regret achieved by Algorithms \ref{alg:BMCS} and \ref{alg:BMNS}, restated below.  We will later use this theorem as a black box to demonstrate that the algorithm also achieves good discretized swap regret for particular settings of $\cL$.

\paragraph{\Cref{thm:BMCS}} Say we have a rounding procedure $\He: \cK \to \Delta(\Ke)$ such that, for all $\q \in \cK$, for all $\ell \in \cL$, we have
    $\EE_{\s \sim \He(\q)}\ps{\ell(\s)} - \ell(\q) \leq \delta$
    for some $\delta>0$.  Also let $\Reg_s$ be the scaled external regret incurred by $\Alg_s$, then, for Algorithms \ref{alg:BMCS} and \ref{alg:BMNS}: $\FSR \leq \delta T + \sum_{s \in \Ke} \Reg_s$.

\begin{proof}[Proof of Theorem \ref{thm:BMCS}]
    First, for Algorithm \ref{alg:BMCS}, we have
    \begin{align*}
        &\FSR\p{\x_{1:T},\ell_{1:T}}\\
        &= \sum_{\s \in \cK} \max_{\phi(\s) \in \cK}\p{\sum_{t=1}^T \x[\s](\lt(\s) - \lt(\phi(\s)))}\\
        &= \sum_{\s \in \Ke} \max_{\phi(\s) \in \cK}\p{\sum_{t=1}^T \x[\s](\lt(\s) - \lt(\phi(\s)))}\\
        &=\sum_{\s \in \Ke} \sum_{t=1}^T \p{\sum_{\s' \in \Ke} \x[\s']\Q[\s'][\s]}\lt(\s) - \p{\sum_{\s \in \Ke} \max_{\phi(\s) \in \cK} \sum_{t=1}^T \x[\s]\lt(\phi(\s))} \\
        &=\sum_{\s' \in \Ke} \sum_{t=1}^T \x[\s']\EE_{\s \sim \He(\q_{\s',t})} \ps{\lt(\s)} - \p{\sum_{\s \in \Ke} \max_{\phi(\s) \in \cK} \sum_{t=1}^T \x[\s]\lt(\phi(\s))} \\
        &\leq \sum_{\s' \in \Ke} \sum_{t=1}^T \x[\s']\p{\lt(\q_{\s',t})+\delta} - \p{\sum_{\s \in \Ke} \max_{\phi(\s) \in \cK} \sum_{t=1}^T \x[\s]\lt(\phi(\s))} \\
        &= \delta T + \sum_{\s \in \Ke} \max_{\phi(\s) \in \cK} \sum_{t=1}^T \p{\x[\s]\lt(\q_{\s,t}) - \x[\s]\lt(\phi(\s))} \\
        &= \delta T + \sum_{\s \in \Ke} \ER\p{\q_{\s,1:T},\x[\s]\ell_{1:T}}
    \end{align*}
    as desired.  For Algorithm \ref{alg:BMNS}, we have
    \begin{align*}
        &\FSR\p{\x_{1:T},\ell_{1:T}}\\
        &= \sum_{\s \in \cK} \max_{\phi(\s) \in \cK}\p{\sum_{t=1}^T \x[\s](\lt(\s) - \lt(\phi(\s)))}\\
        &= \sum_{\s \in \Ke} \max_{\phi(\s) \in \cK}\p{\sum_{t=1}^T \x[\s](\lt(\s) - \lt(\phi(\s)))}\\
        &=\sum_{\s \in \Ke} \sum_{t=1}^T \p{\sum_{\s' \in \Ke} \x[\s']\Q[\s'][\s]}\lt(\s) - \p{\sum_{\s \in \Ke} \max_{\phi(\s) \in \cK} \sum_{t=1}^T \x[\s]\lt(\phi(\s))} \\
        &= \sum_{\s' \in \Ke} \sum_{t=1}^T \x[\s']\lt(\q_{\s',t}) - \p{\sum_{\s \in \Ke} \max_{\phi(\s) \in \cK} \sum_{t=1}^T \x[\s]\lt(\phi(\s))} \\
        &= \sum_{\s \in \Ke} \max_{\phi(\s) \in \cK} \sum_{t=1}^T \p{\x[\s]\lt(\q_{\s,t}) - \x[\s]\lt(\phi(\s))} \\
        &= \sum_{\s \in \Ke} \max_{\phi(\s) \in \cK} \sum_{t=1}^T (\x[\s]\lt(\q_{\s,t}) - \x[\s]\EE_{\s^* \sim \He(\phi(\s))} \ps{\lt(\s^*)}\\
        & \qquad \qquad \qquad \qquad \qquad \qquad \quad+ \x[\s]\EE_{\s^* \sim \He(\phi(\s))} \ps{\lt(\s^*)} - \x[\s]\lt(\phi(\s))) \\
        &\leq \delta T + \sum_{\s \in \Ke} \ER\p{\q_{\s,1:T},\x[\s]\ell_{1:T}}
    \end{align*}
    as desired.
\end{proof}

\section{External Regret Minimization Algorithms}\label{appendix:external_regret}

We present the external-regret-minimizing algorithm $\cA$ for use as a subroutine in Algorithm \ref{alg:BMCS} when the loss functions are $\alpha$-strongly-convex.  As stated previously, this algorithm receives two inputs at each time step $t$: loss function $\lt$ and scale parameter $g_t$.  When evaluating the external regret of these algorithms, we do so in terms of the scaled loss functions $g_t\lt$.  They are input separately merely for notational convenience.  We will derive regret bounds in terms of the first-order quantity $G_T = \sum_{t=1}^T g_t$.

When we assume the loss functions in $\cL$ are $\alpha$-strongly-convex and $L$-Lipschitz, we use the following external-regret-minimizing algorithm. Define
\begin{equation*}
    R'_{\alpha}(x) := \begin{cases}
        \frac{1}{\alpha} & \text{ for }x \in [0,1]\\
        \frac{1}{\alpha x} & \text{ for } x \geq 1
    \end{cases}
\end{equation*}
We abbreviate $R' := R'_{\alpha}$ for convenience.

\begin{algorithm}
\caption{Online Gradient Descent for Strongly Convex Loss}
\label{alg:GDS}
\KwIn{Convex set $\cK \subseteq \mathbb{R}^d$}
\KwIn{For all $t \in [T]$: $\alpha$-strongly-convex, $L$-Lipschitz loss functions $\lt: \cK \to \mathbb{R}$}
\KwIn{For all $t \in [T]$: Scale parameters $g_t \in [0,1]$}
\KwIn{Initial $\x_{1} \in \cK$}
\KwOut{For all $t \in [T]$: strategies $\xt \in \cK$}

$G_0 \gets 0$

\For{$t = 1$ \KwTo $T$}{
    \Return $\xt$
    
    \textbf{observe} $\lt, g_t$
    
    $G_t \gets G_{t-1} + g_t$
    
    $\etat \gets R'(G_t)$
    
    $\x_{t+1} \gets \Pi_\cK \left(\xt - \etat g_t \nabla \lt(\xt)\right)$
}

\end{algorithm}

\begin{lemma}\label{lemma:GDS}
    Let $\cK \subseteq \RR^d$ be a convex set of diameter $\leq 1$.  Let $$\cL = \set{\ell: \cK \to \RR|\text{$\ell$: $\alpha$-strongly-convex and $L$-Lipschitz}}$$  For all $T$, for all $\ell_{1:T} \in \cL^T$, for all $g_{1:T} \in [0,1]^T$, the strategies $\x_{1:T}$ recommended by Algorithm \ref{alg:GDS} satisfy
    \begin{equation*}
        \ER\p{\x_{1:T},g\ell_{1:T}} \leq \frac{L^2}{2\alpha}\p{\log\p{G_T+1}+1}
    \end{equation*}
\end{lemma}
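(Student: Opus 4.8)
The plan is to adapt the standard Online Gradient Descent regret analysis for strongly convex losses to the scaled setting, where losses in each round are weighted by $g_t \in [0,1]$ and the regret bound is expressed in terms of $G_T = \sum_t g_t$ rather than $T$. First I would fix an arbitrary comparator $x^* \in \cK$ and, using the OGD update $\x_{t+1} = \Pi_\cK(\xt - \etat g_t \nabla \lt(\xt))$ together with the nonexpansiveness of the projection $\Pi_\cK$, expand $\norm{\x_{t+1} - x^*}^2 \leq \norm{\xt - x^*}^2 - 2\etat g_t \nabla\lt(\xt)^\top(\xt - x^*) + \etat^2 g_t^2 \norm{\nabla\lt(\xt)}^2$. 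Rearranging gives the per-round bound
\begin{equation*}
g_t \nabla\lt(\xt)^\top(\xt - x^*) \leq \frac{\norm{\xt - x^*}^2 - \norm{\x_{t+1} - x^*}^2}{2\etat} + \frac{\etat g_t^2 L^2}{2},
\end{equation*}
using $\norm{\nabla\lt(\xt)} \leq L$ and $g_t \le 1$ so $g_t^2 \le g_t$.

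Next I would invoke $\alpha$-strong convexity in the form $\lt(\xt) - \lt(x^*) \leq \nabla\lt(\xt)^\top(\xt - x^*) - \frac{\alpha}{2}\norm{\xt - x^*}^2$, so that
\begin{equation*}
g_t(\lt(\xt) - \lt(x^*)) \leq \frac{\norm{\xt - x^*}^2 - \norm{\x_{t+1} - x^*}^2}{2\etat} - \frac{\alpha g_t}{2}\norm{\xt - x^*}^2 + \frac{\etat g_t L^2}{2}.
\end{equation*}
Summing over $t = 1, \dots, T$, I would perform an Abel summation on the telescoping-like first term: collecting the coefficient of $\norm{\xt - x^*}^2$ yields $\frac{1}{2\etat} - \frac{1}{2\eta_{t-1}} - \frac{\alpha g_t}{2}$ (with the convention $1/\eta_0 = 0$). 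The key point is that $\etat = R'(G_t)$ with $R'(x) = 1/\alpha$ for $x\in[0,1]$ and $R'(x) = 1/(\alpha x)$ for $x \ge 1$, so $1/\etat = \alpha \max(G_t, 1)$; hence $\frac{1}{\etat} - \frac{1}{\eta_{t-1}} = \alpha(\max(G_t,1) - \max(G_{t-1},1)) \le \alpha(G_t - G_{t-1}) = \alpha g_t$, which makes the coefficient of $\norm{\xt - x^*}^2$ nonpositive for every $t$. Therefore that entire contribution can be dropped, leaving only $\sum_t \frac{\etat g_t L^2}{2} = \frac{L^2}{2}\sum_t R'(G_t) g_t$.

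Finally I would bound $\sum_{t=1}^T R'(G_t) g_t$. Since $g_t = G_t - G_{t-1}$ and $R'$ is nonincreasing, this sum is bounded above by the integral $\int_0^{G_T} R'(x)\,dx = \int_0^{\min(G_T,1)} \frac{1}{\alpha}\,dx + \int_{1}^{\max(G_T,1)} \frac{1}{\alpha x}\,dx \le \frac{1}{\alpha}(1 + \log(G_T+1))$ (using $\int_1^{G_T} x^{-1}dx = \log G_T \le \log(G_T+1)$, and the bound holds trivially when $G_T \le 1$). Putting the pieces together gives $\sum_t g_t(\lt(\xt) - \lt(x^*)) \le \frac{L^2}{2\alpha}(\log(G_T+1) + 1)$, and taking the max over $x^*$ yields the claimed bound on $\ER(\x_{1:T}, g\ell_{1:T})$. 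The main obstacle — really the only subtlety — is handling the variable step size carefully in the Abel summation so that the strong-convexity term exactly cancels the growth of $1/\etat$; the flat $1/\alpha$ region of $R'$ for small $G_t$ is precisely what is needed to keep the first step ($t=1$, where $1/\eta_0 = 0$) under control, and one should double-check the edge case $G_t \le 1$ where $\max(G_t,1) = 1$ is constant so the increment is zero and the coefficient is $-\alpha g_t/2 \le 0$ anyway.
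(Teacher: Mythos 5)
Your proposal is correct and follows essentially the same route as the paper's proof: the standard OGD expansion with nonexpansive projection, strong convexity to generate the $-\frac{\alpha}{2}\norm{\xt-x^*}^2$ term, the observation that $1/\etat = \alpha\max(G_t,1)$ makes the coefficient of $\norm{\xt-x^*}^2$ nonpositive after Abel summation, and the bound $\sum_t R'(G_t)g_t \le \int_0^{G_T}R'(x)\,dx$ (the paper phrases this via concavity of $R(x)=\int_0^x R'$, which is the same estimate). The only cosmetic difference is that you apply $g_t^2\le g_t$ earlier than the paper does.
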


\begin{proof}

Let $\nabla_t = \nabla \lt(\xt)$.  For all $\x^* \in \cK$, we have $\lt(\xt)-\lt(\x^*) \leq \inp{\nabla_t, \xt-\x^*}-\frac{\alpha}{2}\norm{\xt-\x^*}^2$ by $\alpha$-strong-convexity.  Also,
\begin{align*}
    \norm{\x_{t+1}-\x^*}^2 = \norm{\Pi_\cK(\xt-\etat g_t\nabla_t)-\x^*}^2 &\leq \norm{\xt-\etat g_t\nabla_t-\x^*}^2\\
    &=\norm{\xt-\x^*}^2+\etat^2g_t^2\norm{\nabla_t}^2-2\etat g_t \inp{\nabla_t,\xt-\x^*}
\end{align*}
and therefore
\begin{align*}
    g_t\inp{\nabla_t, \xt-\x^*} &\leq \frac{1}{2\eta_t}\p{\norm{\xt-\x^*}^2-\norm{\x_{t+1}-\x^*}^2}+\frac{\eta_t g_t^2}{2} \norm{\nabla_t}^2\\
    &\leq \frac{1}{2\eta_t}\p{\norm{\xt-\x^*}^2-\norm{\x_{t+1}-\x^*}^2}+\frac{\eta_t g_t^2L^2}{2}
\end{align*}
Summing over $t$
\begin{align*}
    2\sum_{t=1}^T g_t(\lt(\xt)-\lt(\x^*))
    &\leq \sum_{t=1}^T \norm{\xt-\x^*}^2 \p{\frac{1}{\eta_t}-\frac{1}{\eta_{t-1}}-\alpha g_t}+ L^2\sum_{t=1}^T \etat g_t^2
\end{align*}
Since $\frac{1}{\etat} = \alpha \max(G_t,1)$, 
\begin{align*}
    \frac{1}{\eta_t}-\frac{1}{\eta_{t-1}}-\alpha g_t \leq \alpha \p{G_t -G_{t-1} - g_t} = 0
\end{align*}
Additionally,
\begin{align}
    \sum_{t=1}^T \etat g_t^2 &\leq \sum_{t=1}^T R'(G_t)g_t \label{eq:OGDSsmoothness1}\\
    &= \sum_{t=1}^T R'(G_t)(G_t-G_{t-1})\nonumber \\
    &\leq \sum_{t=1}^T \p{R(G_t)-R(G_{t-1})} = R(G_T)\label{eq:OGDSsmoothness2}
\end{align}
where \eqref{eq:OGDSsmoothness1} follows from $g_t \leq 1$ and \eqref{eq:OGDSsmoothness2} follows from the fact that $R(x) = \int_0^x R'(x)dx$ is concave since $\frac{d}{dx}R'(x) \leq 0$.  Thus,
\begin{align*}
    \ER\p{\x_{1:T},\ell_{1:T}} = \sum_{t=1}^T (g_t \lt(\xt)-g_t \lt(\x^*)) &\leq \frac{L^2}{2} R(G_T)\\
    &\leq \frac{L^2}{2\alpha}\p{\log\p{G_T+1}+1}
\end{align*}
as desired.\end{proof}

\section{Omitted proofs and results}

\subsection{General normal-form games as structured games}\label{sec:nfg-as-structured}

Here we prove that any normal-form game with $n$ actions for the first player and $n'$ actions for the second player can be expressed as a $d$-dimensional structured game for $d = \min(n, n')$.

\begin{lemma}\label{lem:nfg-as-structured}
Let $G$ be a normal-form game with $n$ actions for the Learner and $n'$ actions for the Adversary. Then $G$ is a $d$-dimensional structured game for $d = \min(n, n')$.
\end{lemma}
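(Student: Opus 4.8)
The plan is to exhibit the required bilinear decomposition of the two payoff matrices directly; this is essentially the observation that an $n \times n'$ matrix has rank at most $\min(n,n')$, instantiated with the crudest possible rank bound. Without loss of generality I would first assume $n \le n'$ (otherwise relabel, swapping the roles of the two players), so that $d = \min(n,n') = n$. Let $U_L, U_A \in \RR^{n \times n'}$ be the payoff matrices with $U_L[i,j] = u_L(i,j)$ and $U_A[i,j] = u_A(i,j)$, and write $(U_L)_{\cdot,j}, (U_A)_{\cdot,j} \in \RR^n$ for their $j$-th columns. The key point is the trivial factorization $U_L = I_n \cdot U_L$: setting $v_i = e_i \in \RR^n$ (the $i$-th standard basis vector) and $w_j = (U_L)_{\cdot,j}$ gives $\langle v_i, w_j \rangle = U_L[i,j] = u_L(i,j)$, which is exactly the first bullet in the definition of a $d$-dimensional structured game. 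Analogously, taking $v'_i = e_i$ and $w'_j = (U_A)_{\cdot,j}$ gives $\langle v'_i, w'_j \rangle = u_A(i,j)$, which is the second bullet.

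The only remaining point is the normalization that all embeddings lie in the unit ball. Here I would invoke the fact that rescaling every entry of both payoff matrices by a common factor $\lambda > 0$ rescales every regret and equilibrium-gap quantity by the same factor, so it is without loss of generality to first divide $u_L$ and $u_A$ by $\lambda := \sqrt{n}\cdot\max\big(\norm{u_L}_\infty, \norm{u_A}_\infty\big)$. After this rescaling each $v_i = v'_i = e_i$ still has $\ell_2$-norm $1 \le 1$, while each column $w_j$ (resp. $w'_j$) has $\ell_2$-norm at most $\sqrt{n}\cdot\max_i |u_L(i,j)| / \lambda \le 1$ (resp. the analogous bound); hence $\cK = \conv(\{v_i\})$ and $\cK' = \conv(\{w_j\})$ are contained in the unit ball, and likewise for the primed embeddings.

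I do not expect any genuine obstacle in this argument. The two things to be careful about are (i) reducing to the case $n \le n'$ so that the basis vectors $e_i$ live in the $\min(n,n')$-dimensional space, and (ii) the harmless rescaling needed to meet the boundedness convention of the preliminaries; neither requires any computation beyond what is sketched above. If one wanted a mildly sharper statement one could replace $I_n$ by a thin factor of $U_L$ of rank $\mathrm{rank}(U_L) \le \min(n,n')$ via the singular value decomposition, but this is unnecessary for the stated bound $d = \min(n,n')$.
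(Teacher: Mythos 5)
Your proof is correct and takes essentially the same route as the paper: both use the trivial factorization with $v_i = e_i$ and $w_j$ equal to the $j$-th column of the payoff matrix (swapping roles when $n' < n$). Your extra remark about rescaling to meet the unit-ball convention is a careful addition that the paper's proof omits, but it does not change the substance of the argument.
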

\begin{proof}
By symmetry, it suffices to show that the Learner's payoff matrix $u_L(i, j)$ (denoting the Learner's payoff when they play pure action $i$ and the Adversary plays pure action $j$) can be written in the form $\langle v_i, w_j\rangle$ for some $d$-dimensional vectors $v_i$ and $w_j$. If $\min(n, n') = n$, we can accomplish this by letting $v_{i} = e_{i}$ (the $i$th basis vector) and $w_{j} = \sum_{k=1}^{n} u_{L}(k, j)e_{k}$. Similarly, if $\min(n, n') = n'$, we can accomplish this by letting $w_{j} = e_{j}$ and $v_{i} = \sum_{k=1}^{n'} u_{L}(i, k)e_{k}$.
\end{proof}

\subsection{Equivalence of $\ell_2$ calibration and swap regret}\label{sec:calib_swap_regret}

Here we establish that $\ell_2$ calibration can be viewed as the swap regret in a two-dimensional structured game with infinitely many actions, as well as the full swap regret in a one-dimensional domain with strictly convex losses.

\begin{lemma}[$\ell_2$-Calibration as Swap Regret]\label{lem:calib_swap_regret} For any sequences $\x_1, \x_2, \dots, \x_T \in \Delta([0, 1])$ and $b_1, b_2, \dots, b_T \in \{0, 1\}$, the following three quantities agree:

\begin{enumerate}
    \item The $\ell_2$ calibration error $\Cal(\x_{1:T}, \mathbf{b}_{1:T})$.
    \item The full swap regret $\FSR(\x_{1:T}, \ell_{1:T})$ incurred by playing the actions $\x_t \in \Delta([0, 1])$ against the losses $\ell_t: [0, 1] \rightarrow \mathbb{R}$ given by $\ell_t(x) = (x - b_t)^2$.
    \item The swap regret $\SwapReg(\x_{1:T}, \mathbf{b}_{1:T})$ of the Learner in the repeated two-dimensional structured game where the Learner has pure actions $x \in [0, 1]$ with embeddings $v_x = (2x-1, -x^2)$ and the Adversary has pure actions $b \in \{0, 1\}$ with embeddings $w_{b} = (b, 1)$.
\end{enumerate}
\end{lemma}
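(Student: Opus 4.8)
The plan is to prove the two equalities $(3)=(2)$ and $(2)=(1)$ separately: the first is a pointwise algebraic identity between the game payoffs and the quadratic losses, and the second is a direct computation in which the optimization over swap functions decouples across prediction values.

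For $(3)=(2)$, I would first check that the Learner's payoff in the structured game is the negative of the quadratic loss: $\langle v_x, w_b\rangle = (2x-1)b - x^2 = -x^2 + 2xb - b$, and since $b\in\{0,1\}$ satisfies $b = b^2$, this equals $-(x^2 - 2xb + b^2) = -(x-b)^2$, which with $b = b_t$ is $-\ell_t(x)$. Because payoffs are bilinear and the Adversary plays the pure action $b_t$, for any swap function $\phi:[0,1]\to[0,1]$ (linearly extended to distributions) one has $u_L(\phi(\x_t), b_t) = -\sum_p \x_t[p]\,(\phi(p)-b_t)^2 = -\ell_t(\phi(\x_t))$ and $u_L(\x_t, b_t) = -\ell_t(\x_t)$. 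Summing over $t$ and maximizing over $\phi$ — the class of admissible maps is the same arbitrary $[0,1]\to[0,1]$ in both $\SwapReg$ and $\FSR$ — then gives $\SwapReg(\x_{1:T},\mathbf{b}_{1:T}) = \FSR(\x_{1:T},\ell_{1:T})$.

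For $(2)=(1)$, I would set $S = \bigcup_t \mathrm{supp}(\x_t)$ (finite) and, for $p\in S$, write $n_p = \sum_t \x_t[p] > 0$ and $\bar{b}(p) = \frac{1}{n_p}\sum_t b_t\x_t[p]$, noting $\bar{b}(p)\in[0,1]$ since it is a convex combination of elements of $\{0,1\}$. Since $\ell_t(\phi(\x_t)) = \sum_p \x_t[p]\,(\phi(p)-b_t)^2$ and $\phi(p)$ can be chosen in $[0,1]$ independently for each $p$, the maximum defining $\FSR$ decouples into
\[
\FSR(\x_{1:T},\ell_{1:T}) = \sum_{p\in S}\left(\sum_t \x_t[p]\,(p-b_t)^2 - \min_{c\in[0,1]}\sum_t \x_t[p]\,(c-b_t)^2\right).
\]
Expanding $\sum_t \x_t[p]\,(c-b_t)^2 = n_p\big(c^2 - 2c\,\bar{b}(p) + \bar{b}(p)\big)$ (using $b_t^2 = b_t$), the inner minimum is attained at the vertex $c = \bar{b}(p)$, which lies in $[0,1]$; substituting this and $c = p$ into the two terms, the $p$-th summand collapses to $n_p\big(p - \bar{b}(p)\big)^2$, and summing over $p$ recovers $\Cal(\x_{1:T},\mathbf{b}_{1:T})$ exactly.

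The computations are all elementary, so the main (mild) obstacle is conceptual rather than technical: the crucial point is that the optimal swap target for each value $p$ equals $\bar{b}(p)$ \emph{and} already lies in the feasible set $[0,1]$, so that passing to the unconstrained minimum is legitimate — this is precisely where $b_t\in\{0,1\}$ is used, and it is also what makes the $\ell_2$-calibration error coincide with a genuine swap regret. The remaining care is bookkeeping (finiteness of supports to reorder the sums, and excluding the degenerate $n_p = 0$ values from $S$).
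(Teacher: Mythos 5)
Your proposal is correct and follows essentially the same route as the paper's proof: both establish $(3)=(2)$ via the pointwise identity $\langle v_x, w_b\rangle = -(x-b)^2$ (using $b=b^2$), and both establish $(2)=(1)$ by decoupling the maximization over swap functions across prediction values $p$ and optimizing the resulting quadratic at $\bar b(p)$. Your added remarks on finiteness of supports and feasibility of the optimizer in $[0,1]$ are just careful bookkeeping of steps the paper leaves implicit.
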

\begin{proof}
We first show the equivalence of 1 and 2. Note that $\FSR$ with losses $\ell_t$ can be written as 
\begin{eqnarray*}
\FSR(\x_{1:T},\ell_{1:T}) &=& \sup_{\phi: \cK \to \cK} \sum_{t=1}^T \p{\EE_{p \sim \xt}[\lt(p)] - \EE_{p \sim \xt}[\lt(\phi(p))]}\\
&=&\sup_{\phi: \cK \to \cK} \sum_{t=1}^T \sum_{p\in\cK}\xt[p]\big(\lt(p)-\lt(\phi(p))\big)\\
&=&\sum_{p\in\cK}\sup_{p^*\in\cK} \sum_{t=1}^T \xt[p]\big(\lt(p)-\lt(p^*)\big).
\end{eqnarray*}

For any fixed $p$, from the definition of the loss $\lt$ we have
\begin{eqnarray*}
& &\sum_{t=1}^T \xt[p]\big(\lt(p)-\lt(p^*)\big)\\
&=&\sum_{t=1}^T \xt[p]\big((p-b_t)^2-(p^*-b_t)^2\big)\\
&=&\sum_{t=1}^T \xt[p]\big(-(p^*)^2+2b_tp^*+p^2-2b_tp\big)\\
&=&-\left(\sum_{t=1}^T \xt[p]\right)\left(p^*-\frac{\sum_t b_t\xt[p]}{\sum_t\xt[p]}\right)^2+\left(\sum_{t=1}^T \xt[p]\right)\left(p-\frac{\sum_t b_t\xt[p]}{\sum_t\xt[p]}\right)^2
\end{eqnarray*}
is a quadratic function of $p^*$ with maximum value reached at $p^*=\frac{\sum_t b_t\xt[p]}{\sum_t\xt[p]}$. Therefore, 
\begin{align*}
\FSR(\x_{1:T},\ell_{1:T}) &= \sum_{p\in\cK}\sup_{p^*\in\cK} \sum_{t=1}^T \xt[p]\big(\lt(p)-\lt(p^*)\big)\\
&= \sum_{p\in\cK}\left(\sum_{t=1}^T \xt[p]\right)\left(p-\frac{\sum_t b_t\xt[p]}{\sum_t\xt[p]}\right)^2
\end{align*}
which in turn equals $\Cal(\x_{1:T}, \mathbf{b}_{1:T})$.

To show the equivalence of 2 and 3, note that the utility $u_L(x, b)$ obtained by the learner by playing pure action $x \in [0, 1]$ against $b \in \{0, 1\}$ is given by $\langle v_x, w_{b}\rangle = b(2x-1) - x^2 = -(b-x)^2 = -\ell_t(x)$ (the second equality follows since $b = b^2$ for any $b \in \{0, 1\}$). 
\end{proof}

\subsection{Proof of \Cref{thm:calib_bound}}\label{app:calib-bound-proof}

\begin{proof}
We will be using \Cref{alg:BMCS} and its guarantee from \Cref{thm:full-main} to show the above result. We can do this because $\ell_2$-calibration is a full-swap-regret minimization problem as shown in \Cref{lem:calib_swap_regret}.
To use \Cref{alg:BMCS}, we must present the convex set $\cK$, an $\eps$-triangulation of $\cK$, a rounding procedure, the loss family $\cL$ and an external-regret-minimization algorithm $\cA$. For calibration, the convex set $\cK = [0,1]$ has a natural $\eps$-triangulation - $\{0, \eps, 2\eps \ldots, 1 \}$ since every point $p \in [0,1]$ is in $\conv(\{ \lfloor \frac{p}{\eps} \rfloor \eps, \lfloor \frac{p}{\eps} \rfloor \eps + \eps \})$. The losses $\cL = \{ \ell (y) = (1 - y)^2, \ell (y) = y^2\} $ are $1$-lipschitz, $2$-strongly-convex and $2$-smooth. These properties allow us to use \Cref{alg:GDS} as the external-regret minimization algorithm. Plugging all this into Case 3 of \Cref{thm:full-main} gives the desired full-swap-regret bound. 
\end{proof}

\subsection{Proof of \Cref{thm:discrete_calib_bound}}\label{app:discrete_calib_bound}
\begin{proof}
This theorem follows directly from \Cref{thm:disc-main}.
Using the observations made in the proof of \Cref{thm:calib_bound} about the $\ell_2$-calibration problem i.e $\{0, \eps, 2\eps \ldots, 1 \}$ as an $\eps$-triangulation of $\cK = [0,1]$, the $2$-strongly-convex and $1$-Lipschitz properties of the squared loss, we can plug into \Cref{thm:discrete_calib_bound} to obtain a bound of $O(\sqrt{\eps T} + \frac{1}{\eps} \log T)$. For the regime where $\eps = o(T^{1/3})$, this improves on the guaranteee from \Cref{thm:calib_bound}.
\end{proof}

\subsection{Missing Proofs from Section \ref{sec:main_full_swap_regret}}\label{app:full_swap_regret}

\paragraph{$\beta$-smooth loss} When we assume the loss functions in $\cL$ are $\beta$-smooth, we take $\Ke$ to be an $\eps$-triangulation of $\cK$ (Definition~\ref{def:et}).  We define our rounding procedure $\He$ as follows.

\begin{algorithm}[H]
\caption{Rounding Procedure for $\beta$-Smooth Loss}
\label{alg:BH}
\KwIn{Convex set $\cK \subseteq \mathbb{R}^d$, diameter 1}
\KwIn{$\Ke$: $\eps$-triangulation of $\cK$}
\KwIn{$\x \in \cK$}
\KwOut{$\Hex \in \Delta(\Ke)$}

$\x_\perp \gets \Pi_{\conv(\Ke)}(\x)$ \tcp{Project to polytope}
$S = \{\s_1, \cdots, \s_{d+1}\} \gets \Ke(\x_\perp)$ \tcp{In the context of an $\eps$-triangulation, $\Kex \subseteq \Ke$ with $|\Kex| = d+1$, $\x \in \conv(\Kex)$ and $\norm{\x - \s} \leq \eps$ for all $\s \in \Kex$}
$\mat{M} \gets \begin{bmatrix}
    \s_1[1] & \s_2[1] & \cdots & \s_{d+1}[1] \\
    \s_1[2] & \s_2[2] & \cdots & \s_{d+1}[2] \\
    \vdots  & \vdots  & \ddots & \vdots    \\
    \s_1[d] & \s_2[d] & \cdots & \s_{d+1}[d] \\
    1       & 1       & \cdots & 1
\end{bmatrix}$

$\vect{v} \gets \mat{M}^{-1} \begin{bmatrix}
    \x_\perp\\
    1
\end{bmatrix}$ \tcp{Change of basis: express $\x_{\perp}$ as a linear combination of elements of $S$}
\Return $\Hex[\s] \gets \begin{cases}
    \vect{v}[\s] & \text{for } \s \in S\\
    0 & \text{for } \s \not\in S
\end{cases}$
\end{algorithm}

In the case where $\x_\perp$ belongs to the measure $0$ set for which $\mg{\Ke(\x_\perp)} < d+1$, we take $\mat{M}$ to have only $\mg{\Ke(\x_\perp)}$ columns and solve for $\vect{v}$: $\mat{M}\vect{v} = \begin{bmatrix}
    \x_\perp\\
    1
\end{bmatrix}$ appropriately.

\begin{proof}[Proof of Lemma \ref{lemma:BH}]
We have $\EE_{\s \sim \He(\q)}\ps{\s} = \q_\perp = \Pi_{\conv(\Ke)}(\q)$ and we decompose
\begin{equation*}
    \EE_{\s \sim \He(\q)}\ps{\ell(\s)} - \ell(\q) = \p{\EE_{\s \sim \He(\q)}\ps{\ell(\s)} - \ell(\q_\perp)} + \p{\ell(\q_\perp) - \ell(\q)}
\end{equation*}

From the $L$-Lipschitzness of $\ell$ and the definition of an $\eps$-triangulation

\begin{equation}
    \ell(\q_\perp) - \ell(\q) \leq L\norm{\q_\perp - \q} \leq L \eps^2
\end{equation}

From $\beta$-smoothness,
\begin{align*}
    \EE_{\s \sim \He(\q)}\ps{\ell(\s)} - \ell(\q_\perp) \leq \inp{\nabla \ell(\q_\perp),\EE_{\s \sim \He(\q)}\ps{\s} - \q_\perp} + \frac{\beta}{2}\Var(\He(\q)) \leq \frac{\beta \eps^2}{8}
\end{align*}
since $\EE_{\s \sim \He(\q)}\ps{\s} = \q_\perp$ and $\Var(\He(\q)) \leq \frac{\eps^2}{4}$ as $\He(\q)$ is supported on $\Ke(\q)$ with $\diam(\Ke(\q)) \leq \eps$.  Combining gives the desired for $\delta = \p{L + \beta/8}\eps^2$. 
\end{proof}

We restate our main theorems on full swap regret minimization.

\paragraph{\Cref{thm:full-main}}
    Let $\cK \subseteq \RR^d$ be a convex set of diameter 1.  Let $\cL \subseteq \set{\ell: \cK \to \RR}$ be a family of $L$-Lipschitz loss functions.  Algorithm \ref{alg:BMNS} (with $\ER$ subroutine $\cA$ as Multiplicative Weights (MWU) over $\Ke$) attains the full-swap-regret guarantees described in table \ref{table:main-results-1} in terms of the indicated additional constraints on the loss functions $\ell \in \cL$, using the indicated discretizations $\Ke$.  Algorithm \ref{alg:BMCS} (with $\ER$ subroutine $\cA$ as Algorithm \ref{alg:GDS}) attains the full-swap-regret guarantees described in table \ref{table:main-results-2} in terms of the indicated additional constraints on the loss functions $\ell \in \cL$, using the indicated discretizations $\Ke$ and rounding procedures $\He$.

\begin{table}[h]\label{table:main-results-1}
\begin{center}\caption{Matrix of regret bounds of Algorithm \ref{alg:BMNS} for Theorem \ref{thm:full-main}.}
    \begin{tabular}{|l|c|c|}
        \hline
        \makecell{Additional assumptions \\ on $L$-Lipschitz losses $\ell$}
        & Discretization
        & $\FSR$ Rate\\
        \hline
        $\ell$: no assumption
        &$\Ke$: $\eps$-net
        &$O\p{LT^{\frac{d+1}{d+2}}\log T}$\\
        \hline
        $\ell$: $\beta$-smooth
        &$\Ke$: $\eps$-triangulation
        & $O\p{(L+\beta)T^{\frac{d+2}{d+4}} \log T)}$\\
        \hline
        $\ell$: concave
        &\makecell{$\Ke$: polytope\\ approximation}
        & $O\p{LT^{\frac{d+1}{d+3}}\log T}$\\
        \hline
    \end{tabular}
\end{center}
\end{table}

\begin{table}[h]\label{table:main-results-2}
\begin{center}\caption{Matrix of regret bounds of Algorithm \ref{alg:BMCS} Theorem \ref{thm:full-main}.}
    \begin{tabular}{|l|c|c|}
        \hline
        \makecell{Additional assumptions \\ on $L$-Lipschitz losses $\ell$}
        & \makecell{Discretization\\ and rounding}
        & $\FSR$ Rate\\
        \hline
        $\ell$: $\alpha$-strongly-convex
        &\makecell{$\Ke$: $\eps$-net \\ $\He$: Projection}
        &$O\p{L\p{\frac{L}{\alpha}}^{\frac{1}{d+1}}T^{\frac{d}{d+1}}\log T}$\\
        \hline
        \makecell{$\ell$: $\alpha$-strongly-convex\\ and $\beta$-smooth}
        &\makecell{$\Ke$: $\eps$-triangulation \\ $\He$: Algorithm \ref{alg:BH}}
        &$O\p{L\p{\frac{\beta}{L}+\frac{L}{\alpha}}T^{\frac{d}{d+2}}\log T}$\\
        \hline
    \end{tabular}
\end{center}
\end{table}

\begin{proof}[Proof of Theorem \ref{thm:full-main}]
    First, we address the 3 cases of table \ref{table:main-results-1}:

    \begin{itemize}
        \item When $\ell_{1:T}$ are $L$-Lipschitz, and $\Ke$ is an $\eps$-net of $\cK$, Lemma~\ref{lemma:NH} guarantees projection satisfies the precondition of Theorem~\ref{thm:BMCS} with $\delta = L\eps$.
        \item When $\ell_{1:T}$ are $L$-Lipschitz and $\beta$-smooth, and $\Ke$ is an $\eps$-triangulation of $\cK$, Lemma~\ref{lemma:BH} guarantees Algorithm \ref{alg:BH} satisfies the precondition of Theorem~\ref{thm:BMCS} with $\delta = (L+\beta/8)\eps^2$.
        \item When $\ell_{1:T}$ are $L$-Lipschitz and concave, and $\Ke$ is the set of vertices of a polytope approximation of $\cK$, Theorem \ref{thm:bi} guarantees that projection of a boundary point to the polytope and then rounding to a vertex via Algorithm \ref{alg:BH} satisfies the precondition of Theorem~\ref{thm:BMCS} with $\delta = L\eps^2$.
    \end{itemize}
    
We now balance $\eps$ for each of the three cases.
    \paragraph{Case 1: $\ell_{1:T}$ are $L$-Lipschitz}

    For $L$-Lipschitz loss functions $\ell_{1:T}$, Theorem~\ref{thm:BMCS} shows that the strategies $\x_{1:T}$ recommended by Algorithm \ref{alg:BMNS} with $\Ke$: $\eps$-net of $\cK$ and $\cA$: MWU over $\Ke$ satisfy
    \begin{equation*}
        \FSR\p{\x_{1:T},\ell_{1:T}} = O\p{L \eps T + L\sqrt{\frac{T}{\eps^d}d\log(1/\eps)}} = O\p{LT^{\frac{d+1}{d+2}}\log(T)}
    \end{equation*}
    for $\eps = T^{-1/(d+2)}$.

    \paragraph{Case 2: $\ell_{1:T}$ are $L$-Lipschitz, $\beta$-smooth}

    For $L$-Lipschitz, $\beta$-smooth loss functions $\ell_{1:T}$, Theorem~\ref{thm:BMCS} shows that the strategies $\x_{1:T}$ recommended by Algorithm \ref{alg:BMNS} with $\Ke$: $\eps$-triangulation of $\cK$ and $\cA$: MWU over $\Ke$ satisfy
    \begin{equation*}
        \FSR\p{\x_{1:T},\ell_{1:T}} = O\p{(L+\beta) \eps^2 T + L\sqrt{\frac{T}{\eps^d}d\log(1/\eps)}} = O\p{(L+\beta)T^{\frac{d+2}{d+4}}\log(T)}
    \end{equation*}
    for $\eps = T^{-1/(d+4)}$.

    \paragraph{Case 3: $\ell_{1:T}$ are $L$-Lipschitz, concave}

    For $L$-Lipschitz, concave loss functions $\ell_{1:T}$, Theorem~\ref{thm:BMCS} shows that the strategies $\x_{1:T}$ recommended by Algorithm \ref{alg:BMNS} $\Ke$: vertices of polytope approximation of $\cK$ and $\cA$: MWU over $\Ke$ satisfy
    \begin{equation*}
        \FSR\p{\x_{1:T},\ell_{1:T}} = O\p{L \eps^2 T + L\sqrt{\frac{T}{\eps^{d-1}}(d-1)\log(1/\eps)}} = O\p{(L+\beta)T^{\frac{d+1}{d+3}}\log(T)}
    \end{equation*}
    for $\eps = T^{-1/(d+3)}$.

    Now, we address the 2 cases of table \ref{table:main-results-2} in turn:

    \begin{itemize}
        \item When $\ell_{1:T}$ are $L$-Lipschitz, and $\Ke$ is an $\eps$-net of $\cK$, Lemma~\ref{lemma:NH} guarantees $\He$ satisfies the precondition of Theorem~\ref{thm:BMCS} with $\delta = L\eps$.
        \item When $\ell_{1:T}$ are $L$-Lipschitz and $\beta$-smooth, and $\Ke$ is an $\eps$-triangulation of $\cK$, Lemma~\ref{lemma:NH} guarantees $\He$ satisfies the precondition of Theorem~\ref{thm:BMCS} with $\delta = (L+\beta/8)\eps^2$.
        \item When $\ell_{1:T}$ are $\alpha$-strongly-convex and $L$-Lipschitz, Lemma~\ref{lemma:GDS} shows that each external-regret-minimizing instance $\Alg_s$: \ref{alg:GDS} satisfies $\ER(\Alg_s, \x[s]\ell_{1:T}) \leq \frac{L^2}{2\alpha}\p{\log\p{1+\sum_{t=1}^T \xt[s]}+1}$.  Thus, Lemma~\ref{lemma:triangulation} gives \begin{align*}
            \sum_{s \in \Ke} \ER(\Alg_s, \x[s]\ell_{1:T}) & \leq \frac{L^2}{2\alpha}\sum_{s \in \Ke} \p{\log\p{1+\sum_{t=1}^T \xt[s]}+1}\\
            &= O\p{\frac{L^2 \log(T)}{\alpha \eps^d}}\\
        \end{align*}
    \end{itemize}

    We now balance $\eps$ for each of the two cases.
    
    \paragraph{Case 1: $\ell_{1:T}$ are $L$-Lipschitz, $\alpha$-strongly-convex}

    For $L$-Lipschitz and $\alpha$-strongly-convex loss functions $\ell_{1:T}$, Theorem~\ref{thm:BMCS} shows that the strategies $\x_{1:T}$ recommended by Algorithm \ref{alg:BMCS} with $\Ke$: $\eps$-net of $\cK$, $\cA$: Algorithm \ref{alg:GDS}, and $\He$: projection satisfy
    \begin{equation*}
        \FSR\p{\x_{1:T},\ell_{1:T}} = O\p{L \eps T + \frac{L^2 \log(T)}{\alpha \eps^d}} = O\p{L\p{\frac{L}{\alpha}}^{\frac{1}{d+1}}T^{\frac{d}{d+1}}\log(T)}
    \end{equation*}
    for $\eps = L^{1/(d+1)}\alpha^{-1/(d+1)}T^{-1/(d+2)}$.

    \paragraph{Case 2: $\ell_{1:T}$ are $L$-Lipschitz, $\alpha$-strongly-convex, $\beta$-smooth}

    For $L$-Lipschitz, $\alpha$-strongly-convex, and $\beta$-smooth loss functions $\ell_{1:T}$, Theorem~\ref{thm:BMCS} shows that the strategies $\x_{1:T}$ recommended by Algorithm \ref{alg:BMCS} with $\Ke$: $\eps$-net of $\cK$, $\cA$: Algorithm \ref{alg:GDS}, and $\He$: Algorithm \ref{alg:BH} satisfy
    \begin{equation*}
        \FSR\p{\x_{1:T},\ell_{1:T}} = O\p{(L+\beta) \eps^2 T + \frac{L^2 \log(T)}{\alpha \eps^d}} = O\p{L\p{\frac{\beta}{L}+\frac{L}{\alpha}}T^{\frac{d}{d+2}}\log(T)}
    \end{equation*}
    for $\eps = T^{-1/(d+2)}$.  
\end{proof}

\subsection{Missing Proofs from Section \ref{sec:discretized_swap_regret}}\label{appendix:discretized_swap_regret}

We restate the definition of nearly-strongly-convex (\Cref{def:NSC}).  Consider a convex set $\cK \subseteq \mathbb{R}^d$.  We say that a continuous function $\ell: \cK \to \RR$ is \textbf{$(\alpha,\epsilon)$-nearly strongly convex} if, for all $\x,\y \in \cK$
    \begin{equation}
        \ell(\y)-\ell(\x)-\nabla \ell(\x)^\top(\y-\x) \geq \frac{\alpha}{2}\p{\norm{\y-\x}-\epsilon}_+^2
    \end{equation}
    where $\p{x}_+ = \max(x,0)$ and $\nabla \ell(\x)$ can be any subgradient of $\x$.

We also restate \Cref{def:piece-lin}.    For a set $\Ke \subseteq \RR^d$ with $\mg{\Ke}=k$, and a convex loss function $\ell: \conv(\Ke) \to \RR$, we define the \emph{piecewise-linearized} loss function $\del_{\Ke}$ to be the lower envelope of the convex hull $\conv \set{(\s,\ell(\s))|\s \in \Ke}$.  Equivalently, for all $\x \in \conv(\Ke)$:
    $\del_{\Ke}(\x) = \min_{v \in \Delta(\Ke); \EE[v] = \x} \inp{v,\ell(\Ke)}$
    where $\ell(\Ke) \in (\RR^d)^k$ denotes the vector with entries $\ell(\s)$ for each $\s \in \Ke$.  We abbreviate $\del = \del_{\Ke}$ when $\Ke$ is clear from context.\\
    
    For the special case where $\Ke = \set{s_1,\cdots,s_k} \subset \RR$ with $s_i<s_{i+1}$ for all $i$,  we can simplify this expression.  For all $x \in \conv(\Ke) = [s_1,s_k]$, letting $i(x)\in [k]$ satisfy $\x \in [s_i,s_{i+1}]$, $
        \del(x) = \frac{\ell(s_{i(x)+1})-\ell(s_{i(x)})}{s_{i(x)+1}-s_{i(x)}}(x-s_{i(x)})+\ell(s_{i(x)})$.

Now, we prove

\paragraph{\Cref{lemma:linearized_is_nearly_strongly_convex}} Let $\Ke = \set{s_1,\cdots,s_k} \subset \RR$ be a set of reals with $s_i<s_{i+1}$ for all $i$.  Let $\Ke$ be an $\eps$-triangulation of $\conv(\Ke)$ (i.e. $s_{i+1}-s_i \leq \eps$ for all $i$). Let $\ell: \conv(\Ke) \to \RR$ be $\alpha$-strongly-convex.  Then, $\del$ is $(\alpha,\epsilon)$-nearly-strongly-convex, where $\del$ is the piecewise-linearized $\ell$ according to Definition \ref{def:piece-lin}.

\begin{proof}[Proof of Lemma \ref{lemma:linearized_is_nearly_strongly_convex}]
    For all $x \in \conv(\Ke)$, we have $\del(x) \geq \ell(x)$ since, for all $v \in \Delta(\Ke)$ with $\EE[v] = x$, we have $\EE[\ell(v)] \geq \ell(x)$.

    We want to show, for all $x,y \in \conv(\Ke)$,
    \begin{equation*}
        \del(y)-\del(x)-\nabla \del(x)(y-x) \geq \frac{\alpha}{2}\p{\mg{y-x}-\epsilon}_+^2
    \end{equation*}
    We have $\del(y)-\del(x)-\nabla \del(x)(y-x) \geq 0$ for all $x,y$ because $\del$ is convex as it is defined to be the lower envelope of $\conv((s_i,\ell(s_i))_{i \in [k]})$.  Thus, it suffices to show 
    \begin{equation*}
        \del(y)-\del(x)-\nabla \del(x)(y-x) \geq \frac{\alpha}{2}\p{\mg{y-x}-\epsilon}^2
    \end{equation*}
    for all $\mg{y-x} \geq \eps$.  Assume without loss of generality that $y \geq x +\eps$.

    By the $\alpha$-strong-convexity of $\ell$,
    \begin{align*}
        \del(y)\geq \ell(y) &\geq \ell(x+\eps)+\nabla \ell(x+\eps)(y-x-\eps) + \frac{\alpha}{2}\mg{y-x-\eps}^2
    \end{align*}

    It suffices to show $\ell(x+\eps)+\nabla \ell(x+\eps)(y-x-\eps) \geq \del(x)+\nabla \del(x)(y-x)$, and we will show
    \begin{enumerate}
        \item $\ell(x+\eps) \geq \del(x)+\eps \nabla \del(x)$
        \item $\p{\nabla \ell(x+\eps)-\nabla \del(x)}(y-x-\eps)\geq 0$
    \end{enumerate}

    Let $i\in[k]$ be such that $x \in [s_i,s_{i+1}]$.  Since $s_{i+1}-s_i \leq \eps$, we have $x + \eps \geq s_{i+1}$.

    1. holds because, for all $x$ belonging to the interval $[s_i,s_{i+1}]$, we have $\del(x)= \frac{\ell(s_{i+1})-\ell(s_{i})}{s_{i+1}-s_{i}}(x-s_i)+\ell(s_i)$, and $(x+\eps,\ell(x+\eps))$ lies above this line between the points $(s_i,\ell(s_i))$ and $(s_{i+1},\ell(s_{i+1}))$ due to the convexity of $\ell$ and the fact that $x+\eps \geq s_{i+1}$.
    
    2. holds because, by the mean value theorem, there exists $x^* \in [s_i,s_{i+1}]$ with $\nabla \ell(x^*) = \frac{\ell(s_{i+1})-\ell(s_{i})}{s_{i+1}-s_{i}}$.  Since $x+\eps \geq x^*$, we have $\nabla \ell(x+\eps) \geq \nabla \ell(x^*) = \nabla \del(x)$.

\end{proof}

\subsection{External Regret Minimization for Nearly Strongly Convex Functions}

Algorithm \ref{alg:GDK}: an interpolation between standard OGD for convex functions and \ref{alg:GDS} that attains $O(\eps \sqrt{T} + \log T)$ external-regret, bridging the parameter regime gap for intermediate values of $\eps$.  Our algorithm achieves this guarantee without any dimensionality assumption.

\paragraph{The Algorithm} Define
\begin{equation*}
    R'_{\alpha,c}(x) := \begin{cases}
        \frac{2}{\alpha} & \text{ for }x \in [0,1]\\
        \frac{2}{\alpha x} & \text{ for }x \in \ps{1,\p{\frac{2}{\alpha c}}^2}\\
        \frac{c}{\sqrt{x}} & \text{ for } x \geq \p{\frac{2}{\alpha c}}^2
    \end{cases}
\end{equation*}
We abbreviate $R' := R'_{\alpha,c}$ for convenience.

\begin{algorithm}
\caption{Online Gradient Descent for Nearly Strongly Convex Loss}
\label{alg:GDK}
\KwIn{Convex set $\cK \subseteq \mathbb{R}^d$}
\KwIn{Parameters $\alpha, \epsilon, L$}
\KwIn{For all $t \in [T]$: $(\alpha, \epsilon)$-nearly-strongly-convex, $L$-Lipschitz loss functions $\lt: \cK \to \mathbb{R}$}
\KwIn{For all $t \in [T]$: Scale parameters $g_t \in [0,1]$}
\KwIn{Initial $\x_{1} \in \cK$}
\KwOut{For all $t \in [T]$: strategies $\xt \in \cK$}

$c \gets \frac{\sqrt{2} \epsilon}{L}$

$G_0 \gets 0$

\For{$t = 1$ \KwTo $T$}{
    \Return $\xt$
    
    \textbf{observe} $\lt, g_t$
    
    $G_t \gets G_{t-1} + g_t$
    
    $\etat \gets R'_{\alpha,c}(G_t)$
    
    $\x_{t+1} \gets \Pi_\cK \left(\xt - \etat g_t \nabla \lt(\xt)\right)$
}

\end{algorithm}

\paragraph{\Cref{thm:GDK}}
    For $(\alpha,\epsilon)$-nearly-strongly-convex loss functions $\ell_t$ and scale parameters $g_t$, an instance of Online Gradient Descent (Algorithm \ref{alg:GDK} in the appendix) achieves the following scaled external regret guarantee:
\begin{align*}
    \ER \leq 2\sqrt{2}\epsilon L\sqrt{G_T} + \frac{L^2}{\alpha}\p{\log(G_T+1)+1}
\end{align*}

\begin{proof}[Proof of Theorem \ref{thm:GDK}]
Let $\nabla_t = \nabla \lt(\xt)$
    \begin{align}
        2(\lt(\xt)-\lt(\x^*)) &\leq 2\inp{\nabla_t, \xt-\x^*}-\alpha\p{\norm{\xt-\x^*}-\epsilon}_+^2 \nonumber\\
        &\leq 2\inp{\nabla_t, \xt-\x^*}-\frac{\alpha}{2}\norm{\xt-\x^*}^2 \1\ps{\norm{\xt-\x^*} \geq 2\epsilon} \label{eq:kindcon}
    \end{align}
Also
\begin{align*}
    \norm{\x_{t+1}-\x^*}^2 = \norm{\Pi_\cK(\xt-\etat g_t\nabla_t)-\x^*}^2 &\leq \norm{\xt-\etat g_t\nabla_t-\x^*}^2\\
    &=\norm{\xt-\x^*}^2+\etat^2g_t^2\norm{\nabla_t}^2-2\etat g_t \inp{\nabla_t,\xt-\x^*}
\end{align*}
and therefore
\begin{align}
    g_t\inp{\nabla_t, \xt-\x^*} &\leq \frac{1}{2\eta_t}\p{\norm{\xt-\x^*}^2-\norm{\x_{t+1}-\x^*}^2}+\frac{\eta_t g_t^2}{2} \norm{\nabla_t}^2 \nonumber \\
    &\leq \frac{1}{2\eta_t}\p{\norm{\xt-\x^*}^2-\norm{\x_{t+1}-\x^*}^2}+\frac{\eta_t g_t^2L^2}{2}\label{eq:projgrad}
\end{align}

Combining \eqref{eq:kindcon} and \eqref{eq:projgrad}, and summing over $t$

\begin{align*}
    &2\sum_{t=1}^T g_t(\lt(\xt)-\lt(\x^*))\\
    &\leq \sum_{t=1}^T \norm{\xt-\x^*}^2 \p{\frac{1}{\eta_t}-\frac{1}{\eta_{t-1}} - \frac{\alpha g_t}{2}\mathbf{1}\ps{\norm{\xt-\x^*} \geq 2 \epsilon}} + L^2\sum_{t=1}^T \etat g_t^2
\end{align*}

Since $\frac{d}{dx} \frac{1}{R'(x)} \leq \frac{\alpha}{2}$ for all $x$, we have $\frac{1}{\eta_t}-\frac{1}{\eta_{t-1}} = \frac{1}{R'(G_t)}-\frac{1}{R'(G_{t-1})} \leq \frac{\alpha g_t}{2}$ for all $t$.  Thus,

\begin{align*}
    \sum_{t=1}^T \norm{\xt-\x^*}^2 \p{\frac{1}{\eta_t}-\frac{1}{\eta_{t-1}} - \frac{\alpha g_t}{2}\mathbf{1}\ps{\norm{\xt-\x^*} \geq 2 \epsilon}}
    &\leq 4\epsilon^2 \sum_{t=1}^T \p{\frac{1}{\eta_t}-\frac{1}{\eta_{t-1}}}\\
    &= 4\epsilon^2\p{\frac{1}{\eta_{T}}-\frac{1}{\eta_0}}\\
    &= 4\epsilon^2\p{\frac{1}{R'(G_T)}-\frac{\alpha}{2}}
\end{align*}

Additionally,

\begin{align}
    \sum_{t=1}^T \etat g_t^2 &\leq \sum_{t=1}^T R'(G_t)g_t \label{eq:OGDsmoothness1}\\
    &= \sum_{t=1}^T R'(G_t)(G_t-G_{t-1})\nonumber \\
    &\leq \sum_{t=1}^T \p{R(G_t)-R(G_{t-1})} = R(G_T)\label{eq:OGDsmoothness2}
\end{align}

where \eqref{eq:OGDsmoothness1} follows from $g_t \leq 1$ and \eqref{eq:OGDsmoothness2} follows from the fact that $R(x) = \int_0^x R'(x)dx$ is concave since $\frac{d}{dx}R'(x) \leq 0$.  Thus,

\begin{align*}
    \ER\p{\x_{1:T},\ell_{1:T}} = \sum_{t=1}^T (g_t \lt(\xt)-g_t \lt(\x^*)) \leq 2\epsilon^2\p{\frac{1}{R'(G_T)}-\frac{\alpha}{2}} + \frac{L^2}{2} R(G_T)
\end{align*}

We have

\begin{align*}
    R(x) &= \begin{cases}
        \frac{2}{\alpha}x & \text{ for }x \in [0,1]\\
        \frac{2}{\alpha}\p{1+\log(x)} & \text{ for }x \in \ps{1,\p{\frac{2}{\alpha c}}^2}\\
        \frac{2}{\alpha}\p{1+2\log\p{\frac{2}{\alpha c}}} + 2c\p{\sqrt{x}-\frac{2}{\alpha c}} & \text{ for } x \geq \p{\frac{2}{\alpha c}}^2
    \end{cases}\\
    &= \begin{cases}
        \frac{2}{\alpha}x & \text{ for }x \in [0,1]\\
        \frac{2}{\alpha}\p{1+\log(x)} & \text{ for }x \in \ps{1,\p{\frac{2}{\alpha c}}^2}\\
        2c\sqrt{x} + \frac{2}{\alpha}\p{2\log\p{\frac{2}{\alpha c}}-1}& \text{ for } x \geq \p{\frac{2}{\alpha c}}^2
    \end{cases}
\end{align*}

Thus,

\begin{align*}
    \ER\p{\x_{1:T},\ell_{1:T}} \leq \begin{cases}
        \frac{L^2}{\alpha}G_T & \text{ for }G_T \in [0,1]\\
        \alpha\epsilon^2G_T+\frac{L^2}{\alpha}\p{1+\log(G_T)} & \text{ for }G_T \in \ps{1,\p{\frac{2}{\alpha c}}^2}\\
        \frac{2\epsilon^2\sqrt{G_T}}{c}+ L^2c\sqrt{G_T} + \frac{L^2}{\alpha}\p{2\log\p{\frac{2}{\alpha c}}-1}& \text{ for } G_T \geq \p{\frac{2}{\alpha c}}^2
    \end{cases}
\end{align*}

Setting $c = \frac{\sqrt{2}\epsilon}{L}$,

\begin{align*}
    \ER\p{\x_{1:T},\ell_{1:T}} &\leq \begin{cases}
        \frac{L^2}{\alpha}G_T & \text{ for }G_T \in [0,1]\\
        \alpha\epsilon^2G_T+\frac{L^2}{\alpha}\p{1+\log(G_T)} & \text{ for }G_T \in \ps{1,\frac{2L^2}{\alpha^2 \epsilon^2}}\\
        2\sqrt{2}\epsilon L\sqrt{G_T} + \frac{L^2}{\alpha}\p{2\log\p{\frac{\sqrt{2}L}{\alpha \epsilon}}-1}& \text{ for } G_T \geq \frac{2L^2}{\alpha^2 \epsilon^2}
    \end{cases}\\
    & \leq 2\sqrt{2}\epsilon L\sqrt{G_T} + \frac{L^2}{\alpha}\p{\log(G_T+1)+1}
\end{align*}

since, in the second case, $\alpha\epsilon^2G_T \leq \sqrt{2}\epsilon L\sqrt{G_T}$ for $G_T \leq \frac{2L^2}{\alpha^2 \epsilon^2}$.

\end{proof}

We can wrap Algorithm \ref{alg:GDK} with the piecewise-linearizing subroutine (Definition \ref{def:piece-lin}) and get the following external-regret-minimizing algorithm, for use as a subroutine in Algorithm \ref{alg:BMCS}.

\begin{algorithm}
\caption{Online Gradient Descent for Discretized Regret on Strongly Convex Loss}
\label{alg:GDK2}
\KwIn{$\Ke \subset \mathbb{R}$}
\KwIn{Piecewise-linearize sub-routine $P: \{\conv(\Ke) \to \mathbb{R}\} \to \{\conv(\Ke) \to \mathbb{R}\}$ (Definition \ref{def:piece-lin})}
\KwIn{For all $t \in [T]$: $\alpha$-strongly-convex, $L$-Lipschitz loss functions $\lt: \conv(\Ke) \to \mathbb{R}$}
\KwIn{For all $t \in [T]$: Scale parameters $g_t \in [0,1]$}
\KwOut{For all $t \in [T]$: strategies $\xt \in \conv(\Ke)$}

$\cK \gets \conv(\Ke)$

\textbf{Initialize} $\Alg(\cK)$ (Algorithm \ref{alg:GDK})

\For{$t = 1$ \KwTo $T$}{
    \Return $\xt \gets \mathtt{Alg.recommend}()$
    
    \textbf{observe} $\lt, g_t$
    
    $\del_t \gets P(\ell_t)$
    
    $\mathtt{Alg.update}(\del_t, g_t)$
}

\end{algorithm}

\paragraph{\Cref{thm:disc-main}}
    Let $\Ke \subset \RR$ be a set of reals such that $\Ke$ is an $\eps$-triangulation of $\conv(\Ke)$ and $\conv(\Ke)$ has diameter 1.  Let $\ell_t$ be $\alpha$-strongly-convex, $L$-Lipschitz loss functions.  Let $\He$ be rounding procedure \eqref{eq:nearly-rounding}. Let $\cA$ be the external-regret-minimizing subroutine Algorithm \ref{alg:GDK2}. Then the $\FSR$ achieved with respect to the discretized set $\Ke$ satisfies:
    \begin{equation*}
        \FSR = O\p{L\sqrt{\eps T}+\frac{L^2}{\alpha \eps}\log(T)}
    \end{equation*}

\begin{proof}[Proof of Theorem \ref{thm:disc-main}]
    Lemma \ref{lemma:KH} guarantees that, for any $\ell_{1:T}$, the rounding procedure $\He$ incurs no rounding error for discretized regret.  That is, $\He$ satisfies the precondition of Theorem~\ref{thm:BMCS} with $\delta = 0$. When $\ell_{1:T}$ are $\alpha$-strongly-convex and $L$-Lipschitz, Lemma \ref{lemma:KH} shows that each external-regret-minimizing instance Algorithm \ref{alg:GDK2} incurs the same external-regret as its corresponding subroutine Algorithm \ref{alg:GDK} on the piecewise-linearized losses.  Theorem~\ref{thm:GDK} shows each Algorithm \ref{alg:GDK} subroutine $\Alg_s$ satisfies $\ER(\Alg_s, \x[s]\del_{1:T}) \leq 2\sqrt{2}\epsilon L\sqrt{G_T} + \frac{2eL^2}{\alpha}\log\p{T}$.  Thus, Lemma~\ref{lemma:triangulation} gives \begin{align*}
        \sum_{s \in \Ke} \ER(\Alg_s, \x[s]\ell_{1:T}) & \leq 2\sqrt{2}L \eps \sum_{s \in \Ke}\sqrt{\sum_{t=1}^T \xt[s]}+ \frac{2eL^2}{\alpha}\sum_{s \in \Ke} \log(T)\\
        &= O\p{L\eps\sqrt{\p{\frac{1}{\eps}}\sum_{s \in \Ke}\sum_{t=1}^T \xt[s]} +\frac{L^2}{\alpha\eps}\log(T)}\\
        &= O\p{L\sqrt{\eps T}+\frac{L^2}{\alpha \eps}\log(T)}
    \end{align*}
    as desired.
\end{proof}

\crefalias{section}{appendix}

\end{document}